\newtheorem{theorem}{Theorem}[section]
\newtheorem{example}{Example}[section]
\newtheorem{lemma}{Lemma}[section]
\newtheorem{corollary}{Corollary}[section]
\newtheorem{definition}{Definition}[section]
\def\d{\text{d}}
\def\E{\mathbb{E}}
\begin{document}
\begin{frontmatter}
\title{A new local time-decoupled squared Wasserstein-2 method for training stochastic neural networks to reconstruct uncertain parameters in dynamical systems}

\author[label1,label2]{Mingtao Xia\footnote{corresponding author, email: xiamingtao@nyu.edu}}
\affiliation[label1]{organization={Courant Institute of Mathematical Sciences, New York University},%Department and Organization
            addressline={251 Mercer Street}, 
            city={New York},
            postcode={NY 10012}, 
            state={New York},
            country={USA}}
        \affiliation[label2]{organization={Department of Mathematics, University of Houston},
            addressline={Philip Guthrie Hoffman Hall, 3551 Cullen Blvd},
            city={Houston},
            postcode={TX 77004},
            state={Texas},
            country={USA}}
              \author[label3]{Qijing Shen}
\author[label4]{Philip Maini}%, Alex Mogilner}
\author[label4]{Eamonn Gaffney}
\author[label1]{Alex Mogilner}
\affiliation[label3]{organization={Nuffield Department of Medicine, University of Oxford},%Department and Organization
            addressline={Roosevelt Drive}, 
            city={Oxford},
            postcode={OX3 7BN}, 
            country={United Kingdom}}
    \affiliation[label4]{organization={Mathematical Institute, University of Oxford},%Department and Organization
            addressline={Radcliffe Observatory, Andrew Wiles Building}, 
            city={Oxford},
            postcode={OX2 6GG}, 
            country={United Kingdom}}

%\date{July 2024}
%\maketitle
%\ead{xiamingtao@nyu.edu}
\begin{abstract}
    In this work, we propose and analyze a new local time-decoupled squared Wasserstein-2 method for reconstructing the distribution of unknown parameters in dynamical systems. Specifically, we show that 
    a stochastic neural network model, which can be effectively trained by minimizing our proposed local time-decoupled squared Wasserstein-2 loss function, is an effective model for approximating the distribution of uncertain model parameters in dynamical systems. Through several numerical examples, we showcase the effectiveness of our proposed method in reconstructing the distribution of parameters in different dynamical systems.
\end{abstract}

\begin{highlights}
\item We proposed and analyzed a new local time-decoupled squared Wasserstein-2 method for direct reconstruction of the \textit{distribution} of model parameters in several dynamical systems from time-series data. Our method is easy to implement and outperforms several statistical benchmark methods.
\item We analyzed a stochastic neural network model whose weights are sampled from independent normal distributions. Under nonrestrictive conditions, we proved that this stochastic neural network can approximate any continuous random variable under the Wasserstein-2 distance metric, making it an effective model for reconstructing the distribution of model parameters.
\end{highlights}

\begin{keyword}
Uncertainty Quantification \sep Stochastic Neural Network \sep Wasserstein Distance \sep Dynamical Systems 
%% keywords here, in the form: keyword \sep keyword

%% PACS codes here, in the form: \PACS code \sep code

%% MSC codes here, in the form: \MSC code \sep code
%% or \MSC[2008] code \sep code (2000 is the default)

\end{keyword}

\end{frontmatter}

\section{Introduction}

\begin{comment}

\begin{itemize}
    \item $\mu$: distribution for parameters
    \item $\nu$: distribution for solutions
    \item $\gamma$: distribution for $\bm{y}$
    \item $N$: number of samples
    \item $N_T$: number of time steps
\end{itemize}
\end{comment}
\numberwithin{equation}{section}
The inverse problem of reconstructing a noisy dynamical system from time-series data finds wide applications across different disciplines. Efficient algorithms to solve such inverse-type problems advance different fields including inferring neural circuit dynamics from spiking data \cite{pillow2008spatio} in neuroscience, modeling and predicting complex weather patterns from historical data \cite{Carrassi2018} in climate science, uncovering disease transmission dynamics from infection case counts over time \cite{roda2020difficult} in epidemiology, and deducing reaction rates from experimental concentration-time profiles in reaction kinetics in biochemistry \cite{loskot2019comprehensive}. However, such inverse-type problems pose substantial mathematical and computational challenges, particularly when data are limited and noisy, motivating ongoing research into novel algorithms and theoretical frameworks to improve models' reconstruction accuracy and efficiency.

In this paper, we study the inverse problem of inferring the distribution of model parameters for several dynamical systems including ordinary differential equations (ODEs), partial differential equations (PDEs), and stochastic differential equations (SDEs) from time-series data or spatiotemporal data. Existing methods for such problems can be broadly categorized into traditional statistical approaches and modern data-driven techniques. Traditional statistical methods often involve parameter estimation frameworks. For example, linear and nonlinear regression methods play a role in simpler systems where the functional form of the model is partially known \cite{Chib2002}. Furthermore, maximum likelihood estimation and Bayesian inference methods \cite{Creswell2015, Marin2021} are often adopted. Maximum likelihood estimation optimizes the likelihood of model parameter values in a proposed model from observed data, while Bayesian methods incorporate prior information and compute posterior distributions. These approaches are widely used in applications such as reaction network reconstruction and epidemiological modeling.  On the other hand, data-driven methods, leveraging advances in machine learning, offer a complementary toolkit for inverse problems. For example, neural networks and reservoir computing frameworks have been successful in reconstructing chaotic systems and inferring governing equations directly from data \cite{Pathak2018, Brunton2016}. Sparse identification of nonlinear dynamics (SINDy) has emerged as a powerful tool for discovering interpretable dynamical systems by identifying a parsimonious set of governing equations from time-series data \cite{Brunton2016}. Gaussian process regression has proven effective for nonparametric inference, especially in uncertainty quantification \cite{Raissi2017}. Hybrid approaches, which integrate data-driven and traditional statistical methods such as the physics-informed neural network method \cite{raissi2019physics}, are also gathering increasing attention as they take advantage of both statistical methods and data-driven methods and provide more interpretable machine-learning tools.

The Wasserstein distance, which can effectively measure the discrepancy between two probability distributions \cite{villani2009optimal, panaretos2019statistical, balasubramanian2024gaussian}, was utilized for various uncertainty quantification (UQ) tasks. In \cite{bernton2019parameter}, the Wasserstein distance was proposed to estimate model parameters that govern a probabilistic model. Compared to empirical maximum likelihood estimates, using the Wasserstein distance to estimate model parameters in uncertainty models can be more efficient and accurate \cite{blanchet2021sample}.
However, previous methods mainly focused on point estimates of model parameters (inferring the exact values of the parameters) and could not quantify the uncertainty in the model parameters, \textit{i.e.}, they cannot reconstruct a distribution of the model parameters from data. 
Recently, a time-decoupled Wasserstein-2 ($W_2$) distance method, which compares the distributions of two stochastic processes across different time points, has been revealed to be an efficient loss function for reconstructing intrinsically noisy stochastic processes such as pure-diffusion processes and jump-diffusion processes using parameterized neural networks \cite{xia2024squared, xia2024efficient}. Furthermore, in \cite{xia2024local}, a local squared $W_2$ method, which adopts a ``neighborhood technique" to enlarge the number of available data, was proposed to infer the distribution of $y$ given observed data $x$ in the uncertainty model $y=f(x, \omega)$ when $\omega$ are uncertain latent unobserved variables. However, for the specific problem of inferring unknown parameters in deterministic or stochastic dynamical models, these two methods are not suitable: \added{the time-decoupled $W_2$ distance method assumes that there is no uncertainty in the underlying model (\textit{i.e.}, the ODE or SDE to be reconstructed is fixed), and the local squared $W_2$ method is not directly applicable to parameter inference problems.} 
Two major difficulties arise: first, an efficient model is required to approximate the distribution of parameters in a dynamical system; second, it is necessary to distinguish uncertainty in model parameters from uncertainty in the initial state of the dynamical systems and intrinsic stochasticity in the dynamical system such as the Wiener process in SDEs.

In this work, we propose and analyze a local time-decoupled squared $W_2$ method, which builds upon the time-decoupled squared $W_2$ method for inferring the distribution of model parameters in deterministic or stochastic dynamical systems and the local squared $W_2$ method for handling the uncertainty in the initial state. 
\added{As an illustration, consider the following ODE:
\begin{equation}
    \d\bm{X}(t;\theta) = \bm{f}(\bm{X}(t;\theta), t;\theta)\text{d}t,\,\,t\in[0, T],\,\, \bm{X}(0)\sim\nu_0,\,\,\theta\sim\mu,
    \label{ODE_model}
\end{equation}
where $\theta\in\mathbb{R}^{\ell}$ is a continuous random variable representing \added{uncertain parameters} in the ODE and $\bm{f}:\mathbb{R}^{d+1+\ell}\rightarrow\mathbb{R}^d$. $\nu_0$ is a probability measure defined on the Borel $\sigma$-algebra $\mathcal{B}(\mathbb{R}^d)$. For each realization of the ODE~\eqref{ODE_model}, $\theta$ is sampled independently. Thus, we obtain different trajectories by solving the ODE~\eqref{ODE_model} multiple times due to parameter heterogeneity. We use another ODE model as an approximation to Eq.~\eqref{ODE_model}:
\begin{equation}
    \d\hat{\bm{X}}(t;\hat{\theta}) = \bm{f}(\hat{\bm{X}}(t;\hat{\theta}), t;\hat{\theta})\text{d}t,\,\,t\in[0, T], \,\,\hat{\bm{X}}(0)=\bm{X}_0, \,\, \hat{\theta}\sim\hat{\mu}.
    \label{approximate_ODE}
\end{equation}
Here, $\hat{\theta}\in\mathbb{R}^{\ell}$ is another continuous random variable and denotes uncertain parameters in the approximate ODE.  We aim to construct the probability density function $\hat{\mu}$, \textit{i.e.} the distribution of $\hat{\theta}\in\mathbb{R}^{\ell}$, such that $\hat{\mu}$ can match $\mu$ well in Eq.~\eqref{ODE_model}.}

Specifically, we show that minimizing our proposed loss function can lead to efficient training of a stochastic neural network (SNN) model with weight uncertainty. SNNs are effective in uncertainty quantification and generative modeling \cite{Gal2016, Rezende2014}.
Instead of deterministic outputs, SNNs introduce randomness in the weights and/or biases \cite{blundell2015weight, yu2021simple} or utilize stochastic neurons whose outputs are binary \cite{tang2013learning}. Theoretical results exist for the approximation errors for certain types of SNNs such as the approximation error of a single-layer SNN \cite{gonon2023approximation} as well as the universal approximation ability of dropout neural networks \cite{manita2022universal}. In this work, we prove that the SNN model we use can serve as an effective model to approximate any continuous random variables under moderate assumptions. Compared with traditional Bayesian methods, our proposed approach has the advantage of not requiring any knowledge or prior distributions of model parameters and directly reconstructing model parameter distribution from time-series data. Compared to other data-driven methods, such as Bayesian neural networks \cite{neal2012bayesian}, generative modeling methods \cite{bohm2019uncertainty}, and Wasserstein generative adversarial networks that train a generator and a discriminator \cite{arjovsky2017wasserstein, boukraichi2022uncertainty}, our method is more physics-informed and provides more insights and interpretability of the dynamical system. Our method directly outputs the distribution of unknown parameters governing the dynamical system. Furthermore, our method does not require deep neural networks and we shall show that shallow SNNs with hundreds of neurons are capable of reconstructing the joint distribution of several model parameters when inputting only a few hundred trajectories as training data.

%Specifically, we prove that the distribution of the output of a neural network with weights sampled from normal distributions and the ReLU activation function could approximate the distribution of continuous random variables under mild conditions. 

%model yet takes advantage of the stochastic neural network for efficient reconstruction of unknown parameters.

The main contributions of our work are as follows:
\begin{itemize}
    \item We propose and analyze a local time-decoupled squared $W_2$ method for the direct reconstruction of model parameters in specific dynamical systems including ODEs, SDEs, and PDEs from time-series or spatiotemporal data. Our method takes into account both uncertainties in the initial state as well as intrinsic fluctuations, \textit{e.g.} Wiener processes, of the dynamical system when reconstructing model parameters.
    \item We analyze an SNN model whose weights are sampled from independent normal distributions. We prove that this SNN model can approximate any multidimensional continuous random variable under moderate assumptions. Furthermore, this SNN model can be trained by direct minimization of our local time-decoupled squared $W_2$ loss function.
    %which enables us to use the stochastic neural network to approximate the distribution of parameters in a dynamical system. 
    \item Through numerical experiments, we showcase the effectiveness of our proposed method for reconstructing the distribution of model parameters in several deterministic and stochastic dynamical systems.
\end{itemize}

The structure of this paper is as follows. In Section~\ref{section2}, we analyze a local time-decoupled squared $W_2$ loss function for reconstructing model parameters in dynamical systems. In Section~\ref{section3}, we prove that an SNN model can approximate a continuous random variable in the squared $W_2$ sense, which makes this SNN model an ideal approximate model for reconstructing the distribution of uncertain parameters in dynamical systems.
In Section~\ref{section4}, we carry out numerical experiments and showcase the effectiveness of training the SNN model by minimizing our proposed local function on the reconstruction of uncertain model parameters in different dynamical systems. In Section~\ref{summary}, we summarize our results and propose potential future research directions.

\section{A local time-decoupled squared $W_2$ loss function}
\label{section2}
In this section, we propose and analyze a local time-decoupled squared $W_2$ method for reconstructing the distribution of uncertain parameters in specific dynamical systems. Our local time-decoupled squared $W_2$ method integrates the time-decoupled squared $W_2$-distance method proposed in \cite{xia2024efficient, xia2024squared} and the local squared $W_2$-distance method in \cite{xia2024local}. \added{Therefore, both intrinsic fluctuations in the dynamical systems
and uncertainties in the initial condition can be taken into account.} First, we introduce the $W_2$ distance between distributions associated with two multidimensional random variables.
%In . Furthermore, in \cite{xia2024local}, a local squared $W_2$ distance loss function has been proposed 

\begin{definition} 
\rm 
\label{def:W2}
For two $d$-dimensional random variables 
\begin{equation}
    \bm{X}=(X_1, ..., X_d),\,\, \hat{\bm{X}}=(\hat{X}_1,...,\hat{X}_d)\in\mathbb{R}^d
\end{equation}
with associated probability measures $\nu, \hat{\nu}$, respectively, the $W_2$-distance $W_2(\nu, \hat{\nu})$ between their probability measures is defined as
\begin{equation}
W_2(\nu, \hat{\nu}) \coloneqq \inf_{\pi(\nu, \hat \nu)}
\E_{(\bm{X}, \hat{\bm{X}})\sim \pi(\nu, \hat \nu)}\big[\|{\bm{X}}
  - \hat{{\bm{X}}}\|^{2}\big]^{\frac{1}{2}}.
\label{pidef}
\end{equation}
In Eq.~\eqref{pidef} and throughout this paper, the norm $\|\cdot\|$ denotes the $l^2$ norm of a vector: 
$\|\bm{X}\|\coloneqq \Big( \sum_{i=1}^d |X_i|^2\d
t\Big)^{\frac{1}{2}}$. $\pi(\nu, \hat \nu)$ is a coupling probability measure which iterates over all
coupled distributions of $\bm{X}(t), \hat{\bm{X}}(t)$,
defined by the condition

\begin{equation}
\begin{cases}
  {\bm{P}}_{\pi(\nu, \hat \nu)}\left(A \times 
  \mathbb{R}^d\right) ={\bm{P}}_{\nu}(A),\\
         {\bm{P}}_{\pi(\nu, \hat \nu)}\left( \mathbb{R}^d\times A\right)
         = {\bm{P}}_{\hat \nu}(A), 
\end{cases}\forall A\in \mathcal{B}( \mathbb{R}^d),
\label{pi_def}
\end{equation}
where $\mathcal{B}(\mathbb{R}^d)$ denotes the
Borel $\sigma$-algebra associated with the space of $d$-dimensional
functions in $\mathbb{R}^d$.
\end{definition}

Next, we define the local squared $W_2$ distance for the probability measures associated with the trajectories of two dynamical systems, which builds upon the local squared $W_2$ distance introduced in \cite{xia2024local}.
\begin{definition}
\rm
\label{localw2def}
%Suppose the initial conditions for the two dynamical systems $\{\bm{X}\}_{t\in[0, T]}, \{\hat{\bm{X}}\}_{t\in[0, T]}$ are identical.
    The \textbf{local squared} $\bm{W_2}$ \textbf{distance} between the probability measures associated with two dynamical systems $\{\bm{X}\}_{t\in[0, T]}, \{\hat{\bm{X}}\}_{t\in[0, T]}$ at a specific time $t$ is defined by:
    \begin{equation}
        W_{2, \delta}^{2, \text{e}}(\bm{X}(t), \hat{\bm{X}}(t))\coloneqq \int_{\mathbb{R}^d}W_2^2\big(\nu^{\text{e}}_{\bm{X}_0, \delta}(t), \hat{\nu}^{\text{e}}_{\bm{X}_0, \delta}(t)\big) \nu_0^{\text{e}}(\d\bm{X}_0).
        \label{local_squared}
    \end{equation}
    Here, $\nu_0^{\text{e}}(\cdot)$ is the empirical distribution of the initial condition for $\bm{X}(t)$ and $\hat{\bm{X}}(t)$. $\nu^{\text{e}}_{\bm{X}_0, \delta}(t)$ and $\hat{\nu}^{\text{e}}_{\bm{X}_0, \delta}(t)$ are the empirical conditional probability distributions of $\bm{X}$ and $\hat{\bm{X}}$ at time $t$ conditioned on $|\bm{X}(0)-\bm{X}_0|\leq\delta$ and $|\hat{\bm{X}}(0)-\bm{X}_0|\leq\delta$, respectively. 
\end{definition}

Now, we define the \textbf{local time-decoupled squared} $\bm{W}_2$ distance between the probability measures associated with trajectories of two dynamical systems.
\begin{definition}
\rm
\label{localtimew2def}
    Let $0=t_0<t_1<...<t_n=T$ be a time discretization mesh in $[0, T]$. The \textbf{local time-decoupled squared} $\bm{W_2}$ \textbf{distance} between the distributions associated with two dynamical systems $\{\bm{X}\}_{t\in[0, T]}, \{\hat{\bm{X}}\}_{t\in[0, T]}$ at time $t$ is defined by:
    \begin{equation}
    \begin{aligned}
                \tilde{W}_{2, \delta}^{2, \text{e}}(\bm{X}, \hat{\bm{X}})&\coloneqq \int_0^TW_{2, \delta}^{2, \text{e}}(\bm{X}(t), \hat{\bm{X}}(t))\d t \\
                &\quad= \lim_{n\rightarrow\infty, \max (t_{i+1}-t_i)\rightarrow0}\sum_{i=0}^{n-1}W_{2, \delta}^{2, \text{e}}(\bm{X}(t_i), \hat{\bm{X}}(t_i))(t_{i+1} - t_i),
    \end{aligned}
        \label{local_define1}
    \end{equation}
    where $W_{2, \delta}^{2, \text{e}}(\bm{X}(t), \hat{\bm{X}}(t))$ is the local squared $W_2$ distance in Definition~\ref{localw2def}. Empirically, we use 
    \begin{equation}
        \tilde{W}_{2, \delta}^{2, \text{e}}(\bm{X}, \hat{\bm{X}})\approx\sum_{i=0}^{n-1}W_{2, \delta}^{2, \text{e}}(\bm{X}(t_i), \hat{\bm{X}}(t_i))(t_{i+1} - t_i),
                \label{local_define}
    \end{equation}
    as an approximation to Eq.~\eqref{local_define1} for numerically calculating the local time-decoupled squared $W_2$ distance loss function. \added{Under some technical conditions, the approximation error of using the time-discretized RHS of Eq.~\eqref{local_define} to approximate the local time-decoupled squared $W_2$ distance loss function in Eq.~\eqref{local_define1} is $O(\max_{i=1}^{n-1} (t_{i+1}-t_i))$ for ODEs and $O\big(\sqrt{\max_{i=0}^{n-1} (t_{i+1}-t_i)}\big)$ for jump-diffusion processes, which will be analyzed in the proof of Theorem~\ref{well_posedness} for ODEs and in the proof of Corollary~\ref{jump_diffusion_thm} for jump-diffusion processes, respectively.}
    %$\nu^{\text{e}}$ is the empirical distribution of the initial condition of $\bm{X}(t)$ (and $\hat{\bm{X}}(t)$). $\nu^{\text{e}}_{\delta}(t)$ and $\hat{\nu}^{\text{e}}_{\delta}(t))$ are the empirical probability distributions of $\bm{X}$ and $\hat{\bm{X}}$ at time $t$ conditioned on $|\bm{X}(0)-\bm{X}_0|\leq\delta$ and $|\hat{\bm{X}}(0)-\bm{X}_0|\leq\delta$, respectively. 
\end{definition}

The loss function Eq.~\eqref{local_define} was also used to reconstruct the dynamics of an ODE using a parameterized neural network in \cite{xia2024local}. Yet, there is no understanding of why minimizing the loss function Eq.~\eqref{local_define} leads to the successful reconstruction of the \textit{distribution of parameters} underlying a dynamical system. Additionally, intrinsic stochasticity, such as Wiener processes in stochastic dynamical systems, was not considered in \cite{xia2024local}. In this section, we shall show that: i) the local time-decoupled squared $W_2$ distance in Eq.~\eqref{local_define1} is well-defined in several deterministic or stochastic dynamical systems and ii) minimizing the loss function in Eq.~\eqref{local_define} is a necessary condition for the reconstruction of the distribution of parameters in dynamical systems.

\added{First, we prove that the local time-decoupled squared $W_2$ distance is well-defined in some typical dynamical systems including ODEs and certain SDEs. 
We can prove that the local squared $W_2$ distance between the probability measures associated with $\{\bm{X}\}_{t\in[0, T]}$ and $\{\hat{\bm{X}}\}_{t\in[0, T]}$, which are trajectories generated by solving the two ODEs~\eqref{ODE_model} and \eqref{approximate_ODE}, 
is well-defined. In Eqs.~\eqref{ODE_model}, \eqref{approximate_ODE}, and the models we study below, we assume that the parameters are independent of the initial condition, \textit{i.e.}, $\theta$ is independent of $\bm{X}(0)$ and $\hat{\theta}$ is independent of $\hat{\bm{X}}(0)$.}

%We wish to approximate the distribution of $\theta$ in Eq.~\eqref{ODE_model} by using the distribution of $\hat{\theta}\in\mathbb{R}^{\ell}$ in Eq.~\eqref{approximate_ODE}. First, . 

\begin{theorem}
\rm
\label{well_posedness}
Suppose 
\begin{equation}
\sup_{\bm{X}(0), \theta, t}\|\bm{X}(t)\|^2\leq X,\,\, \sup_{\bm{X}(0), \hat{\theta}, t}\|\hat{\bm{X}}(t)\|^2\leq \hat{X}
    \label{thm2_1_upper}
\end{equation} 
are uniformly bounded, where $\bm{X}(t)$ and $\hat{\bm{X}}(t)$ are solutions to the ODEs~\eqref{ODE_model} and \eqref{approximate_ODE}, respectively. Furthermore, we assume that $\bm{f}$ is continuous and uniformly bounded. Then, the limit
\begin{equation}
\lim_{\max (t_{i+1}-t_i)\rightarrow0}\sum_{i=0}^{n-1}W_{2, \delta}^{2, \text{e}}(\bm{X}(t_i), \hat{\bm{X}}(t_i))(t_{i+1} - t_i)    
\label{squared_local_limit}
\end{equation}
on the RHS of Eq.~\eqref{local_define1} exists.
\end{theorem}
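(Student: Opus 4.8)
The plan is to reduce the existence of the limit in Eq.~\eqref{squared_local_limit} to the Riemann integrability of the integrand $t\mapsto W_{2,\delta}^{2,\text{e}}(\bm{X}(t),\hat{\bm{X}}(t))$ on $[0,T]$. I would in fact prove the stronger statement that this integrand is Lipschitz continuous in $t$: a Lipschitz (hence continuous) function on $[0,T]$ is Riemann integrable, so the left-endpoint sums in Eq.~\eqref{local_define1} converge to $\int_0^T W_{2,\delta}^{2,\text{e}}(\bm{X}(t),\hat{\bm{X}}(t))\,\d t$ as $\max(t_{i+1}-t_i)\to 0$, and the standard Riemann-sum error estimate for a Lipschitz integrand gives exactly the advertised $O(\max_i(t_{i+1}-t_i))$ rate.

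First I would note that, since $\bm{f}$ is uniformly bounded, say $\sup\|\bm{f}\|\le M$, every trajectory is Lipschitz in time: $\|\bm{X}(t;\theta)-\bm{X}(s;\theta)\|=\|\int_s^t \bm{f}\,\d r\|\le M|t-s|$, uniformly in $\theta$ and in the initial point, and likewise for $\hat{\bm{X}}(\cdot;\hat\theta)$. This uses only boundedness of $\bm{f}$ along the trajectory, not uniqueness of solutions. Next, because $\nu_0^{\text{e}}$ is an empirical measure, the outer integral in Eq.~\eqref{local_squared} is a \emph{finite} sum over the sampled initial conditions $\bm{X}_0$. The key structural fact is that for each such $\bm{X}_0$ the conditioning event $\{|\bm{X}(0)-\bm{X}_0|\le\delta\}$ is determined entirely at $t=0$; hence the finite index set of trajectories contributing to the empirical conditional measures $\nu^{\text{e}}_{\bm{X}_0,\delta}(t)$ and $\hat\nu^{\text{e}}_{\bm{X}_0,\delta}(t)$ is fixed in $t$, and only the atom positions evolve, Lipschitzly, with the same constant $M$.

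Then I would establish Lipschitz continuity of $t\mapsto W_2(\nu^{\text{e}}_{\bm{X}_0,\delta}(t),\hat\nu^{\text{e}}_{\bm{X}_0,\delta}(t))$. Coupling each atom of $\nu^{\text{e}}_{\bm{X}_0,\delta}(t)$ to the same-index atom of $\nu^{\text{e}}_{\bm{X}_0,\delta}(s)$ (same weights, different times) gives $W_2^2(\nu^{\text{e}}_{\bm{X}_0,\delta}(t),\nu^{\text{e}}_{\bm{X}_0,\delta}(s))\le \sum_i w_i\|\bm{X}_i(t)-\bm{X}_i(s)\|^2\le M^2|t-s|^2$, and similarly for the hatted measure. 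Since $W_2$ is a genuine metric, the triangle inequality yields $|W_2(\nu(t),\hat\nu(t))-W_2(\nu(s),\hat\nu(s))|\le 2M|t-s|$. The uniform bound Eq.~\eqref{thm2_1_upper} makes $W_2(\nu(t),\hat\nu(t))$ uniformly bounded, so squaring preserves Lipschitzness, $|W_2^2(t)-W_2^2(s)|\le C|t-s|$. Summing the finitely many outer terms shows $W_{2,\delta}^{2,\text{e}}(\bm{X}(t),\hat{\bm{X}}(t))$ is Lipschitz in $t$, which closes the argument.

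The step I expect to be the main obstacle is the Lipschitz estimate for $W_2$ under motion of the support points when $\nu^{\text{e}}_{\bm{X}_0,\delta}(t)$ and $\hat\nu^{\text{e}}_{\bm{X}_0,\delta}(t)$ have \emph{different} numbers of atoms, so that the identity coupling is unavailable across the two measures. The cleanest route is to freeze an optimal transport plan at one time and re-evaluate its quadratic cost at the other time, bounding the change by the per-atom displacement $M|t-s|$; this, together with $W_2$ being a metric, delivers the two-sided Lipschitz bound without requiring the optimal plan to vary continuously in $t$. I would also remark that for jump-diffusion processes the same scheme applies, but the per-atom displacement is only $O(\sqrt{|t-s|})$ in mean square, which is what degrades the convergence rate to $O(\sqrt{\max_i(t_{i+1}-t_i)})$ as stated in Definition~\ref{localtimew2def}.
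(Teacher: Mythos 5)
Your proposal is correct and follows essentially the same route as the paper's proof in~\ref{appendixA}: the paper also couples each trajectory to itself at two nearby times to get the $O(\Delta t)$ displacement bound (Eqs.~\eqref{bound1}--\eqref{bound2}), uses the $W_2$ triangle inequality for the cross-measure comparison (Eq.~\eqref{traing_property}), and controls $|W_2^2(t)-W_2^2(s)|$ via the uniform bound $X+\hat X$ on the sum of the two distances. The only difference is packaging — you state the conclusion as ``the integrand is Lipschitz, hence Riemann integrable,'' whereas the paper unrolls the same estimates into a common-refinement/Cauchy argument for the Riemann sums.
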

We provide a proof of Theorem~\ref{well_posedness} in ~\ref{appendixA}, which is similar to the proof of Theorem 3.1 in \cite{xia2024efficient}. Theorem~\ref{well_posedness} can be extended to reveal that the local time-decoupled squared $W_2$ distance between probability measures associated with two noisy jump-diffusion processes is also well-defined. We can prove the following corollary.
\begin{corollary}
\rm
\label{jump_diffusion_thm}
Consider the following two $d$-dimensional jump-diffusion processes:
\begin{equation}
\begin{aligned}
        &\d \bm{X}(t) = \bm{f}(\bm{X}(t), t; \theta)\d t + \bm{\sigma}(\bm{X}(t),
    t; \theta)\d \bm{B}_t \\
    &\hspace{2cm}+ \int_U\bm{\beta}(\bm{X}(t), \xi, t; \theta)\tilde{N}( \d
    t, \gamma(\d\xi)),\,\,\bm{X}(0)\sim \nu_0
\end{aligned}
    \label{model_equation}
\end{equation}
and 
\begin{equation}
\begin{aligned}
    &\d \hat{\bm{X}}(t) = \bm{f}(\hat{\bm{X}}(t), t; \hat{\theta})\d t +\bm{\sigma}(\hat{\bm{X}}(t), t; \hat{\theta})\d \hat{\bm{B}}_t \\
    &\hspace{2cm}
+\int_U\bm{\beta}(\hat{\bm{X}}(t), \xi, t; \hat{\theta})\hat{N}\big(\d t,
\gamma(\d\xi)\big),\,\,\hat{\bm{X}}(0)=\bm{X}(0).
\end{aligned}
\label{approximate_equation}
\end{equation}
In Eq.~\eqref{model_equation}, $\bm{f}:\mathbb{R}^{d+1}\rightarrow\mathbb{R}^d$ and $\bm{\sigma}:\mathbb{R}^{d+1}\rightarrow\mathbb{R}^{d\times m}$ denote
the drift and diffusion functions of the SDE, respectively; $\nu_0$ is a probability measure defined on the Borel $\sigma$-algebra $\mathcal{B}(\mathbb{R}^d)$; $\bm{B}(t)$
represents an $m$-dimensional standard Brownian motion; $\tilde{N}(\d t, \gamma(\d\xi))$ is a compensated Poisson process independent of $\bm{B}_t$ defined as follows:
\begin{equation}
  \tilde{N}(\d t, \gamma(\d\xi)) \coloneqq N(\d t, \gamma(\d\xi))
  - \gamma(\d\xi) \d t,
  \label{compensated_poisson}
\end{equation} 
where $N(\d t, \gamma(\d\xi))$ is a Poisson process with intensity
$\gamma(\d\xi) \d t$, and $\gamma(\d\xi)$ is a measure defined on
$U\subseteq\mathbb{R}$, the measure space of the Poisson
process. $\hat{N}(\d
t, \gamma(\d\xi))$ is another compensated Poisson process of intensity $
\gamma(\d\xi) \d t$ and independent of $\bm{B}_t, \tilde{N}_t$ in
Eq.~\eqref{model_equation} and $\hat{\bm{B}}_t$ in Eq.~\eqref{approximate_equation}. $\theta, 
\hat{\theta}\in\mathbb{R}^{\ell}$ are uncertain model parameters. We assume that $\bm{f}, \bm{\sigma}, \bm{\beta}$ are continuous and uniformly bounded. Then 
\begin{equation}
    \tilde{W}_{2, \delta}^{2, \text{e}}(\bm{X}, \hat{\bm{X}})\coloneqq \lim_{\max (t_{i+1}-t_i)\rightarrow0}\sum_{i=0}^{n-1}W_{2, \delta}^{2, \text{e}}(\bm{X}(t_i), \hat{\bm{X}}(t_i))(t_{i+1} - t_i)
\end{equation} 
exists. Furthermore,
\begin{equation}
\begin{aligned}
      &\Big|\sum_{i=0}^{n-1}W_{2, \delta}^{2, \text{e}}(\bm{X}(t_i), \hat{\bm{X}}(t_i))(t_{i+1}-t_i)
- \tilde{W}_{2, \delta}^{2, \text{e}}(\bm{X}, \hat{\bm{X}})\Big| \\
&\quad\quad\leq 2(X + \hat{X}) T\max_i\Big(\sqrt{F_i\Delta t+\Sigma_i+B_i}
  +\!\sqrt{\hat{F}_i\Delta t + \hat{\Sigma}_i + \hat{B}_i}\Big),
\end{aligned}
     \label{convergence_result}
\end{equation}
where $\Delta t\coloneqq \max_{i=0}^{n-1}(t_{i+1}-t_i)$, and $\nu(t_i)$ and $\hat{\nu}(t_i)$ are the probability measures of $\bm{X}(t_i)$ and $\hat{\bm{X}}(t_i)$, respectively. In Eq.~\eqref{convergence_result},
\begin{equation}
  \begin{aligned}
    F_i\coloneqq & \sup_{\bm{X}_0, \theta}\E\bigg[\medint\int_{t_i}^{t_{i+1}}
      \sum_{\ell=1}^d f_{\ell}^2(\bm{X}(t^-),t^-; \theta)\d t\bigg],\\
    &\,\,\hat{F}_i\coloneqq\sup_{\bm{X}_0, \hat{\theta}}\E\bigg[\medint\int_{t_i}^{t_{i+1}} \sum_{\ell=1}^d
f_{\ell}^2(\hat{\bm{X}}(t^-),t^-; \hat{\theta})\d t\bigg], \\
\Sigma_i\coloneqq & \sup_{\bm{X}_0, \theta}\E\bigg[\medint\int_{t_i}^{t_{i+1}} \sum_{\ell=1}^d
  \sum_{j=1}^m\sigma_{\ell, j}^2(\bm{X}(t^-),t^-;\theta)\d t\bigg], \\
  &\,\,
\hat{\Sigma}_i\coloneqq\sup_{\bm{X}_0, \hat{\theta}}\E\bigg[\medint\int_{t_i}^{t_{i+1}}
\sum_{\ell=1}^d\sum_{j=1}^m\hat{\sigma}_{\ell, j}^2(\hat{\bm{X}}(t^-),t^-; \theta)\d t\bigg],\\
B_i\coloneqq & \sup_{\bm{X}_0, \theta}\E\bigg[\medint\int_{t_i}^{t_{i+1}} \sum_{\ell=1}^d
\medint\int_U \beta_{\ell}^2(\bm{X}(t^-),\xi, t^-;\theta)\gamma(\d\xi)\d t\bigg], \\&\,\, 
\hat{B}_i\coloneqq\sup_{\bm{X}_0, \hat{\theta}}\E\bigg[\medint\int_{t_i}^{t_{i+1}}
  \sum_{\ell=1}^d\medint\int_U \hat{\beta}_{\ell}^2(\hat{\bm{X}}(t^-),\xi, t^-;\hat{\theta})
  \gamma(\d\xi)\d t\bigg],\\
    &X\coloneqq \sup_{X_0, \theta, t}\E[\|\bm{X}(t)\|^2]^{\frac{1}{2}}, \,\,\, \hat{X}\coloneqq \sup_{X_0, \theta, t}\E[\|\hat{\bm{X}}(t)\|^2]^{\frac{1}{2}},
    \end{aligned}
\end{equation}
\added{where $f_{\ell}(\bm{X}(t^-),t^-; \theta), \sigma_{\ell, j}(\bm{X}(t^-),t^-; \theta)$, and $\beta_{\ell}(\bm{X}(t^-),\xi,t^-; \theta)$ refer to the left-hand limits: 
\begin{equation}
\begin{aligned}
        &f_{\ell}(\bm{X}(t^-),t^-; \theta) = \lim_{s\rightarrow t, s<t}f_{\ell}(\bm{X}(s),s; \theta),\\
    &\hspace{0.5cm} \sigma_{\ell, j}(\bm{X}(t^-),t^-; \theta) = \lim_{s\rightarrow t, s<t}\sigma_{\ell, j}(\bm{X}(s),s; \theta),\\
    &\hspace{1cm}\beta_{\ell}(\bm{X}(t^-),t^-,\xi; \theta) = \lim_{s\rightarrow t, s<t}\beta_{\ell}(\bm{X}(s),s,\xi; \theta).
\end{aligned}
\end{equation}}
\end{corollary}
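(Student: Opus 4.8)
The plan is to follow the template of Theorem~\ref{well_posedness} (and of Theorem~3.1 in \cite{xia2024efficient}) and reduce the whole statement to a quantitative modulus-of-continuity estimate for the map $g(t)\coloneqq W_{2,\delta}^{2,\text{e}}(\bm{X}(t),\hat{\bm{X}}(t))$ on $[0,T]$; the only genuinely new ingredient compared with the diffusion case is the compensated-jump term. First I would record two elementary facts about $W_2$. Transporting a trajectory at time $s$ to the same trajectory at time $t$ furnishes an admissible coupling, so $W_2^2(\nu(t),\nu(s))\le \E[\|\bm{X}(t)-\bm{X}(s)\|^2]$; and comparing each measure with the point mass at the origin, together with the triangle inequality, gives the uniform bound $W_2(\nu(t),\hat{\nu}(t))\le X+\hat{X}$. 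Both facts pass to the conditional empirical measures $\nu^{\text{e}}_{\bm{X}_0,\delta}(t),\hat{\nu}^{\text{e}}_{\bm{X}_0,\delta}(t)$ in Definition~\ref{localw2def}, because the constants $X,\hat{X}$ and the increment bounds below are defined through suprema over $\bm{X}_0$ and $\theta$, so the estimates hold uniformly before integrating against $\nu_0^{\text{e}}(\d\bm{X}_0)$.

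The core estimate is the increment bound for the jump-diffusion process Eq.~\eqref{model_equation}. I would decompose $\bm{X}(t)-\bm{X}(t_i)$ into its drift, diffusion, and compensated-jump parts over $[t_i,t]$. The drift part is controlled by Cauchy--Schwarz, $\E\|\int_{t_i}^t\bm{f}\,\d u\|^2\le (t-t_i)\,\E\int_{t_i}^t\sum_\ell f_\ell^2\,\d u\le F_i\Delta t$; the diffusion part is controlled exactly by the It\^o isometry, yielding $\Sigma_i$; and the jump part is controlled by the isometry for integrals against the compensated Poisson measure $\tilde N$, namely $\E\|\int_{t_i}^t\!\int_U\bm{\beta}\,\tilde N(\d u,\gamma(\d\xi))\|^2=\E\int_{t_i}^t\!\int_U\sum_\ell\beta_\ell^2\,\gamma(\d\xi)\,\d u\le B_i$. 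The two martingale parts are orthogonal by the independence of $\bm{B}_t$ and $\tilde N$, so their cross term vanishes, giving $\E[\|\bm{X}(t)-\bm{X}(t_i)\|^2]\le F_i\Delta t+\Sigma_i+B_i$ for $t\in[t_i,t_{i+1}]$, and symmetrically for $\hat{\bm{X}}$. This is the step I expect to be the main obstacle: the compensated-Poisson isometry and the Brownian--Poisson orthogonality are the new work relative to the cited diffusion-only proof, and obtaining the bracket with constant exactly one (rather than a crude factor from $\|a+b+c\|^2\le 3(\|a\|^2+\|b\|^2+\|c\|^2)$) requires careful bookkeeping of the finite-variation drift against the martingale parts, exploiting the predictable (left-limit) evaluation of the coefficients at $t^-$ and the conditional martingale property over each subinterval; the remaining drift--martingale contributions are higher order and do not affect the stated rate.

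With these pieces in hand, I would bound the oscillation of $g$. Using the reverse triangle inequality for $W_2$ and the factorization $|a^2-b^2|=|a-b|\,(a+b)$ with $a=W_2(\nu^{\text{e}}_{\bm{X}_0,\delta}(t),\hat{\nu}^{\text{e}}_{\bm{X}_0,\delta}(t))$ and $b$ the analogue at $t_i$, the uniform bound controls $a+b\le 2(X+\hat{X})$ while the increment bound controls $|a-b|$; integrating against $\nu_0^{\text{e}}(\d\bm{X}_0)$ yields
\begin{equation}
|g(t)-g(t_i)|\le 2(X+\hat{X})\Big(\sqrt{F_i\Delta t+\Sigma_i+B_i}+\sqrt{\hat{F}_i\Delta t+\hat{\Sigma}_i+\hat{B}_i}\Big),\qquad t\in[t_i,t_{i+1}].
\end{equation}
Since $\Sigma_i,B_i$ and $F_i\Delta t$ all vanish as $\Delta t\to 0$ (they are expectations of integrals of uniformly bounded integrands over shrinking intervals), this shows $g$ is continuous on $[0,T]$, hence Riemann integrable, so the left-endpoint Riemann sums converge and the limit defining $\tilde{W}_{2,\delta}^{2,\text{e}}(\bm{X},\hat{\bm{X}})=\int_0^T g(t)\,\d t$ in Eq.~\eqref{local_define1} exists. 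For the quantitative estimate Eq.~\eqref{convergence_result}, I would write the error as $\sum_{i=0}^{n-1}\int_{t_i}^{t_{i+1}}(g(t_i)-g(t))\,\d t$, bound each integrand by the displayed oscillation estimate, factor out the maximum over $i$, and use $\sum_i(t_{i+1}-t_i)=T$, which reproduces exactly the right-hand side of Eq.~\eqref{convergence_result}; the $O(\sqrt{\Delta t})$ rate announced in Definition~\ref{localtimew2def} then follows because every term inside the square roots is $O(\Delta t)$.
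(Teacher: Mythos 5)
Your proposal is correct and follows essentially the same route as the paper's proof in \ref{appendixB}: the same uniform bound $W_2\le X+\hat{X}$, the same increment estimate $\E[\|\bm{X}(t)-\bm{X}(t_i)\|^2]\le F_i\Delta t+\Sigma_i+B_i$ obtained from Cauchy--Schwarz on the drift plus the It\^o and compensated-Poisson isometries, and the same factorization $|a^2-b^2|=|a-b|(a+b)$ combined with the reverse triangle inequality for $W_2$. The only cosmetic difference is that you conclude existence of the limit from continuity of $t\mapsto W_{2,\delta}^{2,\text{e}}(\bm{X}(t),\hat{\bm{X}}(t))$ and Riemann integrability, whereas the paper verifies the Cauchy criterion by comparing two arbitrary grids through their common refinement; both rest on the identical oscillation estimate, and the drift--martingale cross-term bookkeeping you flag is treated no more carefully in the paper's own argument.
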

The proof of Corollary~\ref{jump_diffusion_thm} is in~\ref{appendixB}. Next, we show that the local time-decoupled squared $W_2$ distance $\tilde{W}_{2, \delta}^{2, \text{e}}(\bm{X},  \hat{\bm{X}})$ in Definition~\eqref{localtimew2def} can be bounded by the squared $W_2$ distance between the two probability measures associated with the two sets of parameters $\theta$ and $\hat{\theta}$ in Eqs.~\eqref{ODE_model} and \eqref{approximate_ODE}, which implies the necessity of minimizing the local time-decoupled squared $W_2$ distance $\tilde{W}_{2, \delta}^{2, \text{e}}(\bm{X},  \hat{\bm{X}})$ if we wish to match the distribution of $\theta$ using the distribution of the reconstructed $\hat{\theta}$.

\begin{theorem}
\rm
\label{theorem1}
Suppose the drift, diffusion, and jump functions in the two jump-diffusion processes Eqs.~\eqref{model_equation} and ~\eqref{approximate_equation} satisfy the following Lipschitz condition:
\begin{equation}
\begin{aligned}
        &\sum_{i=1}^d|f_i(\bm{X}, t;\theta) - f_i(\hat{\bm{X}}, t;\hat{\theta})|\leq C(\|\bm{X} - \hat{\bm{X}}\| + \|\theta-\hat{\theta}\|), \\
        &\sum_{i=1}^d|\sigma_{i, j}(\bm{X}, t;\theta) - \sigma_{i, j}(\hat{\bm{X}}, t;\hat{\theta})|\leq C(\|\bm{X} - \hat{\bm{X}}\| + \|\theta-\hat{\theta}\|),\,\, j=1,...,m,\\ 
        &\sum_{i=1}^d|\beta_i(\bm{X}, \xi, t;\theta) - \beta_i(\hat{\bm{X}},\xi, t;\hat{\theta})|\leq C(\|\bm{X} - \hat{\bm{X}}\| + \|\theta-\hat{\theta}\|), \\
    &\quad\quad
    \forall \bm{X}, \hat{\bm{X}}\in\mathbb{R}^d,\,\,\forall \theta, \hat{\theta}\in\mathbb{R}^{\ell}, \,\, C<\infty.
\end{aligned}
\label{Lipschitz}
\end{equation}
In Eq.~\eqref{Lipschitz}, $f_i$ and $\beta_i$ denote the $i^{\text{th}}$ component of $\bm{f}$ and $\bm{\beta}$, respectively, and $\sigma_{i, j}$ denotes the $(i, j)$ element of the matrix $\bm{\sigma}$ in Eq.~\eqref{model_equation}.
Furthermore, we assume that the assumptions in Corollary~\ref{jump_diffusion_thm} hold and the sixth-order moments
\begin{equation}
    \E[\|\theta\|^6]\leq \Theta_6,\,\, \E[\|\hat{\theta}\|^6]\leq \hat{\Theta}_6
\end{equation}
are uniformly bounded. Then, $\tilde{W}_{2, \delta}^{2, \text{e}}(\bm{X}, \hat{\bm{X}})$ can be bounded by the squared $W_2$ distance $W_2^2(\mu, \hat{\mu})$:
    \begin{equation}
    \begin{aligned}
    &\E[\tilde{W}_{2, \delta}^{2, \text{e}}(\bm{X}, \hat{\bm{X}})]
    \leq 8C_0T\delta^2 \exp(C_0T) + \tfrac{6C_1}{C_0}T\exp(C_0T)\\
    &\quad\quad\times\big(W_2^2(\mu, \hat{\mu}) + 2C_3 \E[h(N^{\#}(\bm{X}_0;\delta), \ell)\cdot (\Theta_6^{\frac{1}{3}}+\hat{\Theta}_6^{\frac{1}{3}})]\big),
       % \E[\tilde{W}_{2, \delta}^{2, \text{e}}(\bm{X}, \hat{\bm{X}})]\leq  C_0\delta \exp(C_1T) +C_2\sqrt{T}\exp(C_4T)\Big(W_2^2(\mu, \hat{\mu}) + 2T \E[h(N(\bm{X}_0; \delta); m)]\Big),
       \end{aligned}
        \label{para_dependence_bound}
    \end{equation}
    where $\mu, \hat{\mu}$ are the probability measures associated with $\theta$ and $\hat{\theta}$, respectively. In Eq.~\eqref{para_dependence_bound}, $C_0, C_1, C_2$ are three constants, and
        \begin{equation}
h(N, \ell)\coloneqq\left\{
\begin{aligned}
&2N^{-\frac{1}{2}}(\log(1+N) + 1), \ell\leq4,\\
&2N^{-\frac{2}{\ell}}, \ell> 4.
\end{aligned}
\right.
\label{t_def}
\end{equation}
In Eq.~\eqref{para_dependence_bound}, $N^{\#}(\bm{X}_0; \delta)$ refers to the number of trajectories in the data set such that their initial conditions satisfy $\|\bm{X}(0)-\bm{X}_0\|\leq \delta$.
\end{theorem}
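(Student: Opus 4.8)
The plan is to bound the local time-decoupled squared $W_2$ distance from above by exhibiting one convenient coupling, and then to reduce everything to the parameter-space distance $W_2^2(\mu,\hat{\mu})$. Since $\tilde{W}_{2,\delta}^{2,\text{e}}$ integrates $W_2^2\big(\nu^{\text{e}}_{\bm{X}_0,\delta}(t),\hat{\nu}^{\text{e}}_{\bm{X}_0,\delta}(t)\big)$ over $t\in[0,T]$ against $\d t$ and over $\bm{X}_0$ against $\nu_0^{\text{e}}$, and $W_2^2$ is an infimum over couplings, I would use a \emph{synchronous coupling}: drive both processes by the same Brownian motion and the same Poisson random measure (legitimate for an upper bound, even though $\hat{N}$ and $N$ are independent in the model, because the law of an SDE solution depends only on the law of its driving noise), and pair the parameters of the $N^{\#}(\bm{X}_0;\delta)$ trajectories sitting in a common $\delta$-neighborhood through the optimal transport plan between their empirical parameter measures.

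First I would establish a trajectory estimate. Writing $\bm{X}(t)-\hat{\bm{X}}(t)$ as the initial-condition gap plus the drift integral, the It\^o (diffusion) integral, and the compensated-jump integral, I take $\E\|\cdot\|^2$, apply the It\^o isometry to the diffusion term and the analogous isometry to the compensated-Poisson integral, invoke the Lipschitz hypotheses Eq.~\eqref{Lipschitz} together with $(a+b)^2\leq 2a^2+2b^2$, and obtain a Gr\"onwall inequality for $g(t)\coloneqq\E[\|\bm{X}(t)-\hat{\bm{X}}(t)\|^2]$. Because the driving noise is shared, the martingale contributions cancel pathwise and Gr\"onwall yields $g(t)\leq \exp(C_0 t)\big(C\,\E[\|\bm{X}(0)-\hat{\bm{X}}(0)\|^2]+C\,\E[\|\theta-\hat{\theta}\|^2]\big)$. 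Since both endpoints of a re-paired neighborhood lie within $\delta$ of $\bm{X}_0$, we have $\|\bm{X}(0)-\hat{\bm{X}}(0)\|\leq 2\delta$, producing the $\delta^2\exp(C_0T)$ contribution after the $t$- and $\bm{X}_0$-integrations, while the parameter term is controlled by the transport cost between the empirical parameter measures $\mu^{\text{e}},\hat{\mu}^{\text{e}}$.

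Next I would convert the parameter term into $W_2^2(\mu,\hat{\mu})$ plus the empirical-sampling correction. Because $\theta$ is independent of $\bm{X}(0)$, the parameters attached to trajectories in a neighborhood are i.i.d.\ draws from $\mu$ (resp.\ $\hat{\mu}$), so their empirical measures converge to the true parameter laws in $\mathbb{R}^{\ell}$. Using the triangle inequality in $W_2$ and $(a+b+c)^2\leq 3(a^2+b^2+c^2)$,
\begin{equation}
W_2^2(\mu^{\text{e}},\hat{\mu}^{\text{e}})\leq 3\bigl(W_2^2(\mu^{\text{e}},\mu)+W_2^2(\mu,\hat{\mu})+W_2^2(\hat{\mu},\hat{\mu}^{\text{e}})\bigr),
\end{equation}
the middle term is the target, and the two empirical-to-true terms are bounded in expectation by the Fournier--Guillin convergence rate for empirical measures in $\mathbb{R}^{\ell}$, whose moment-dependent constant is finite precisely because $\E[\|\theta\|^6]\leq\Theta_6$ and $\E[\|\hat{\theta}\|^6]\leq\hat{\Theta}_6$; with the moment exponent $p/q=2/6=\tfrac13$ this gives $\E[W_2^2(\mu^{\text{e}},\mu)]\leq C_3\,h(N^{\#}(\bm{X}_0;\delta),\ell)\,\Theta_6^{1/3}$ and its hatted analogue, where $h$ is exactly the rate Eq.~\eqref{t_def}. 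Assembling these, integrating in $t$ (factor $T$) and over $\bm{X}_0$ against $\nu_0^{\text{e}}$, and taking the outer expectation gives the stated bound Eq.~\eqref{para_dependence_bound}; the prefactors $8C_0$, $6C_1/C_0$, and $2C_3$ collect the $(2\delta)^2$, the triangle-inequality $3$ times the Gr\"onwall constants, and the two symmetric moment terms.

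The step I expect to be the main obstacle is making the synchronous-noise coupling legitimate at the \emph{empirical} level so that the empirical-measure convergence lands in parameter space $\mathbb{R}^{\ell}$ (hence $h(N^{\#},\ell)$ and the parameter moments $\Theta_6,\hat{\Theta}_6$) rather than in trajectory space $\mathbb{R}^d$: one must argue that, after sharing the driving noise and re-pairing by the optimal parameter transport, the only residual discrepancies are the $\leq 2\delta$ initial-condition gap and the parameter gap, with the intrinsic stochasticity fully absorbed by the common noise and the random sample size $N^{\#}(\bm{X}_0;\delta)$ handled inside the expectation. A secondary hazard is matching the Fournier--Guillin moment exponent to the $\Theta_6^{1/3}$ appearing in the target at the $W_2^2$ level ($p=2$, $q=6$), and ensuring the compensated-Poisson isometry is valid, for which the boundedness and Lipschitz hypotheses on $\bm{\beta}$ are used to keep $C_0$ finite.
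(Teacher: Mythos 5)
Your proposal is correct and follows essentially the same route as the paper's proof in \ref{AppendixC}: a synchronous coupling of the driving Brownian and Poisson noises (justified because the law of the solution depends only on the law of the noise), a Gr\"onwall/Lipschitz estimate reducing the trajectory discrepancy to the initial-condition gap ($\leq 2\delta$) plus the parameter gap, a $W_2$ triangle inequality introducing $W_2^2(\mu,\hat{\mu})$ together with empirical-to-true corrections in parameter space, and the Fournier--Guillin rate with the sixth-moment bounds yielding the $h(N^{\#}(\bm{X}_0;\delta),\ell)\,\Theta_6^{1/3}$ terms. The only cosmetic difference is that the paper introduces an explicit auxiliary process $\tilde{\bm{X}}$ and treats the initial-condition and parameter contributions in two separate Gr\"onwall estimates, whereas you fold them into one.
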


We prove Theorem~\ref{theorem1} in \ref{AppendixC}. Theorem~\ref{theorem1} implies that minimizing the local time-decoupled squared $W_2$ distance $\tilde{W}_{2, \delta}^{2, \text{e}}$ is a necessary condition if we wish to match the distribution of $\theta$ by the distribution of $\hat{\theta}$.
In Eq.~\eqref{para_dependence_bound}, there is a trade-off between the first term and the third term on the RHS: if we increase $\delta$, then the first term on the RHS of Eq.~\eqref{para_dependence_bound} will increase but the factor $h(N^{\#}(\bm{X}_0;\delta);\ell)$ in the last term will decrease. Thus, it is important to choose an appropriate $\delta$ to keep both the first and last terms on the RHS of Eq.~\eqref{para_dependence_bound} small so that the expected local time-decoupled squared $W_2$ distance $\E[\tilde{W}_{2, \delta}^{2, \text{e}}(\bm{X}, \hat{\bm{X}})]$ can be well controlled by $W_2^2(\mu, \hat{\mu})$. \added{In this case, minimizing $\E[\tilde{W}_{2, \delta}^{2, \text{e}}(\bm{X}, \hat{\bm{X}})]$ is necessary to minimize $W_2^2(\mu, \hat{\mu})$, \textit{i.e.}, match the distribution of $\theta$ by the distribution of $\hat{\theta}$.} On the other hand, in Eq.~\eqref{t_def}, \added{the rate of convergence when using the empirical probability measure} is $2N^{-\frac{1}{2}}(\log(1+N) + 1), \ell\leq 4$ or $N^{-\frac{2}{\ell}}, \ell> 4$ as the number of observed trajectories increases, which depends only on the dimensionality of the parameter $\theta$ instead of on the dimensionality of $\bm{X}(t)$. 

  ODEs can be regarded as special jump-diffusion processes whose diffusion function and jump function are both 0. Thus,
 Theorem~\ref{theorem1} can be applied to bound the time-decoupled local squared $W_2$ distance between the distributions of trajectories of the two ODEs Eq.~\eqref{ODE_model} and ~\eqref{approximate_ODE} using the squared $W_2$ distance between the probability distributions of $\theta$ and $\hat{\theta}$. 
Finally, Theorem~\ref{theorem1} can also be generalized to the cases of reconstructing parameters in spatiotemporal stochastic partial differential equations (SPDEs). We provide an example of bounding 
the local time-decoupled squared $W_2$ distance between solutions to a parabolic SPDE associated with two different sets of model parameters in \ref{AppendixD}.

\added{Finally, one can also take into account time discretization errors using a time discretization scheme such as the Runge-Kutta scheme for solving ODEs and the strong It\^o-Taylor approximation \cite{kloeden1992numerical} for numerically solving SDEs. Specifically, for stiff problems, a detailed analysis of numerical implementation is worth carrying out.} 
Additionally, 
it is also possible to extend Theorem~\ref{theorem1} to more complicated spatiotemporal dynamics by replacing $B_t$ with more complicated spatiotemporal cylindrical Brownian noise \cite{liu2015stochastic} or considering spatiotemporal integrodifferential equations \cite{deng2024adaptive}.
Such discussions require a more detailed analysis of SPDEs and are thus beyond the scope of this paper.

\section{An SNN model for approximating the distribution of continuous random variables}
\label{section3}
In this section, we analyze an SNN model used in \cite{xia2024local}.  We apply this SNN model to reconstruct the distribution of unknown parameters for general dynamical systems. Specifically, we will prove that this SNN model can approximate the probability distribution of any continuous multidimensional random variable under certain technical assumptions in the $W_2$ distance sense, and the training of this SNN can be done by direct minimization of the local time-decoupled squared $W_2$ loss function Eq.~\eqref{local_define} analyzed in Section~\ref{section2}. 
We sketch the structure of the SNN with weight uncertainty in Fig.~\ref{fig:nn_model}. 
 All weights in the neural networks $\{w_{i, j, k}\}$ are sampled from independent normal distributions with $w_{i, j, k}\sim\mathcal{N}(a_{i, j, k}, \sigma_{i, j, k}^2)$. The biases $\{b_{i, k}\}$ for all $i, k$ are deterministic. The means and variances of the weights $\{a_{i, j, k}\}, \{\sigma_{i, j, k}^2\}$ and the biases $\{b_{i, k}\}$ are optimized through training. %The ReLU function may be used as the activation function.
    \begin{figure}[h!]
    \centering
\includegraphics[width=0.9\linewidth]{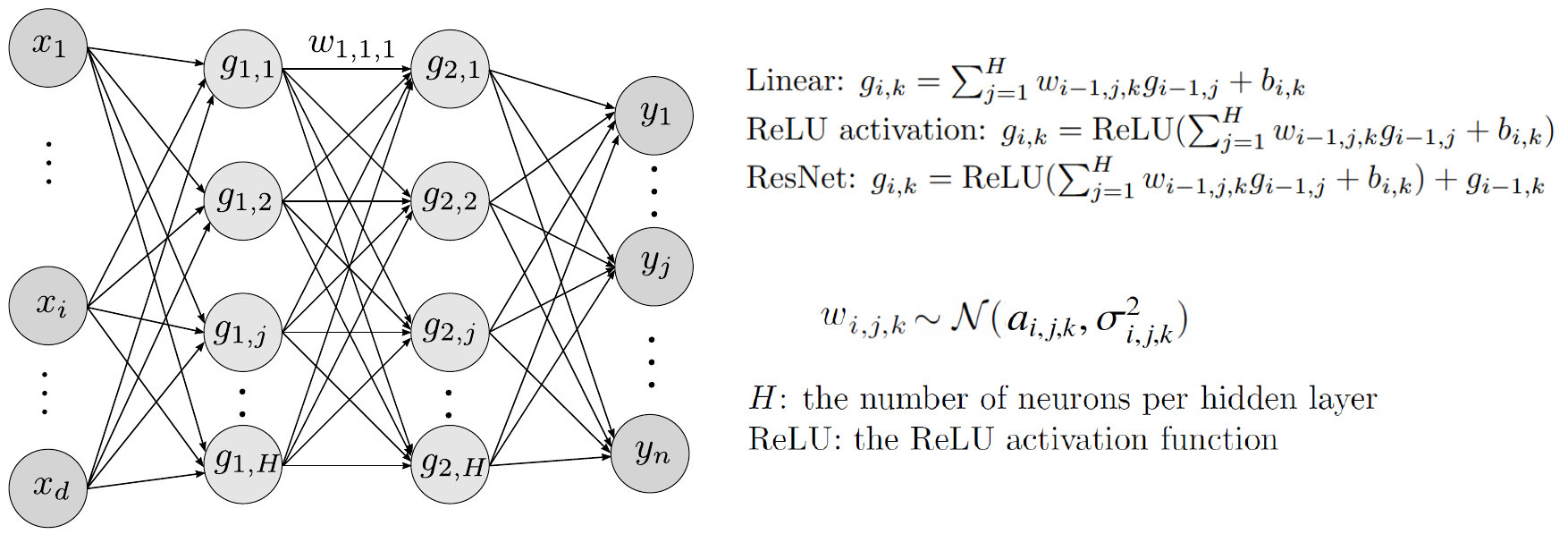}
    \caption{\footnotesize A sketch of the structure of the neural network model with weight uncertainty used in \cite{xia2024local} and in this paper. The weights $w_{i, j, k}\sim\mathcal{N}(a_{i, j, k}, \sigma_{i, j, k}^2)$ are independently sampled, \textit{i.e.}, $w_{i_1, j_1, k_1}$ is independent of $w_{i_2, j_2, k_2}$ when $(i_1, j_1, k_1)\neq (i_2, j_2, k_2)$. When using this neural network model to make predictions, for each input $\bm{x}=(x_1,\ldots,x_d)\in D\subseteq\mathbb{R}^d$, we resample all weights $\{w_{i, j, k}\}$ again. For each neuron in the hidden layer, one of the following three forward propagation methods is considered: the linear operation, the ReLU activation, or the ResNet technique.}
    \label{fig:nn_model}
\end{figure}

First, we prove that the SNN model in Fig.~\ref{fig:nn_model} can approximate any continuous random variable whose probability distribution follows a parameterized multivariate normal distribution in the squared $W_2$ distance sense. In the following, $\mathcal{N}(\bm{y} - \bm{b}, \Sigma)$ denotes the probability density function of a multivariate normal distribution with mean $\bm{b}$ and the covariance matrix $\Sigma$ and takes the form:
\begin{equation}
    \mathcal{N}(\bm{y}-\bm{b}, \Sigma) \coloneqq \frac{1}{\sqrt{2\pi}^{d'}}\cdot\frac{1}{|\Sigma|^{\frac{1}{2}}}\cdot\exp\big(\tfrac{1}{2}(\bm{y} - \bm{b})^T \Sigma (\bm{y} - \bm{b})\big).
\end{equation}

%In the following,  where $\mathcal{N}(\bm{y}-\bm{b}, \Sigma)$ denotes the probability density function of a multivariate Gaussian distributed random variable $\bm{y}$ with mean $\bm{b}_i$ and covariance matrix $\Sigma$.

\begin{theorem}
\rm
\label{thm_3_1}
Let $\bm{y}\in\mathbb{R}^{d'}$ be a continuous random variable whose probability density is $f_{\bm{x}}(\bm{y}), \bm{x}\in D\subseteq\mathbb{R}^d$ such that $f_{\bm{x}}(\bm{y}) =\mathcal{N}(\bm{y}-\bm{b}({\bm{x}}), A(\bm{x})^TA({\bm{x}}))$. $D$ is a bounded set and $\bm{x}$ has a probability measure $\gamma(\cdot)$ on $D$. 
We make the following assumptions:
\begin{enumerate}
    \item For any sequence of sets $\{D_i\}_{i=1}^{\infty}$, if $D_i\rightarrow D$ as $i\rightarrow\infty$, then $\lim_{i\rightarrow\infty}\gamma(D_i)=\gamma(D)=1$. Furthermore, for any $\Delta x>0$, we can find a set of equidistance grids $\{\bm{x}_i\}_{i=1}^{K}\subseteq D$ such that the distance between two adjacent grids is $\Delta x$, $D\subseteq \cup_{i=1}^K\otimes_{j=1}^d[x_i^j, x_i^j+\Delta \bm{x})$, and $\otimes_{j=1}^d[x_{i_1}^j, x_{i_1}^j+\Delta \bm{x})\cap \otimes_{j=1}^d[x_{i_2}^j, x_{i_2}^j+\Delta \bm{x})=\emptyset$ if $i_1\neq i_2$.
    \item $f_{\bm{x}}(\bm{y})$ is uniformly continuous in $\bm{x}$ such that for any $\epsilon>0$, there exists $\Delta x>0$ satisfying:
\begin{equation}
    W_2^2(f_{\bm{x}}, f_{\tilde{\bm{x}}})< \epsilon,\,\, \forall \|\bm{x}-\tilde{\bm{x}}\|\leq \Delta x, \,\,\bm{x}, \tilde{\bm{x}}\in D.
\end{equation}
\item $
    Y\coloneqq \sup_{\bm{x}\in D}\|\bm{b}({\bm{x}})\|^2 + \|A({\bm{x}})^TA({\bm{x}})\|_F^2<\infty$, 
where $\|\cdot\|_F$ is the Frobenius norm of a matrix.
\end{enumerate}

Then, for any $\epsilon_0>0$, there exists an SNN model as described in Fig.~\ref{fig:nn_model} which uses the ReLU activation and the linear forward propagation such that 
if we denote the probability density function of the output by $\hat{f}_{\bm{x}}$ given the input $\bm{x}$, the following inequality holds:
        \begin{equation}
            \int_D W_2^2(f_{\bm{x}}, \hat{f}_{\bm{x}})\gamma(\d \bm{x})\leq \epsilon_0.
        \end{equation}
\end{theorem}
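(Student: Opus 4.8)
The plan is to construct the SNN explicitly and then bound the integrated error by splitting it into a \emph{discretization} error (replacing the continuously varying Gaussian family $f_{\bm{x}}$ by a piecewise-constant family) and a \emph{realization} error (the gap between that piecewise-constant family and the actual network output). Using that $W_2$ is a metric together with $(a+b)^2\le 2a^2+2b^2$, I would introduce an intermediate family $g_{\bm{x}}$ and write
\[
\int_D W_2^2(f_{\bm{x}}, \hat{f}_{\bm{x}})\,\gamma(\d\bm{x})\le 2\int_D W_2^2(f_{\bm{x}}, g_{\bm{x}})\,\gamma(\d\bm{x}) + 2\int_D W_2^2(g_{\bm{x}}, \hat{f}_{\bm{x}})\,\gamma(\d\bm{x}),
\]
and control the two terms by choosing the grid resolution and then the network, in that order.

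For the first term, fix $\epsilon>0$. By Assumption~2 pick $\Delta x>0$ so that $W_2^2(f_{\bm{x}}, f_{\tilde{\bm{x}}})<\epsilon$ whenever $\|\bm{x}-\tilde{\bm{x}}\|\le\Delta x$, and by Assumption~1 take the corresponding equidistant grid $\{\bm{x}_i\}_{i=1}^K$ with disjoint cells $Q_i=\otimes_{j=1}^d[x_i^j,x_i^j+\Delta x)$ covering $D$. Define the piecewise-constant family $g_{\bm{x}}\coloneqq f_{\bm{x}_i}=\mathcal{N}(\bm{y}-\bm{b}(\bm{x}_i),A(\bm{x}_i)^TA(\bm{x}_i))$ for $\bm{x}\in Q_i$. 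Then $W_2^2(f_{\bm{x}}, g_{\bm{x}})<\epsilon$ pointwise, so the first integral is at most $\epsilon$.

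For the second term I would construct the SNN in three layers. A first ReLU layer with deterministic weights computes gate functions $g_i(\bm{x})\approx \mathbf{1}[\bm{x}\in Q_i]$, built as products of one-dimensional steep ReLU ramps so that $g_i\equiv 1$ on a shrunk cell and $g_i\equiv 0$ outside $Q_i$. A second linear layer with \emph{random} weights $w_{i,l}\sim\mathcal{N}(0,1)$, each fed by the gate $g_i(\bm{x})$, produces $z_{i,l}=w_{i,l}\,g_i(\bm{x})$ (while $g_i$ is also passed forward through linear identity neurons). A final deterministic linear layer forms
\[
\hat{y}_k=\sum_{i=1}^K b_k(\bm{x}_i)\,g_i(\bm{x})+\sum_{i=1}^K\sum_{l=1}^m A(\bm{x}_i)_{l,k}\,z_{i,l},\quad k=1,\dots,d'.
\]
The key verification is that on the interior of $Q_{i_0}$, where $g_{i_0}=1$ and all other gates vanish, the latent variables $\bm{z}_{i_0}=(z_{i_0,1},\dots,z_{i_0,m})$ are i.i.d.\ standard normals shared across all output coordinates, so $\hat{\bm{y}}\mid\bm{x}\sim\mathcal{N}(\bm{b}(\bm{x}_{i_0}),A(\bm{x}_{i_0})^TA(\bm{x}_{i_0}))=f_{\bm{x}_{i_0}}$ exactly, reproducing the full (non-diagonal) covariance. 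Thus $W_2^2(g_{\bm{x}},\hat{f}_{\bm{x}})=0$ on the union $G$ of shrunk cells. On the transition region $D\setminus G$ the output is still Gaussian (a linear combination of independent Gaussians) with mean and covariance that are bounded combinations of the quantities controlled by Assumption~3, so $W_2^2(g_{\bm{x}},\hat{f}_{\bm{x}})\le M$ for a constant $M=M(Y)$. Hence the second integral is at most $M\,\gamma(D\setminus G)$; steepening the ramps makes the shrunk region approach $D$, and by Assumption~1 $\gamma(G)\to 1$, so this term can be driven below $\epsilon$. Combining gives a total bound $4\epsilon$, and taking $\epsilon=\epsilon_0/4$ completes the argument.

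The main obstacle I anticipate is realizing a prescribed, generally \emph{non-diagonal} covariance $A(\bm{x}_i)^TA(\bm{x}_i)$ that is switched on and off by cell membership, using only \emph{independent} random weights whose per-output-neuron independence would otherwise force a diagonal covariance. The resolution is the device of feeding the deterministic gate $g_i(\bm{x})$ as the input to a random weight, so that the product $w_{i,l}\,g_i(\bm{x})$ is automatically a gated Gaussian, and of \emph{sharing} the $m$ latent normals $z_{i,l}$ across all $d'$ output coordinates to generate the off-diagonal correlations; getting this bookkeeping exactly right is the crux. A secondary technical difficulty is the measure-theoretic control of the ReLU transition shells around cell boundaries, which I would extract from the regularity in Assumption~1 by using the shrunk good region $G$ as the approximating sequence converging to $D$.
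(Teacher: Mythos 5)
Your proposal is correct and follows essentially the same route as the paper's proof in \ref{AppendixE}: steep ReLU ramps composed into cell-indicator gates $n_i^3\approx\mathbf{1}[\bm{x}\in Q_i]$, gated standard-normal weights shared across output coordinates and mixed by $A(\bm{x}_i)$ to realize the full covariance $A_i^TA_i$ exactly on shrunk cells, and a final split of the integral into the good region (controlled by uniform continuity) and the transition shells (controlled by the moment bound $Y$ and $\gamma(D\setminus D(\epsilon))\rightarrow 0$). Your triangle-inequality decomposition through the piecewise-constant family $g_{\bm{x}}$ is a cosmetic repackaging of the paper's direct estimate (which uses that $\hat{f}_{\bm{x}}=f_{\bm{x}_i}$ exactly on the good region), and your way of adding the mean $\bm{b}(\bm{x}_i)$ directly in the last layer even avoids the paper's use of $A_i^{-1}$.
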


We prove Theorem~\ref{thm_3_1} in \ref{AppendixE}. 
 Theorem~\ref{thm_3_1} can be further generalized and we can prove that the probability density distribution of the output of the SNN model in Fig.~\ref{fig:nn_model} can approximate 
a multivariate Gaussian mixture model (defined in Corollary~\ref{col3_1} below) in the squared $W_2$ distance sense. First, we prove the following lemma.

\begin{lemma}
\rm
\label{mixed_gauss}
Let $\bm{y}\in\mathbb{R}^{d'}$ be a continuous random variable that has the following probability density function of a Gaussian mixture model: 
\begin{equation}
    f(\bm{y}) = \sum_{i=1}^s p_i\mathcal{N}(\bm{y}-\bm{b}_i, A_i^TA_i), \,\, p_i>0, \,\,\sum_{i=1}^sp_i=1.
    \label{gauss_model}
\end{equation}
Suppose another continuous random variable $\hat{\bm{y}}\in\mathbb{R}^{d'}$ has the following probability density function:
\begin{equation}
    \hat{f}(\hat{\bm{y}}) = \sum_{i=1}^s \hat{p}_i\mathcal{N}(\hat{\bm{y}} - \bm{b}_i, A_i^TA_i) + p(\hat{\bm{y}}),\,\, \hat{p}_i>0
\end{equation}
where $\hat{p}_i\leq p_i$, $p(\cdot):\mathbb{R}^{d'}\rightarrow\mathbb{R}^+\cup\{0\}$ is nonnegative, and $\int_{\mathbb{R}^{d'}}\|\hat{\bm{y}}\|^2 p(\hat{\bm{y}})\d\hat{\bm{y}}\leq\infty$. Then, the following bound of the squared $W_2$ distance between the probability measures of $\bm{y}$ and $\hat{\bm{y}}$ holds: 
\begin{equation}
    W_2^2(f, \hat{f})\leq 2\big(\sum_{i=1}^s(p_i-\hat{p}_i)(\|\bm{b}_i\|^2 + \|A_i^TA_i\|_F^2) + \int_{\mathbb{R}^{d'}} \|\bm{y}\|^2 p(\bm{y})\d\bm{y}\big).
    \label{mixed_gauss_result}
\end{equation}
\end{lemma}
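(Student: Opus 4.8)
The plan is to bound $W_2^2(f,\hat f)$ from above by exhibiting a single explicit coupling of the two measures: by Definition~\ref{def:W2} the squared $W_2$ distance is an infimum over couplings, so any admissible coupling gives an upper bound and we never need optimal transport. The key preliminary observation is a mass-balance identity. Since $\hat f$ is a probability density and $\sum_i p_i = 1$, integrating $\hat f$ yields $\sum_{i=1}^s\hat p_i + \int_{\mathbb{R}^{d'}}p(\bm{y})\,\d\bm{y} = 1$, so that
\begin{equation}
M \coloneqq \sum_{i=1}^s (p_i - \hat p_i) = \int_{\mathbb{R}^{d'}} p(\bm{y})\,\d\bm{y}.
\end{equation}
Thus $f$ and $\hat f$ share the common component $\mu_0\coloneqq\sum_{i=1}^s \hat p_i \mathcal{N}(\cdot - \bm{b}_i, A_i^TA_i)$, while their ``leftover'' parts $\mu'\coloneqq\sum_{i=1}^s (p_i-\hat p_i)\mathcal{N}(\cdot - \bm{b}_i, A_i^TA_i)$ and $\nu'\coloneqq p(\cdot)$ carry the same total mass $M$. (If $M=0$, then $\hat p_i=p_i$ for all $i$ and $p\equiv 0$ a.e., so $f=\hat f$ and both sides of~\eqref{mixed_gauss_result} vanish; assume $M>0$ from now on.)

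Next I would build the coupling $\pi=\pi_{\mathrm{diag}}+\pi_{\mathrm{prod}}$ from two pieces. On the common component I use the diagonal coupling of $\mu_0$ with itself, which transports that mass onto itself and so contributes nothing to the cost. On the leftover I use the independent (product) coupling $\pi_{\mathrm{prod}}\coloneqq\tfrac{1}{M}\,\mu'\otimes\nu'$, whose normalization is chosen exactly so that, by the mass-balance identity, its marginals are $\mu'$ and $\nu'$. Adding the pieces produces a measure whose first marginal is $\mu_0+\mu'=f$ and whose second marginal is $\mu_0+\nu'=\hat f$, so $\pi$ is admissible, and its cost is carried only by $\pi_{\mathrm{prod}}$:
\begin{equation}
W_2^2(f,\hat f) \leq \frac{1}{M}\int_{\mathbb{R}^{d'}}\!\!\int_{\mathbb{R}^{d'}} \|\bm{y}-\hat{\bm{y}}\|^2\, \d\mu'(\bm{y})\,\d\nu'(\hat{\bm{y}}).
\end{equation}

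The final step is to decouple the double integral and discard the cross term. Applying the elementary bound $\|\bm{y}-\hat{\bm{y}}\|^2 \leq 2\|\bm{y}\|^2 + 2\|\hat{\bm{y}}\|^2$ and using $\int\d\mu'=\int\d\nu'=M$ separates the two variables and cancels the factor $1/M$, leaving $W_2^2(f,\hat f)\le 2\big(\int\|\bm{y}\|^2\d\mu'+\int\|\hat{\bm{y}}\|^2 p(\hat{\bm{y}})\d\hat{\bm{y}}\big)$. It then only remains to compute the second moment of each Gaussian component: for a normal law with mean $\bm{b}_i$ and covariance $A_i^TA_i$ one has $\E[\|\bm{y}\|^2]=\|\bm{b}_i\|^2+\mathrm{tr}(A_i^TA_i)$, which furnishes the $(p_i-\hat p_i)$-weighted second-moment terms in~\eqref{mixed_gauss_result}, while the remaining integral is finite by the hypothesis $\int\|\hat{\bm{y}}\|^2 p(\hat{\bm{y}})\,\d\hat{\bm{y}}<\infty$. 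The one genuinely delicate point I would check carefully is the verification that the glued measure $\pi$ has exactly the marginals $f$ and $\hat f$; everything else reduces to a Gaussian moment computation and the inequality $\|\bm{y}-\hat{\bm{y}}\|^2\le 2\|\bm{y}\|^2+2\|\hat{\bm{y}}\|^2$, whose role is precisely to eliminate the indefinite cross term $-2\,\bm{y}\cdot\hat{\bm{y}}$ that an exact expansion would produce and that the product coupling gives no way to control. This same inequality is the source of the leading factor $2$ in the stated bound.
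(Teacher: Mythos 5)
Your proof is correct and follows essentially the same route as the paper's: the identical diagonal-plus-product coupling (with the same mass-balance normalization $1/M$, which the paper calls $p$), the same elementary inequality $\|\bm{y}-\hat{\bm{y}}\|^2\le 2\|\bm{y}\|^2+2\|\hat{\bm{y}}\|^2$ to kill the cross term, and the same Gaussian second-moment computation. The only point worth flagging is that the exact second moment is $\|\bm{b}_i\|^2+\mathrm{tr}(A_i^TA_i)=\|\bm{b}_i\|^2+\|A_i\|_F^2$, which is not in general equal to (nor dominated by) the quantity $\|A_i^TA_i\|_F^2$ appearing in Eq.~\eqref{mixed_gauss_result} --- but that discrepancy is present in the paper's own final display as well and is not something your argument introduces.
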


\begin{proof}
First, if $p_i=\hat{p}_i, i=1,...,s$, then $f(\bm{y})=\hat{f}(\bm{y})$ and $W_2^2(f, \hat{f})=0$, indicating that Eq.~\eqref{mixed_gauss_result} holds. Next, we assume that $\sum_{i=1}^s\hat{p}_i<1$.
    Without loss of generality, we assume that $\bm{y}$ and $\hat{\bm{y}}$ are independent of each other.
    We define a special coupling probability measure of the random variable $(\bm{y}, \hat{\bm{y}})\in\mathbb{R}^{2d'}$:
\begin{equation}
\begin{aligned}
        \pi(f, \hat{f})(\bm{y}, \hat{\bm{y}}) &= \big[\sum_{i=1}^s\hat{p}_i\mathcal{N}(\bm{y}-\bm{b}_i, A_i^TA_i)\big]\delta(\bm{y}-\hat{\bm{y}}) \\
        &\quad+ \frac{1}{p}[\sum_{i=1}^s(p_i-\hat{p}_i)\mathcal{N}(\bm{y}-\bm{b}_i, A_i^TA_i)]\cdot p(\hat{\bm{{y}}}),
\end{aligned}
\end{equation}
where $p\coloneqq \sum_{i=1}^s(p_i-\hat{p}_i)=\int_{\mathbb{R}^{d'}}p(\hat{\bm{y}})\d\hat{\bm{y}}$, and $\delta$ is the Dirac delta measure. We can check that the marginal distributions of $\pi(f, \hat{f})$ coincide with $f(\bm{y})$ and $\hat{f}(\hat{\bm{y}})$, respectively. Furthermore, we have
\begin{equation}
\begin{aligned}
    W_2^2(f, \hat{f})&\leq\E_{(\bm{y}, \hat{\bm{y}})\sim\pi(f, \hat{f})}\big[\|\bm{y}-\hat{\bm{y}}\|^2\big]\\
    &\leq \int_{\mathbb{R}^{2d'}}\frac{1}{p}[\sum_{i=1}^s(p_i-\hat{p}_i)\mathcal{N}(\bm{y}-\bm{b}_i, A_i^TA_i)]\cdot p(\bm{\hat{y}})\|\bm{y}-\hat{\bm{y}}\|^2\d\bm{y}\d\hat{\bm{y}}\\
    &\leq \int_{\mathbb{R}^{2d'}}\frac{1}{p}[\sum_{i=1}^s(p_i-\hat{p}_i)\mathcal{N}(\bm{y}-\bm{b}_i, A_i^TA_i)]\cdot p(\bm{\hat{y}})\cdot 2(\|\bm{y}\|^2+\|\hat{\bm{y}}\|^2)\d\bm{y}\d\hat{\bm{y}}\\
    &\leq 2\sum_{i=1}^s(p_i-\hat{p}_i)(\|\bm{b}_i\|^2 + \|A_i^TA_i\|_F^2) + 2\int_{\mathbb{R}^{d'}} \|\hat{\bm{y}}\|^2 p(\hat{\bm{y}})\d\hat{\bm{y}},
    \end{aligned}
\end{equation}
which proves the inequality~\eqref{mixed_gauss_result}.
\end{proof}

Next, we show that the SNN model in Fig.~\ref{fig:nn_model} can approximate a multivariate Gaussian mixture model in the squared $W_2$ distance sense.
\begin{corollary}
\rm
\label{col3_1}
    Let $\bm{y}\in\mathbb{R}^{d'}$ be a continuous random variable with a probability density function $f_{\bm{x}}$, where $\bm{x}\in D\subseteq\mathbb{R}^d$ is continuous and has a probability density $\gamma(\cdot)$. At each $\bm{x}$, $f_{\bm{x}}$ is the probability density function of a Gaussian mixture model: 
    \begin{equation}
        f_{\bm{x}}(\bm{y}) = \sum_{r=1}^s p_r(\bm{x})\mathcal{N}(\bm{y} - \bm{b}_r(\bm{x}), A_r^T(\bm{x})A_r(\bm{x})), \,\, \sum_{r=1}^s p_r(\bm{x})=1,\,\, p_r(\bm{x})>0.
    \end{equation}
    %$D$ is a bounded set in $\mathbb{R}^d$.
    We make the following three additional assumptions:
    \begin{enumerate}
    \item For any sequence of sets $\{D_i\}_{i=1}^{\infty}$, if $D_i\rightarrow D$ as $i\rightarrow\infty$, then $\lim_{i\rightarrow\infty}\gamma(D_i)=\gamma(D)=1$. Additionally, 
    $D$ is a bounded set in $\mathbb{R}^d$ and for any $\Delta x>0$, we can find a set of equidistance grids $\{\bm{x}_i\}_{i=1}^{K}\subseteq D$ such that $D\subseteq \cup_{i=1}^K\otimes_{j=1}^d[x_i^j, x_i^j+\Delta \bm{x})$ and $\otimes_{j=1}^d[x_{i_1}^j, x_{i_1}^j+\Delta \bm{x})\cap \otimes_{j=1}^d[x_{i_2}^j, x_{i_2}^j+\Delta \bm{x})=\emptyset$ if $i_1\neq i_2$.
        \item $f_{\bm{x}}(\bm{y})$ is uniformly continuous in $\bm{x}$ such that for any $\epsilon>0$, there exists $\delta>0$:
    \begin{equation}
        W_2^2\big(f_{\bm{x}}(\bm{y}), f_{\tilde{\bm{x}}}(\bm{y})\big)<\epsilon, \,\,\forall \|\bm{x} - \tilde{\bm{x}}\|\leq\delta, \,\,\forall \bm{x}, \tilde{\bm{x}}\in D.
    \end{equation}
    \item The quantity
    \begin{equation}
        \max_{1\leq r\leq s}\|A_r^T(\bm{x})A_r^T(\bm{x})\|_F^2 + \|\bm{b}_r(\bm{x})\|^2
        \label{bounded_condition}
    \end{equation} 
    is uniformly bounded for all $\bm{x}\in D$. 
    \end{enumerate}
    Then, for any positive number $c_0>0$, there exists an SNN with weight uncertainty described in Fig.~\ref{fig:nn_model} such that:
    \begin{equation}
        \int_D W_2^2(f_{\bm{x}}, \hat{f}_{\bm{x}})\gamma(\d\bm{x})\leq c_0.
        \label{mixed_result}
    \end{equation}
    Here, $\hat{f}_{\bm{x}}$ is the distribution of the output of the SNN when the input is $\bm{x}$.
\end{corollary}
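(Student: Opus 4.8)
The plan is to reduce the corollary to the single-Gaussian case already established in Theorem~\ref{thm_3_1} together with the perturbation estimate of Lemma~\ref{mixed_gauss}. First I would discretize $D$ exactly as in the proof of Theorem~\ref{thm_3_1}: using Assumption~1 I fix a grid resolution $\Delta x$ and cover $D$ by the disjoint cells $\otimes_{j=1}^d[x_i^j, x_i^j + \Delta\bm{x})$, and using the uniform continuity of Assumption~2 I replace $f_{\bm{x}}$ on each cell by the frozen mixture $f_{\bm{x}_i}(\bm{y}) = \sum_{r=1}^s p_r(\bm{x}_i)\mathcal{N}(\bm{y} - \bm{b}_r(\bm{x}_i), A_r^T(\bm{x}_i)A_r(\bm{x}_i))$, incurring a $W_2^2$ error below any prescribed tolerance on each cell. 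This leaves the core task: on each cell construct (the restriction of) an SNN whose output law $\hat{f}_{\bm{x}_i}$ approximates a single fixed Gaussian mixture in the squared $W_2$ sense.

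To approximate one fixed mixture $\sum_r p_r\mathcal{N}(\bm{y} - \bm{b}_r, A_r^T A_r)$, I would exploit the exact structure of Lemma~\ref{mixed_gauss}: it suffices to produce an output density of the form $\sum_r \hat{p}_r\mathcal{N}(\bm{y} - \bm{b}_r, A_r^T A_r) + p(\bm{y})$ with $\hat{p}_r \le p_r$ close to $p_r$ and with $\int \|\bm{y}\|^2 p(\bm{y})\d\bm{y}$ small. Theorem~\ref{thm_3_1} already supplies, for each $r$, a sub-SNN using linear and ReLU propagation whose output law is arbitrarily close to $\mathcal{N}(\bm{y} - \bm{b}_r, A_r^T A_r)$. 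The remaining ingredient is a stochastic gate selecting component $r$ with probability $\approx p_r$: since every weight is an independent Gaussian $w\sim\mathcal{N}(a,\sigma^2)$, the event $\{w>0\}$ has tunable probability $\Phi(a/\sigma)$ (with $\Phi$ the standard normal CDF), and a difference of two ReLU units, $\tfrac{1}{\epsilon}(\mathrm{ReLU}(w) - \mathrm{ReLU}(w-\epsilon))$, approximates the indicator $\mathbf{1}[w>0]$. Partitioning the line of one such Gaussian weight into $s$ consecutive pieces of nominal mass $\hat{p}_r \le p_r$ and routing the $r$-th branch's Gaussian output through the corresponding approximate gate yields an output law of exactly the form demanded by Lemma~\ref{mixed_gauss}, with the deliberately reserved mass together with gate misfires collected into the nonnegative leftover $p(\bm{y})$.

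With this construction, I would apply Lemma~\ref{mixed_gauss} cell by cell: the bound $W_2^2(f_{\bm{x}_i},\hat f_{\bm{x}_i}) \le 2\big(\sum_r (p_r - \hat{p}_r)(\|\bm{b}_r\|^2 + \|A_r^T A_r\|_F^2) + \int\|\bm{y}\|^2 p(\bm{y})\d\bm{y}\big)$ is driven below any target by pushing $\hat{p}_r \uparrow p_r$ and the gate error to zero, where Assumption~3 furnishes the uniform bound on $\|\bm{b}_r\|^2 + \|A_r^T A_r\|_F^2$ needed to control the first sum uniformly across cells. Finally I would combine the discretization error of the first step with the mixture-approximation error through $W_2^2(f_{\bm{x}},\hat f_{\bm{x}}) \le 2W_2^2(f_{\bm{x}},f_{\bm{x}_i}) + 2W_2^2(f_{\bm{x}_i},\hat f_{\bm{x}_i})$ and integrate against $\gamma$, using Assumption~1 (the grid covers $D$ and $\gamma(D)=1$) to sum the finitely many per-cell tolerances into the single global estimate $\int_D W_2^2(f_{\bm{x}},\hat f_{\bm{x}})\gamma(\d\bm{x}) \le c_0$.

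The main obstacle I anticipate is the realization of the discrete selection mechanism within the continuous Gaussian-weight-plus-ReLU architecture of Fig.~\ref{fig:nn_model}: one must tune the branch probabilities simultaneously across all $s$ components to match $\{p_r(\bm{x}_i)\}$ uniformly over the grid cells, and the gate must \emph{multiply} an indicator by a Gaussian branch output, which a ReLU network can only approximate, so the error of this routing feeds directly into $p(\bm{y})$. Crucially, Lemma~\ref{mixed_gauss} charges the \emph{second moment} $\int\|\bm{y}\|^2 p(\bm{y})\d\bm{y}$ rather than the total mass of $p(\bm{y})$, so I must ensure that misfired mass is not routed to branches with large $\|\bm{b}_r\|$ or covariance; here Assumption~3's uniform bound enters decisively to keep the leftover second moment controllable.
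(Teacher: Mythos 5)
Your proposal matches the paper's proof in all essential respects: the same cell decomposition of $D$ reusing the indicator layers from Theorem~\ref{thm_3_1}, the same stochastic gate built by partitioning the line of a single Gaussian weight into $s$ intervals of prescribed mass and approximating the interval indicators by differences of ReLUs, and the same application of Lemma~\ref{mixed_gauss} with Assumption~3 controlling the second moment of the leftover density before integrating against $\gamma$. The multiplication of the gate by the Gaussian branch output that you flag as an obstacle is handled in the paper simply as weight-times-activation in the forward pass, so it requires no further approximation.
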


We prove Corollary~\ref{col3_1} in \ref{AppendixF}.
 Finally, we prove that for each continuous random variable $\bm{y}\in\mathbb{R}^{d'}$ with a probability distribution function $f(\bm{y})$, under certain conditions, we can find a random variable $\hat{\bm{y}}\in\mathbb{R}^{d'}$ obeying a Gaussian mixture distribution whose probability distribution function is denoted by $\hat{f}$ such that $\hat{f}$ can approximate $f$ in the $W_2$ sense.
We have the following result.

\begin{theorem}
\rm
\label{thm3_2}
     Suppose $\bm{y}=(y_1,...,y_{d'})\in\mathbb{R}^{d'}$ is a continuous random variable with a smooth probability density function $f(\bm{y})\in  L^2(\mathbb{R}^{d'})\cap L^{\infty}(\mathbb{R}^{d'})$. Furthermore, we assume the following conditions hold:
     \begin{enumerate}
         \item $f(\bm{y})$ is uniformly continuous in $\mathbb{R}^n$
         \item \begin{equation}
         |f|_{\text{mix}}\coloneqq \sum_{|\bm{n}|_0\leq d'}\|\partial_{\bm{n}}^{|\bm{n}|_0}f\|_{L^2}<\infty,\,\,\,|\sqrt{f}|_{\text{mix}}<\infty
     \end{equation}
     where $|\bm{n}|_0$ is the number of nonzero components in $\bm{n}$, $\bm{n}=(n_1,...,n_j)$ satisfying $1\leq n_1<...<n_j\leq d'$, and  $\partial_{\bm{n}}f\coloneqq\partial_{y_{n_1}}...\partial_{y_{n_j}}f$. 
     \item $|fy_i^2|_{\text{mix}}<\infty$ and $|fy_i^2y_j^2|_{\text{mix}}<\infty$.
     \end{enumerate}
     \added{Then, $\forall \epsilon >0$, there exists a probability density function of a Gaussian mixture model:
    \begin{equation}
        \tilde{f}_{\sigma^2, n_0(\sigma)}(\bm{y}) \coloneqq \sum_{i=1}^{(n_0(\sigma)+1)^{d'}} p_i\mathcal{N}(\bm{y}- \bm{y}_i;\sigma^2I_{d'\times d'}),\,\,\bm{y}_i\in\mathbb{R}^{d'}
        \label{pndef}
    \end{equation}
    such that as $\sigma\rightarrow0^+$ and $n(\sigma)\rightarrow\infty$:
    \begin{equation}
        \tilde{f}_{\sigma^2, n_0(\sigma)}(\bm{y})\rightarrow f(\bm{y})
    \end{equation}
    uniformly in $\mathbb{R}^{d'}$. Furthermore,
    \begin{equation}
    W_2^2(f, \tilde{f}_{\sigma^2, n_0(\sigma)})\leq 24\epsilon.
        \label{thm3_result}
    \end{equation}
    In Eq.~\eqref{pndef}, $I_{d'\times d'}$ is a $d'\times d'$ identity matrix.}
\end{theorem}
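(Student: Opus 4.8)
The plan is to construct $\tilde f_{\sigma^2,n_0(\sigma)}$ in two stages: first mollify $f$ by convolving with an isotropic Gaussian of variance $\sigma^2$, and then replace the resulting convolution integral by a tensor-product Riemann sum over the grid $\{\bm y_i\}$, so that each grid node contributes one component $\mathcal N(\bm y-\bm y_i;\sigma^2 I_{d'\times d'})$ of the mixture in Eq.~\eqref{pndef}, with weight $p_i$ proportional to $f(\bm y_i)$ times the cell volume (renormalized to sum to one). Writing $f_\sigma\coloneqq f\ast\mathcal N(\cdot\,;\sigma^2 I_{d'\times d'})$, the first stage already supplies the easy half of the Wasserstein estimate: if $\bm y\sim f$ and $\bm Z\sim\mathcal N(0,I_{d'\times d'})$ are independent, then $\bm y+\sigma\bm Z\sim f_\sigma$, so this explicit coupling gives $W_2^2(f,f_\sigma)\le\E[\|\sigma\bm Z\|^2]=\sigma^2 d'$. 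Uniform convergence $f_\sigma\to f$ on $\mathbb R^{d'}$ as $\sigma\to0^+$ follows from the uniform-continuity hypothesis on $f$, since the Gaussian kernel is an approximate identity.

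The second stage is where the mixed-smoothness seminorms enter. I would first show that the discrete mixture $\tilde f_{\sigma^2,n_0(\sigma)}$ is a tensor-product quadrature approximation of $f_\sigma(\bm y)=\int f(\bm z)\mathcal N(\bm y-\bm z;\sigma^2 I_{d'\times d'})\,\d\bm z$, and bound its quadrature error uniformly in $\bm y$ by the Korobov-type seminorm $|f|_{\mathrm{mix}}$, with $|\sqrt f|_{\mathrm{mix}}$ used to phrase the estimate in a way that keeps the approximant a bona fide density. Because the grid carries $n_0(\sigma)+1$ nodes per coordinate, letting $n_0(\sigma)\to\infty$ drives this error to zero and, combined with the first stage, yields the claimed uniform convergence $\tilde f_{\sigma^2,n_0(\sigma)}\to f$. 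To upgrade uniform (hence weak) convergence to a quantitative $W_2$ bound I also need convergence of second moments: the conditions $|fy_i^2|_{\mathrm{mix}}<\infty$ guarantee that the grid quadrature of $\int y_i^2 f(\bm y)\,\d\bm y$ converges, so $\sum_i p_i\|\bm y_i\|^2\to\int\|\bm y\|^2 f\,\d\bm y$, while $|fy_i^2 y_j^2|_{\mathrm{mix}}<\infty$ controls the fourth moments and hence the uniform integrability that rules out escape of mass.

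Finally I would assemble the estimate through the triangle inequality $W_2(f,\tilde f)\le W_2(f,f_\sigma)+W_2(f_\sigma,\tilde f)$. The first term is $\le\sigma\sqrt{d'}$ from the coupling above; for the second, since $f_\sigma$ and $\tilde f$ share the identical Gaussian-convolution structure, I can couple them by reusing the same perturbation $\sigma\bm Z$ and transporting the centers, thereby reducing $W_2^2(f_\sigma,\tilde f)$ to the Wasserstein distance between $f$ and the discrete center measure $\sum_i p_i\delta_{\bm y_i}$, which the grid spacing together with the moment control bounds by a comparably small quantity. Squaring, collecting the mollification, interior-quadrature, and tail contributions, and absorbing the numerical constants then produces the target bound $W_2^2(f,\tilde f_{\sigma^2,n_0(\sigma)})\le 24\epsilon$ in Eq.~\eqref{thm3_result}. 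The main obstacle is that $f$ lives on all of $\mathbb R^{d'}$ while the mixture uses only the finitely many nodes in a bounded box: one must simultaneously control the tensor-product quadrature error in the interior through the mixed-derivative seminorms and control the tail mass and tail second moment outside the box through the decay implied by $|fy_i^2|_{\mathrm{mix}}$ and $|fy_i^2 y_j^2|_{\mathrm{mix}}$, all while coupling the two limiting parameters so that $n_0(\sigma)\to\infty$ fast enough relative to $\sigma\to0^+$ that the grid spacing shrinks and the uniform and Wasserstein errors vanish together.
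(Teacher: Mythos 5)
Your proposal is correct in outline and shares the paper's first stage verbatim (mollify $f$ to $f_{\sigma^2}=f\ast\mathcal{N}(\cdot;\sigma^2 I_{d'\times d'})$, with uniform convergence from uniform continuity), but it diverges from the paper in two substantive ways. First, the discretization: the paper does not use a Riemann sum on a bounded box but a Gauss--Hermite collocation quadrature on all of $\mathbb{R}^{d'}$, writing $f_{\sigma^2,n_0}(\bm{y})=\sum_j f(\bm{y}_j)\mathcal{N}(\bm{y}-\bm{y}_j;\sigma^2 I)w_j=\int\mathcal{I}_{n_0}f\cdot\mathcal{I}_{n_0}\mathcal{N}\,\d\bm{y}'$ and invoking Hermite spectral interpolation estimates to get explicit $O(n_0^{-1/3})$ errors in terms of $|f|_{\text{mix}}$, $|\sqrt{f}|_{\text{mix}}$, $|fy_i^2|_{\text{mix}}$, and $|fy_i^2y_j^2|_{\text{mix}}$; this is precisely why those particular seminorms appear as hypotheses, and it sidesteps your box-truncation/tail-mass bookkeeping since the Hermite nodes and weights already live on the unbounded domain. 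Second, the Wasserstein estimate: you go through the triangle inequality $W_2(f,\tilde f)\le W_2(f,f_{\sigma^2})+W_2(f_{\sigma^2},\tilde f)$ and reduce the second term, via the contraction of $W_2$ under common Gaussian convolution, to the quantization error $W_2(f,\sum_i p_i\delta_{\bm{y}_i})$; the paper instead builds a direct coupling between $f$ and $\tilde f_{\sigma^2,n_0}$ that keeps the overlap $\min(f,\tilde f)$ in place and transports only the residual mass, yielding $W_2^2(f,\tilde f)\le 4\int\|\bm{y}\|^2|f-\tilde f|\,\d\bm{y}$, which is then split over a large compact set (where uniform convergence applies) and its complement (where the second-moment convergence controls the tail), producing the constant $24$ in Eq.~\eqref{thm3_result}. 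Your route is arguably more elementary and modular for the $W_2$ part, but to complete it you would still need to make the quantization bound quantitative (cell-mass discrepancy between $p_i\propto f(\bm{y}_i)\cdot\mathrm{vol}$ and $\int_{\text{cell}}f$, plus the cost of importing the tail mass into the box), and the coupling of $n_0(\sigma)$ to $\sigma$ so that both errors vanish together must be spelled out, just as the paper does with its choice of $n_0(\sigma)$ large enough that $n_0(\sigma)^{-1/3}C_3(\sigma;f)\le\epsilon$.
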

%\deleted{The above theorem still holds if the continuous random variable is defined on $(\mathbb{R}^+)^{d'}$. The problem is that the reconstructed random variable could have a small chance to take negative values.}

%
%Given a random variable $\bm{y}\in\mathbb{R}^{d'}$ with a probability density function $f$ satisfying the assumptions specified in Theorem~\ref{thm3_2}, for any $\epsilon>0$, from 
We prove Theorem~\ref{thm3_2} in \ref{proof_thm3_2}.
For any continuous random variable $\bm{y}\in\mathbb{R}^{d'}$ with a probability density function $f$ satisfying the assumptions in Theorem~\ref{thm3_2}, there exists a random variable $\hat{\bm{y}}$ whose probability density function is the probability density function of a multivariate Gaussian mixture model denoted by $\tilde{f}_{\sigma^2, n_0}$ such that $W_2^2(f, \tilde{f}_{\sigma^2, n_0})\leq \epsilon$. 
%Furthermore, we can regard $\tilde{y}$ as $\tilde{\bm{y}}\equiv \tilde{\bm{y}}_{x=1}$ depending on a constant $x\equiv1$.
\added{From Corollary~\ref{col3_1}, there exists an SNN (Fig.~\ref{fig:nn_model}) which takes the input $x\equiv 1$ such that $W_2^2(\tilde{f}_{\sigma^2, n_0}, \hat{f}_{\bm{x}\equiv 1})\leq \epsilon$, where $\hat{f}_{\bm{x}\equiv 1}$ is the probability density function of the output of the SNN. Thus, we have $W_2^2(f, \hat{f}_{\bm{x}\equiv 1})\leq 4\epsilon$, \textit{i.e.}, the SNN model in Fig.~\ref{fig:nn_model} can approximate the distribution of any continuous random variable in the squared $W_2$ sense under the assumptions specified in Theorem~\ref{thm3_2}. 
From Theorem~\ref{theorem1}, minimizing the local time-decoupled squared $W_2$ loss function Eq.~\eqref{local_define} is necessary to minimize $W_2^2(\mu, \hat{\mu})$, \textit{i.e.}, to match the distribution of model parameters $\theta$ by the distribution of $\hat{\theta}$ in dynamical systems like Eqs.~\eqref{model_equation} and~\eqref{approximate_equation}. Thus, we can train the SNN model in Fig.~\ref{fig:nn_model} by direct minimization of the local time-decoupled squared $W_2$ loss function Eq.~\eqref{local_define}.}

\textbf{Remark:} \added{Consider the more general model $\bm{y}=f(\bm{x}, \theta), \bm{x}\in\mathbb{R}^d, \bm{y}\in\mathbb{R}^{d'}$ as in \cite{xia2024local}, where $\omega$ is a latent random variable in the model (\textit{e.g.} measurement noise). $\bm{y}$ is a continuous random variable whose distribution is determined by the observed continuous variable $\bm{x}\in D\subseteq \mathbb{R}^d$.} $D$ is a bounded set in $\mathbb{R}^d$, and for any sequence of sets $\{D_i\}_{i=1}^{\infty}$, if $D_i\rightarrow D$ as $i\rightarrow\infty$, then $\lim_{i\rightarrow\infty}\gamma(D_i)=\gamma(D)=1$. Additionally, we assume that for any $\Delta x>0$, we can find a set of equidistance grids $\{\bm{x}_i\}_{i=1}^{K}\subseteq D$ such that $D\subseteq \cup_{i=1}^K\otimes_{j=1}^d[x_i^j, x_i^j+\Delta \bm{x})$ and $\otimes_{j=1}^d[x_{i_1}^j, x_{i_1}^j+\Delta \bm{x})\cap \otimes_{j=1}^d[x_{i_2}^j, x_{i_2}^j+\Delta \bm{x})=\emptyset$ if $i_1\neq i_2$. $\omega$ are continuous latent parameters in the model sampled from an unknown distribution. We denote the distribution of $\bm{y}$ given $\bm{x}$ by $f_{\bm{x}}$. Under certain assumptions, for any $\epsilon>0$, there exists an SNN in Fig.~\ref{fig:nn_model} such that:
\begin{equation}
    \int_D W_2^2(f_{\bm{x}}, \hat{f}_{\bm{x}})\gamma(\d\bm{x})\leq \epsilon.
\end{equation}
where $\hat{f}_{\bm{x}}$ is the probability density function of the output of the SNN given the input $\bm{x}$. We give a brief discussion on this ``universal approximation" ability of the SNN model to approximate a family of continuous random variables $\bm{y}=f(\bm{x}, \theta)$ in~\ref{appendix_universal} for all $\bm{x}\in D\subseteq \mathbb{R}^d$, where $\theta$ is latent uncertain model parameters. %Therefore, the ``universal approximation property" of our SNN is more general than some previous ``universal approximation property" of deep neural networks for generating a probability distribution in \cite{lu2020universal}. 
Our SNN
can approximate a family of probability density functions $f(\bm{x}, \theta), \bm{x}\in D$ simultaneously, while in \cite{lu2020universal} only a single probability density function $f(\theta)$ of the unknown variable $\theta$ is to be approximated. Furthermore, our SNN utilizes only seven hidden layers and the number of neurons in each layer scales linearly with the dimensionality of either the input or the output variable. This implies that even with a shallow neural network, we might be able to reconstruct a family of uncertainty models $\bm{y}=f(\bm{x}, \theta)$ characterized by different values of $\bm{x}\in D$.

%Suppose for any $\epsilon>0$, there exists $\sigma_0>0, n_0$ such that:
%\begin{equation}
%    W_2^2(f_{\bm{x}}, \tilde{f}_{\bm{x}, \sigma^2, n_0})\leq \epsilon_1, n>n_0, \sigma<\sigma_0, \forall \bm{x}\in D,
%\end{equation}
%where $\tilde{f}_{\bm{x}, \sigma^2, n_0}$ is the probability density function of a mixture Gaussian distribution we constructed when proving Theorem~\ref{thm3_2} in \ref{proof_thm3_2}. We assume that $\tilde{f}_{\bm{x}, \sigma^2, n_0}$ satisfies the assumptions made in Corollary~\ref{col3_1}. Thus, we can find an SNN whose output has a probability density function $\hat{f}$ such that
%\begin{equation}
%    W_2^2(\hat{f},\tilde{f}_{\bm{x}, \sigma^2, n_0})\leq\epsilon.
%\end{equation}
%Finally, we have:
%\begin{equation}
%    \int_DW_2^2(f_{\bm{x}}, \hat{f}_{\bm{x}})\d\nu(\bm{x})\leq 2\int_DW_2^2(f_{\bm{x}}, f_{\bm{x}, n, \sigma^2})\d\nu(\bm{x}) + 2\int_DW_2^2(f_{\bm{x},n, \sigma^2}, \hat{f}_{\bm{x}})\d\nu(\bm{x})\leq4\epsilon.
%\end{equation}

\section{Numerical results}
\label{section4}
In this section, we conduct numerical experiments to test our proposed local squared $W_2$ method, which involves training the SNN model in Fig.~\ref{fig:nn_model} by minimizing Eq.~\eqref{time_coupling0}, a scaled numerical approximation to the local time-decoupled squared $W_2$ distance loss function Eq.~\eqref{local_define}. The $W_2$ distance between two empirical probability measures is then numerically evaluated using the $\texttt{PoT}$ package of Python in \cite{flamary2021pot}.
A pseudocode of our method is given in Algorithm~\ref{algorithm_1}.

\begin{algorithm}
\footnotesize
\caption{\footnotesize \added{The pseudocode of our local time-decoupled
    squared $W_2$ method for training the SNN (the
    time-decoupled squared $W_2$ loss in the \textbf{while} loop can
    be replaced with other loss functions).}}
\begin{algorithmic}
  \STATE Given $N$ observed time-series data $\{\bm{X}_i(t_j),
    t_j=j\Delta t, j=1,..., N_T\}_{i=1}^N$, the underlying dynamical system with unknown model parameters $\theta$ (such as the jump-diffusion process Eq.~\eqref{model_equation}), the stopping criteria
    $\epsilon>0$, and the maximal epochs $i_{\max}$.
    \STATE Initialize the SNN model in Fig.~\ref{fig:nn_model}.
    \STATE Sample $N$ sets of approximate parameters $\{\hat{\theta}\}_{i=1}^N$ by inputting a scalar $1$ into the SNN and evaluate $N$ times independently.
  \STATE Generate $N$ trajectories $\{\hat{\bm{X}}\}_{i=1}^N$ from the approximate dynamical system (such as
    Eq.~\eqref{approximate_equation}) with the approximate parameters 
    $\{\hat{\theta}\}_{i=1}^N$.
  \WHILE{$\tilde{W}_{2, \delta}^{2, \text{e}}(\bm{X}, \hat{\bm{X}})>\epsilon$ \&\& $i<i_{\max}$ } 
  \STATE Perform gradient descent to minimize the loss function
    $\tilde{W}_{2, \delta}^{2, \text{e}}(\bm{X}, \hat{\bm{X}})$ and update the parameters (biases \& means and variances of weights) in the SNN model.
    \STATE Sample $N$ sets of approximate parameters $\{\hat{\theta}\}_{i=1}^N$ from the updated SNN model by inputting a scalar $1$ into the SNN and evaluate $N$ times independently.
    \STATE Generate $N$
    trajectories from the approximate dynamical system with the approximate parameters
    $\{\hat{\theta}\}_{i=1}^N$.
    \ENDWHILE
  \RETURN the trained SNN model
\end{algorithmic}
\label{algorithm_1}
\end{algorithm}

Specifically, when applying the SNN model in Fig.~\ref{fig:nn_model} for the reconstruction of the distribution for uncertain model parameters, we assume that uncertain model parameters are sampled from the same underlying distribution across all samples and thus always input a scalar $1$ as the input into the SNN. 
Default hyperparameters and training settings are given in \ref{appendix_training}. In the following, errors in the distribution of reconstructed model parameters denote the scaled squared $W_2$ distance:
\begin{equation}
    \text{error}\coloneqq\frac{W_2^2(\mu_{\theta}, \hat{\mu}_{\hat{\theta}})}{\|\theta\|^2},
    \label{relative_error}
\end{equation}
\added{where $\theta$ and $\hat{\theta}$ are unknown ground truth model parameters and reconstructed model parameters, $\textit{e.g.}$ in Eqs.~\eqref{ODE_model} and~\eqref{approximate_ODE}, and $\mu_{\theta}$ and $\hat{\mu}_{\theta}$ are the distributions of $\theta$ and $\hat{\theta}$, respectively.} For numerically solving ODEs in all examples, we use the \texttt{odeint} function with default settings in the \texttt{torchdiffeq} package \cite{chen2018neural}. \added{In Example~\ref{example4}, we use the package developed in \cite{xia2024efficient} for numerically solving a jump-diffusion process, which allows for back-propagation and gradient descent for hyperparameter optimization in the neural networks. The It\^o integral is adopted for evaluating the stochastic integrals.} The numerical experiment in
Example \ref{example1} is conducted using Python 3.11 on a desktop with a 32-core Intel®
i9-13900KF CPU
(when comparing runtimes and RAM usage, we train each model on just one core).
Numerical experiments in Examples \ref{example2}, \ref{example3}, and \ref{example4} are carried out using Python 3.11 on NYU HPC with GPU.

\begin{example}
    \rm
    \label{example1}

First, we consider reconstructing the following 2D ODE characterizing the Lokta-Volterra predator-prey dynamics with one uncertain predation rate parameter:
\begin{equation}
    \begin{aligned}
        &\frac{\d x}{\d t} = 2 x - c xy,\,\,\frac{\d y}{\d t} = \tfrac{1}{4}c xy - 2y,\\
        &\hspace{1cm}(x(0), y(0))=(\xi_1,\xi_2),\,\, c\sim\mathcal{U}(2, 4),\,\,\xi_1, \xi_2\sim\mathcal{U}(1, 2), \,\, t\in[0, 8].
    \end{aligned}
    \label{example1_model}
\end{equation}
In Eq.~\eqref{example1_model}, $c$ is the uncertain predation rate parameter, and $\xi_1, \xi_2$ are two independent random variables. We train the SNN model in Fig.~\ref{fig:nn_model} by minimizing the loss function Eq.~\eqref{time_coupling0} (the neighborhood size $\delta=0.4$) to reconstruct the distribution of the parameter $c$. For comparison, we also minimize other loss functions (definitions given in \ref{appendix_loss}) commonly used in statistical inference tasks to train the SNN model in Fig.~\ref{fig:nn_model}.

\begin{figure}[H]
\centering
\includegraphics[width=0.7\textwidth]{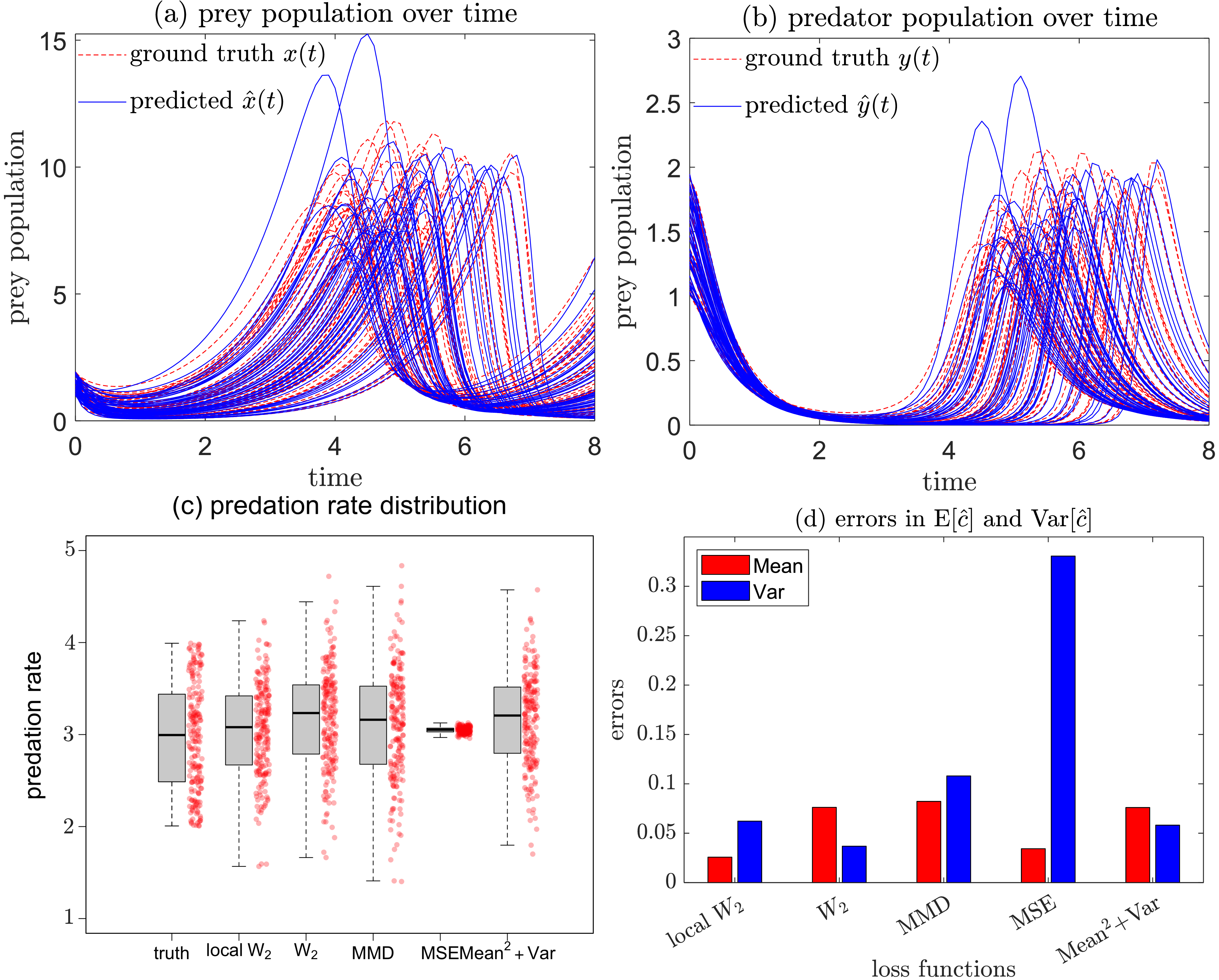}
\caption{\footnotesize(a) Ground truth (red dashed lines) prey population dynamics versus predicted prey population dynamics (blue solid lines) obtained with reconstructed predation rate $\hat{c}$. (b) Ground truth (red) predator population dynamics versus predicted predator population dynamics (blue) obtained with reconstructed predation rate $\hat{c}$. In (a) and (b), for clarity, we plot the first 50 groups of prey and predatory trajectories. \added{Since the predation rate $c$ in Eq.~\eqref{example1_model} is sampled independently for each realization of the model Eq.~\eqref{example1_model}, the ground truth trajectories also form a distribution.} (c) Ground truth $c\sim\mathcal{U}(2, 4)$ versus the distribution of the approximate $\hat{c}$ when minimizing different loss functions. The black horizontal line and the box indicate the median and the interquartile range of the ground truth or predicted predation rate.
(d) Errors in the predicted mean $|\E[\hat{c}] - \E[c]|$ and predicted variance $|\text{Var}[\hat{c}] - \text{Var}[c]|$ when minimizing different loss functions. The errors are their averaged values over 5 independent experiments. In (c) and (d), ``local $W_2$" refers to our local time-decoupled squared $W_2$ loss function Eq.~\eqref{time_coupling0} while ``$W_2$" refers to previous time-decoupled squared $W_2$ loss function in \cite{xia2024squared}.}
\label{fig:example1}
\end{figure}
%{\tiny \begin{table}[h!]
%\scriptsize
%\centering
%\caption{\footnotesize Computational time and memory usage when utilizing different loss functions. Mean and the standard deviation in the runtime and RAM usage over five repeated experiments are recorded.} 
%\vspace{0.2in}
%\begin{tabular}{llllll}
%\toprule {Loss function} & \textbf{Ours (Eq.~\eqref{time_coupling0}}) & time-decoupled $W_2^2$ &
%MMD & MSE & Mean+Var$^2$ \\
%
%\midrule
%
%Runtime (hour)& $\bm{1.89\pm0.35}$ & 2.16$\pm$0.52 & 2.36$\pm$0.47 & 2.22$\pm$0.35 & 2.04$\pm$0.52\\
%
%RAM usage (Mb)& $\bm{737.30\pm3.35}$ & 907.92$\pm$112.45 & 858.03$\pm$7.18 & 720.14$\pm$20.29 & 809.42$\pm$135.37\\
%
%
%\bottomrule
%\end{tabular}
%\label{tab:computational_cost}
%\end{table}}
{\tiny \begin{table}[h!]
\scriptsize
\centering
\caption{\footnotesize Computational time and memory usage when utilizing different loss functions. Mean and the standard deviation in the runtime and RAM usage over five repeated experiments are recorded.} 
\vspace{0.2in}
\begin{tabular}{llllll}
\toprule {Loss function} & \textbf{Ours (Eq.~\eqref{time_coupling0}}) & time-decoupled $W_2^2$ &
MMD & MSE & Mean+Var$^2$ \\
\midrule
Runtime (hour)& $\bm{1.53\pm0.35}$ & 1.57$\pm$0.36 & 1.91$\pm$0.47 & 1.37$\pm$0.46 & 1.73$\pm$0.57\\
RAM usage (Mb)& $\bm{729.8\pm 56.3}$ & 648.4$\pm$108.7 & 834.3$\pm$19.3 & 616.5$\pm$110.3 & 614.3$\pm$108.2\\
\bottomrule
\end{tabular}
\label{tab:computational_cost}
\end{table}}

%{\tiny \begin{table}[h!]
%\scriptsize
%\centering
%\caption{\footnotesize Computational time and memory usage when utilizing different loss functions. Mean and the standard deviation in the runtime and RAM usage over five repeated experiments are recorded.} 
%\vspace{0.2in}
%\begin{tabular}{llllll}
%\toprule {Loss function} & \textbf{Ours (Eq.~\eqref{time_coupling0}}) & time-decoupled $W_2^2$ &
%MMD & MSE & Mean+Var$^2$ \\
%
%\midrule
%
%Runtime (hour)& $\bm{2.27\pm0.35}$ & 2.25$\pm$0.60 & 2.36$\pm$0.47 & 1.75$\pm$0.46 & 2.20$\pm$0.57\\
%
%RAM usage (Mb)& $\bm{834.3\pm 56.3}$ & 689.8$\pm$108.7 & 855.5$\pm$19.3 & 656.5$\pm$110.3 & 644.1$\pm$108.2\\
%

%
%\bottomrule
%\end{tabular}
%\label{tab:computational_cost}
%\end{table}}

\added{From Fig.\ref{fig:example1} (a) and (b), the distribution of ground truth trajectories of the prey and predator population 
can be matched well by the distribution of trajectories generated by numerically solving Eq.~\eqref{example1_model} with $\hat{c}$ sampled from the reconstructed distribution. Furthermore, the distribution of the reconstructed $\hat{c}$ generated by the SNN model trained using our loss function Eq.~\eqref{time_coupling0} aligns well with the distribution of the ground truth predation rate $c$ (shown in Fig.~\ref{fig:example1} (c)). Using the previous time-decoupled squared $W_2$ loss function in \cite{xia2024squared}, the MMD loss function, or the Mean$^2$+Var loss function leads to an inaccurate calculated value of the mean 
in the reconstructed $\hat{c}$. % though the MMD loss function leads to smaller errors in the reconstructed variance. 
Using the MSE as the loss function to train the SNN yields an almost degenerate distribution of $\hat{c}$ and the calculated value of the variance in $\hat{c}$ is inaccurate, indicating that it not suitable to train the SNN model for reconstructing the distribution of unknown parameters.
Using our proposed local time-decoupled squared $W_2$ loss function yields more accurate values of the mean and variance of the reconstructed $\hat{c}$ compared to other methods (shown in Fig.~\ref{fig:example1} (d)). Finally, from Table~\ref{tab:computational_cost}, the runtime and RAM usage when using our proposed local time-decoupled squared $W_2$ loss function is generally comparable to those when using other benchmark loss functions. Thus, using our proposed local time-decoupled squared $W_2$ method is more efficient than using other benchmark statistical loss functions for training the SNN in Fig.~\ref{fig:nn_model} to reconstruct the distribution of the unknown predation rate in Eq.~\eqref{example1_model}.}

\end{example}

Next, we apply our local time-decoupled squared $W_2$ method for the reconstruction of model parameters in a spatiotemporal PDE. 

\deleted{Examples 1 and 2 are both ``hypothetical" to test our method with synthetic data.}
\begin{example}
\rm
\label{example2}
    We consider reconstructing the distributions of parameters in the following parabolic PDE:
\begin{equation}
\begin{aligned}
    &\partial_t u(x, t;c_1, c_2) = \frac{c_1}{c_2^2}\partial_{xx}u(x, t;c_1, c_2) +\frac{c_1}{c_1t+1}u(x, t;c_1, c_2),\,\,(x, t)\in \mathbb{R}\times[0, 2],\\
    &\quad\quad u(x, 0) = (1+\xi)\tfrac{x}{\sqrt{4}} \cdot \exp\big(-\tfrac{x^2}{ 2  }),\\
    &c_1\sim\mathcal{N}(0.5, \sigma_1^2),\,\, c_2=\tilde{\xi} + \beta(0.5-c_1), \,\,\xi\sim\mathcal{N}(0, \sigma_3^2),\,\, \tilde{\xi}\sim\mathcal{N}(1.5, \sigma_2^2).\\
    \end{aligned}
    \label{example2_model}
\end{equation}
We use another parabolic PDE model to approximate Eq.~\eqref{example2_model}:
\begin{equation}
\begin{aligned}
    &\partial_t \hat{u}(x, t;\hat{c}_1, \hat{c}_2) = \frac{\hat{c}_1}{\hat{c}_2^2}\partial_{xx}\hat{u}(x, t;\hat{c}_1, \hat{c}_2) +\frac{\hat{c}_1}{\hat{c}_1t+1}\hat{u}(x, t;\hat{c}_1, \hat{c}_2),\,\,(x, t)\in \mathbb{R}\times[0, 2],\\
    &\quad\quad \hat{u}(x, 0) = u(x, 0). %(1+\xi)\tfrac{x}{\sqrt{4}} \cdot \exp\big(-\tfrac{x^2}{ 2  }),\,\,\xi\sim\mathcal{N}(0, \sigma_3^2).\\
    \end{aligned}
    \label{example2_model_approximate}
\end{equation}
We wish to reconstruct the distribution of $(c_1, c_2)$ in Eq.~\eqref{example2_model} using the distribution of $(\hat{c}_1, \hat{c}_2)$ in the approximate Eq.~\eqref{example2_model_approximate}.
We use a pseudo-spectral method with a spectral expansion in space to solve Eqs.~\eqref{example2_model} and~\eqref{example2_model_approximate} numerically:
\begin{equation}
\begin{aligned}
        u(x, t;\theta)\approx u_{n-1}(x,t;\theta)= \sum_{i=0}^{n-1} u_i(t;\theta)\hat{\mathcal{H}}_i(x),\,\,\theta\coloneqq (c_1, c_2)\\
    \hat{u}(x, t;\hat{\theta})\approx \hat{u}_{n-1}(x,t;\hat{\theta})= \sum_{i=0}^{n-1} \hat{u}_i(t;\hat{\theta})\hat{\mathcal{H}}_i(x),\,\,\hat{\theta}\coloneqq(\hat{c}_1, \hat{c}_2),
\end{aligned}
    \label{spectral_approx}
\end{equation}
where $\hat{\mathcal{H}}_i$ is the generalized Hermite function described in \cite{Shen2010}.
%\textit{i.e.}, we numerically solve the ODE satisfied by $(u_0(t),...,u_{N-1}(t))$ in Eq.~\eqref{spectral_approx}. 

We carry out the following sensitivity tests:
\begin{enumerate}
    \item Vary $\sigma_1, \sigma_2$ which determines the variance in $c_1, c_2$ to investigate how variances in model parameters would affect the reconstruction accuracy of the joint distribution of $(c_1, c_2)$. Other parameters are set as $\beta=1, \sigma_3=0.2, n=12$ and $\delta=0.1$.
    \item Change the value of $\sigma_3$ characterizing the uncertainty in the initial condition and the size of the neighborhood $\delta$ in our loss function Eq.~\eqref{time_coupling0} to investigate how uncertainty in the initial condition and the hyperparameter $\delta$ affect the reconstruction of the distribution $(c_1, c_2)$. Other parameters are $\beta=1, \sigma_1=0.1,\sigma_2=0.2, n=12$. 
    \item Vary $\beta$ which determines the correlation between $c_1$ and $c_2$ as well as the expansion order $N$ in the spectral approximation in Eq.~\eqref{spectral_approx} to explore how the correlation between $c_1$ and $c_2$ and the dimensionality of the discretized ODE affect the reconstruction accuracy of $(c_1, c_2)$. Other parameters are $\sigma_1=0.1,\sigma_2=0.2, \sigma_3 = 0.2, \delta=0.1$.
\end{enumerate}

\begin{figure}[H]
\centering
\includegraphics[width=\textwidth]{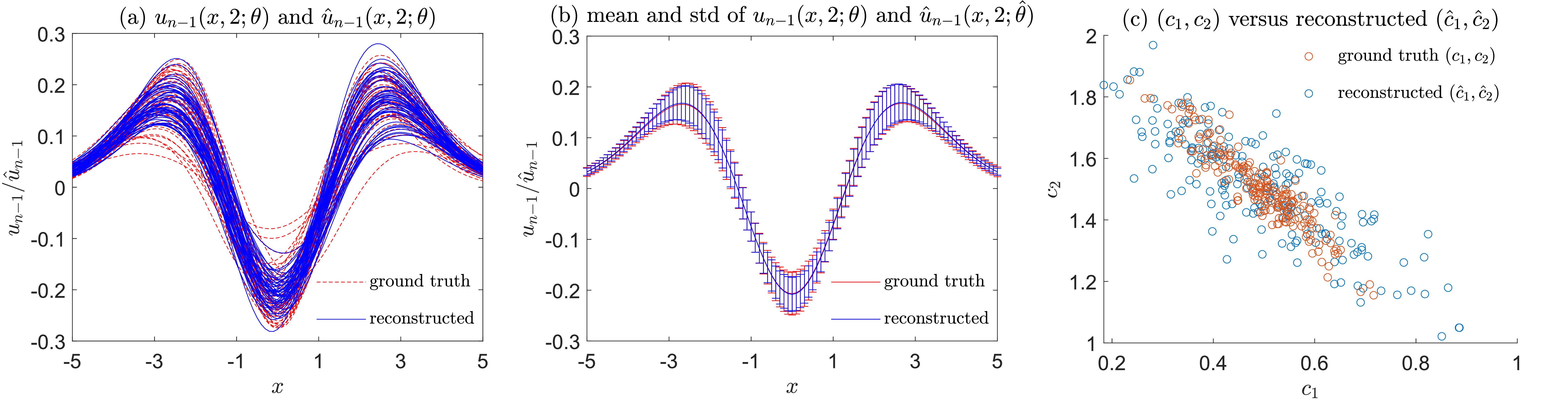}
\includegraphics[width=\textwidth]{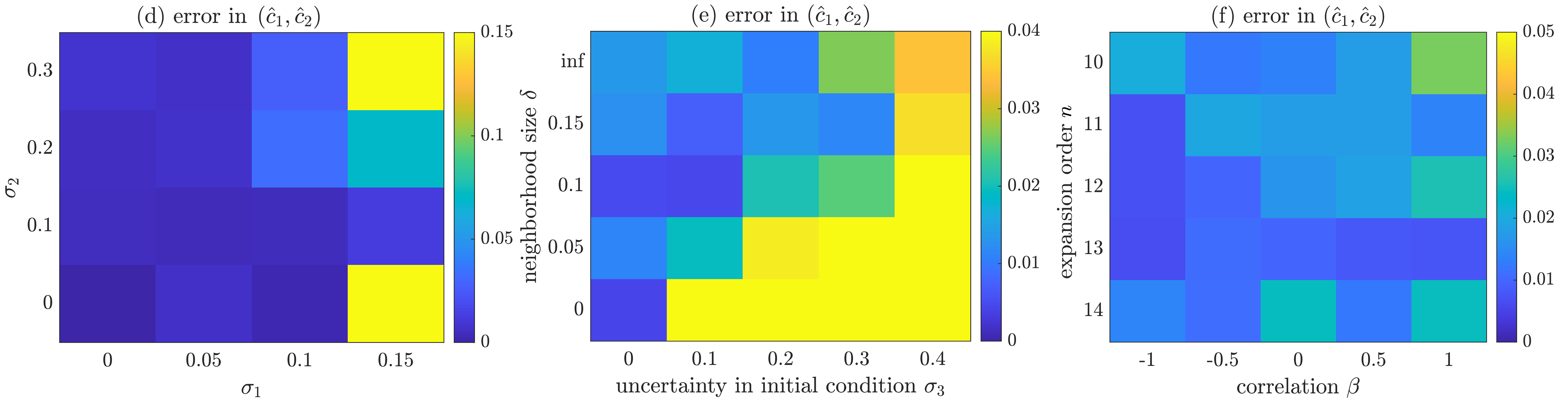}
\caption{\footnotesize(a) Ground truth $u_{n-1}(x, 2;\theta)$ (red dashed lines) versus reconstructed $\hat{u}_{n-1}(x, 2;\hat{\theta})$ (blue solid lines). For clarity, we only plot 50 ground truth $u_{n-1}(x,2;\theta)$ numerical solutions versus 50 approximate numerical solutions $\hat{u}_{n-1}(x,2;\hat{\theta})$ in Eq.~\eqref{spectral_approx}. (b) Mean and standard deviations of the ground truth $u_{N-1}(x, 2)$ versus reconstructed $\hat{u}_{n-1}(x, 2)$. (c) The ground truth $(c_1, c_2)$ versus reconstructed $(\hat{c}_1, \hat{c}_2)$ when $\beta=1, \sigma_1=0.15, \sigma_2=0.1, n=12$. In (a), (b), and (c), the parameters are $n=12$ and $\beta=1, \sigma_1=0.15, \sigma_2=0.1, \sigma_3=0.2, N=12$ and $\delta=0.1$ in the loss function Eq.~\eqref{time_coupling0}. (d) Errors in $(\hat{c}_1, \hat{c}_2)$ w.r.t. different variances $\sigma_1, \sigma_2$ for $(c_1, c_2)$ in Eq.~\eqref{example2_model} (Case 1 on Page 23).  (e) Errors in $(\hat{c}_1, \hat{c}_2)$ w.r.t. different values of the variance $\sigma_3$ in the initial condition $\epsilon$ and different $\delta$. $\delta=\text{inf}$ indicates that we set $\delta=\infty$, which corresponds to the time-decoupled squared $W_2$ loss function in \cite{xia2024efficient} (Case 2 on Page 23).  (f) Errors in $(\hat{c}_1, \hat{c}_2)$ w.r.t. different values $N$ and $c_0$ (Case 3 on Page 23).}
\label{fig:example2}
\end{figure}

We plot the reconstructed numerical solution $\hat{u}_{n-1}(x,t; \hat{\theta})$ versus the ground truth numerical solution $u_{n-1}(x, t;\theta)$ in Eq.~\eqref{spectral_approx} as the numerical solution to Eq.~\eqref{example2_model} in Fig.~\ref{fig:example2} (a), (b). The distribution of ground truth numerical solutions can be matched well by the distribution of the numerical solutions generated with the reconstructed $(\hat{c}_1, \hat{c}_2)$, shown in Fig.~\ref{fig:example2} (c).
From Fig.~\ref{fig:example2} (d), the larger the variance in $c_1$, the larger the errors in the predicted distribution of $(\hat{c}_1, \hat{c}_2)$. When the variance $c_1$ is too large, we might have close-to-zero or even negative $c_1$ in Eq.~\eqref{example2_model}, which makes numerically solving Eq.~\eqref{example2_model} ill-posed and yields a poor reconstruction of the joint distribution of $(c_1, c_2)$. \added{Thus, it is reasonable to reconstruct the distribution of model parameters for well-posed dynamical systems instead of ill-posed dynamical systems.} From Fig.~\ref{fig:example2} (e), when the variance $\sigma_3^2$ in the initial condition is small and the initial conditions are more densely distributed, a smaller $\delta$ leads to a more accurate reconstruction of $(c_1, c_2)$. Thus, it is necessary to consider uncertainties in the initial condition of an ODE or PDE for the accurate reconstruction of unknown model parameters and find the size of the neighborhood $\delta$ that is compatible with uncertainty in the initial state.
Finally, from  Fig.~\ref{fig:example2} (f), the error in the reconstructed distribution of $(\hat{c}_1, \hat{c}_2)$ is \added{independent of the} dimensionality $n$ of the discretized ODE system.
%, which corresponds to the result of Theorem~\ref{theorem1}, which states that the generalization error bound of using the local temporally decoupled squared $W_2$ loss function depends on the dimensionality of the parameter $\theta$ instead of on the dimensionality of $\bm{X}(t)$. 
Furthermore, the accuracy of the reconstructed $(\hat{c}_1, \hat{c}_2)$ is insensitive to the correlation between the two uncertain model parameters $(c_1, c_2)$, indicating the robustness of our proposed local temporally decoupled squared $W_2$ method for reconstructing the distribution of $(c_1, c_2)$ in Eq.~\eqref{example2_model}.

\end{example}

\added{As an application of our proposed method for parameter inference problems in biophysics, we reconstruct reaction rates in an 8D ODE system characterizing drug dynamics in an ocular model studied in \cite{crawshaw2024modelling}.}

\begin{example}
\rm
\label{example3}

%We consider a semi-mechanistic two-compartment model of ocular vascular endothelial growth factor (VEGF) suppression following a
%single IVT ranibizumab treatment that was studied in \cite{hutton2016mechanistic, crawshaw2024modelling}. 
\added{We consider an ODE system which arises from modeling studies of monoclonal antibodies (MABs) injected into the vitreous gel of the eye in the treatment of wet age-related macular degeneration. The monoclonal antibodies bind to the vascular endothelial growth factor (VEGF) with this binding inhibiting the latter’s stimulation of pathological capillary growth through the retina \cite{chappelow2008neovascular, mitchell2018age}. However, injected MABs are cleared from the eye, by passing into the aqueous
compartment at the front of the eye. This in turn necessitates multiple
injections of MABs into the eye. Hence, analyzing how long MABs are retained within the eye and the prospect of reducing injection frequency has
motivated numerous modeling studies (e.g. \cite{caruso2019ocular, lamirande2024first, crawshaw2024modelling, hutton2016mechanistic, hutton2018theoretical}). However, a
common theme within these studies is the need to estimate the parameters \cite{ mitchell2018age, hutton2016mechanistic, crawshaw2024modelling}.}

\added{Hence, we consider a simple well-mixed model in the literature for the
MAB ranibizumab \cite{crawshaw2024modelling, hutton2016mechanistic}, with its concentration in the
vitreous of the eye denoted by $r_{\text{vit}}$, and the vitreous VEGF concentration
denoted by $v_{\text{vit}}$. A complex of one MAB bound to one VEGF in the vitreous
has concentration $c_{\text{vit}}$ and, noting that VEGF has two binding sites, a
complex of 2 MABs and a VEGF is also considered, with concentration $h_{\text{vit}}$.
The formation and disassociation of these species is represented by the law
of mass action, with the reaction rates labeled by ``$k$", and all species can pass
into the aqueous at the front of the eye, with concentrations in this
compartment then distinguished by the subscript ``$\text{aq}$" ($r_{\text{aq}}, v_{\text{aq}}, c_{\text{aq}}, h_{\text{aq}}$). Once in this latter
compartment, all species are taken to pass into the bloodstream via
Schlemm’s canal, with a clearance rate $CL$. In addition, there is a flux,  $V_{\text{in}}$, of vitreal VEGF to reflect the rate of elevated levels of VEGF leaking into
the vitreous in pathology. This generates the set of eight ODEs:}

\textbf{Vitreous}
\begin{equation}
    \begin{aligned}
        &\frac{\d v_{\text{vit}}}{\d t} = (k_{\text{off}} c_{\text{vit}} - 2k_{\text{on}} v_{\text{vit}} r_{\text{vit}}) - k_v^{\text{el}} v_{\text{vit}} + \frac{V_{\text{in}}}{V_{\text{vit}}},\\
        &\frac{\d r_{\text{vit}}}{\d t} = (k_{\text{off}} c_{\text{vit}} - 2k_{\text{on}} v_{\text{vit}} r_{\text{vit}}) + (2k_{\text{off}} h_{\text{vit}} - k_{\text{on}} r_{\text{vit}} c_{\text{vit}}) - k_r^{\text{el}} r_{\text{vit}},\\
        &\frac{\d c_{\text{vit}}}{\d t} = -(k_{\text{off}} c_{\text{vit}} - 2k_{\text{on}} v_{\text{vit}} r_{\text{vit}}) + (2k_{\text{off}} h_{\text{vit}} - k_{\text{on}} r_{\text{vit}} c_{\text{vit}}) - k_c^{\text{el}} c_{\text{vit}},\\
        &\frac{\d h_{\text{vit}}}{\d t} = -(2k_{\text{off}} h_{\text{vit}} - k_{\text{on}} r_{\text{vit}} c_{\text{vit}}) - k_h^{\text{el}} h_{\text{vit}}.
    \end{aligned}
    \label{vitreous}
\end{equation}

\textbf{Aqueous}
\begin{equation}
    \begin{aligned}
        &\frac{\d v_{\text{aq}}}{\d t} = (k_{\text{off}} c_{\text{aq}} - 2k_{\text{on}} v_{\text{aq}} r_{\text{aq}}) + \frac{V_{\text{vit}}}{V_{\text{aq}}} k_v^{\text{el}} v_{\text{vit}} - \frac{CL}{V_{\text{aq}}} v_{\text{aq}},\\
        &\frac{\d r_{\text{aq}}}{\d t} = (k_{\text{off}} c_{\text{aq}} - 2k_{\text{on}} v_{\text{aq}} r_{\text{aq}}) + (2k_{\text{off}} h_{\text{aq}} - k_{\text{on}} r_{\text{aq}} c_{\text{aq}}) + \frac{V_{\text{vit}}}{V_{\text{aq}}} k_r^{\text{el}} r_{\text{vit}} - \frac{CL}{V_{\text{aq}}} r_{\text{aq}},\\
        &\frac{\d c_{\text{aq}}}{\d t} = -(k_{\text{off}} c_{\text{aq}} - 2k_{\text{on}} v_{\text{aq}} r_{\text{aq}}) + (2k_{\text{off}} h_{\text{aq}} - k_{\text{on}} r_{\text{aq}} c_{\text{aq}}) + \frac{V_{\text{vit}}}{V_{\text{aq}}} k_c^{\text{el}} c_{\text{vit}} - \frac{CL}{V_{\text{aq}}} c_{\text{aq}},\\
        &\frac{\d h_{\text{aq}}}{\d t} = -(2k_{\text{off}} h_{\text{aq}} - k_{\text{on}} r_{\text{aq}} c_{\text{aq}}) + \frac{V_{\text{vit}}}{V_{\text{aq}}} k_h^{\text{el}} h_{\text{vit}} - \frac{CL}{V_{\text{aq}}} h_{\text{aq}}, \,\, t\in[0, 2].
    \end{aligned}
    \label{aqueous}
\end{equation}

\deleted{In Eqs.~\eqref{vitreous} and~\eqref{aqueous}, $v_{\text{vit,aq}}$, $r_{\text{vit,aq}}$, $c_{\text{vit,aq}}$, and $h_{\text{vit,aq}}$ denote the concentrations of free VEGF, unbound ranibizumab, the VEGF-ranibizumab complex, and the ranibizumab-VEGF-ranibizumab complex in the vitreous (vit) and the aqueous (aq) humor, respectively.} 

From \cite{crawshaw2024modelling}, we set the vitreous and aqueous humor volumes  $V_{\text{vit}}=2.05\text{mL}$
and $V_{\text{aq}}=0.105\text{mL}$, respectively, and   $V_{\text{in}}=5.408\text{pmol}\cdot\text{day}^{-1}$. \added{We aim to reconstruct the distribution of the seven reaction rate parameters: $k_{\text{off}}, k_{\text{on}}, k_v^{\text{el}}, k_r^{\text{el}}, k_c^{\text{el}}$,  $k_h^{\text{el}}$ and $CL$ in Eqs.~\eqref{vitreous} and~\eqref{aqueous}, which are subject to uncertainties in the drug properties \cite{mitchell2018age, crawshaw2024modelling, hutton2016mechanistic}.} 

\added{We generate a synthetic data set of model parameters by sampling the seven kinetic parameters: $\bm{k}\coloneqq(k_{\text{off}}, k_{\text{on}}, k_v^{\text{el}}, k_r^{\text{el}}, k_c^{\text{el}}, k_h^{\text{el}}, CL)$ from the following model:}
\begin{equation}
    \bm{k} = \bm{k}_0 + c\bm{k}_0 * A\tilde{\bm{k}},
    \label{kinetic_model}
\end{equation}
where 
\begin{equation}
\begin{aligned}
&    \bm{k}_0 = (1.669\text{day}^{-1}, 0.00114\text{pM}^{-1}\cdot\text{day}^{-1}, 0.575\text{day}^{-1}, 0.293\text{day}^{-1},\\
&\hspace{2cm} 0.259\text{day}^{-1}, 0.176\text{day}^{-1}, 2.505 \text{mL}\cdot\text{day}^{-1})
    \end{aligned}
\end{equation} 
is the vector of mean values of those kinetic parameters used in \cite{crawshaw2024modelling}. In Eq.~\eqref{kinetic_model}, 
$*$ is the Hadamard componentwise product. $A\in\mathbb{R}^{7\times7}$ is a randomly generated matrix whose components are sampled independently from the distribution $\mathcal{U}(-\frac{1}{2}, \frac{1}{2})$. In Eq.~\eqref{kinetic_model}, $\tilde{\bm{k}}\coloneqq (k_1,k_2, ..., k_7)$ is independently generated for each trajectory in the training dataset. Omitting the units for simplicity, we sample $k_1, k_2\sim\mathcal{U}(0, 1)$, $k_3, k_4\sim\mathcal{N}(0, 0.5^2)$, $k_5\sim \text{Exp}(2)$, $k_6 \sim \text{B}(2, 5)$ (the Beta distribution with shape parameters $\alpha=2, \beta=5$), and $k_7\sim \Gamma(2, 2)$ (the Gamma distribution with a shape parameter $\alpha=2$ and scale parameter $\lambda=2$). The initial condition of each trajectory is independently sampled from $\mathcal{N}(\bm{I}_7, 0.05^2 I_{7\times7})$, where $\bm{I}_7\in\mathbb{R}^7$ refers to a constant vector whose components are 1. 

%We independently sample 100 sets of the model parameters $k_{\text{off}}, k_{\text{on}}, k_v^{\text{el}}, k_r^{\text{el}}, k_c^{\text{el}}$,  $k_h^{\text{el}},  V_{\text{in}}, V_{\text{vit}}, CL, V_{\text{aq}}$ from a given distribution. For each set of model parameters, I generate synthetic data (temporal trajectories of $v_{\text{vit}}, r_{\text{vit}}, c_{\text{vit}}, h_{\text{vit}}$ and $v_{\text{aq}}, r_{\text{aq}}, c_{\text{aq}}, h_{\text{aq}}$ in 
%Eqs.~\eqref{vitreous} and ~\eqref{aqueous}) corresponding to different sets of the ten model parameters.  
%I reconstruct the distribution of the ten model parameters from temporal trajectories of $v_{\text{vit}}, r_{\text{vit}}, c_{\text{vit}}, h_{\text{vit}}$ and $v_{\text{aq}}, r_{\text{aq}}, c_{\text{aq}}, h_{\text{aq}}$. I find that 
%some parameters seem to play a key role in the reconstruction of all model parameters. Given the distribution of the parameter $V_{\text{aq}}$ as prior information, my method can reconstruct the joint distribution of other model parameters well (see Fig.~\ref{ocular}).

\begin{figure}[h!]
\centering
\includegraphics[width=\linewidth]{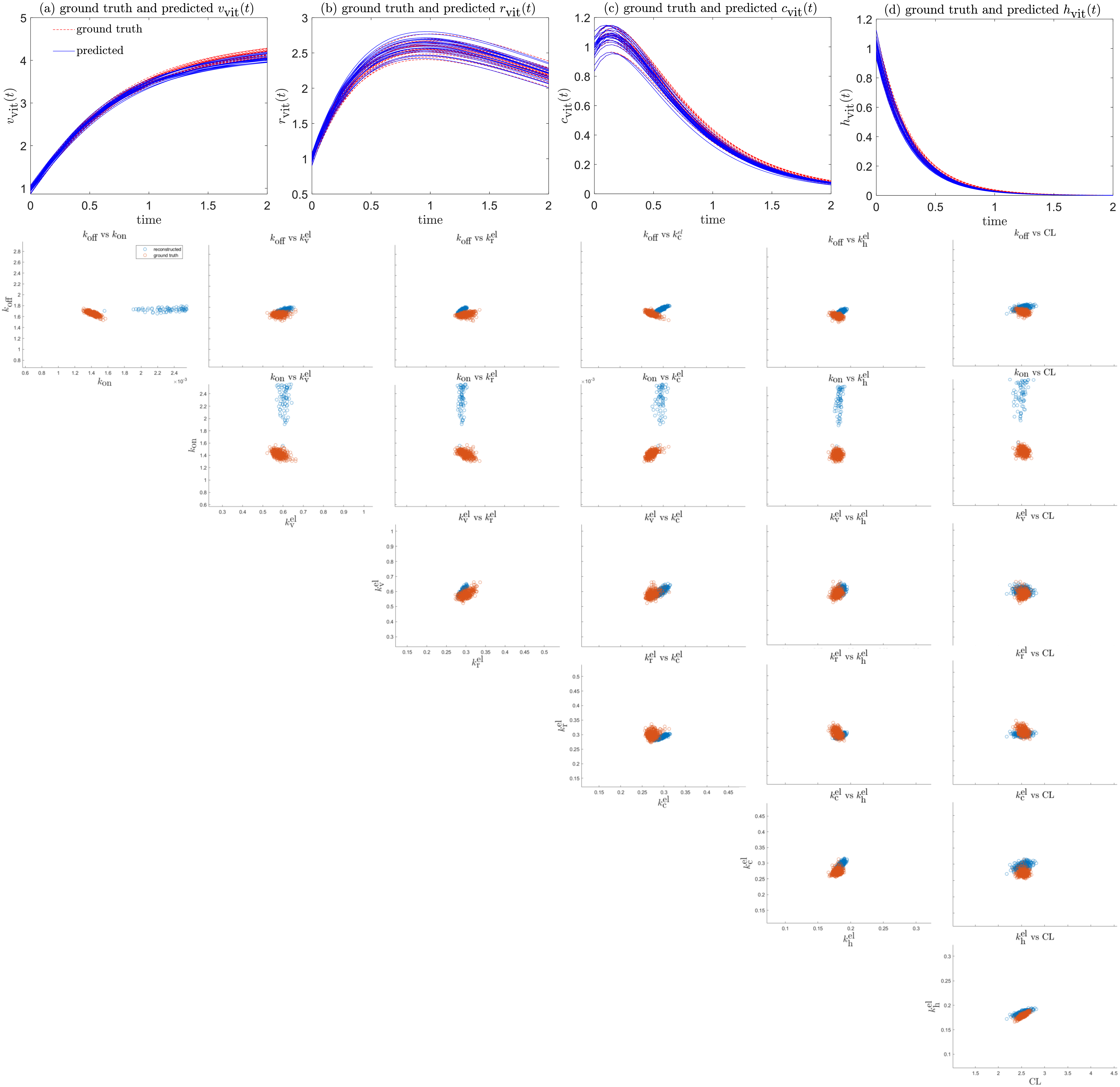}
\vspace{-0.2in}
\caption{\footnotesize(a)-(d) The first 50 out of 400 trajectories of the four quantities $v_{\text{vit}}(t), r_{\text{vit}}(t), c_{\text{vit}}(t), h_{\text{vit}}(t)$ obtained with the parameter vector $\bm{k}$ sampled from the ground truth distribution Eq.~\eqref{kinetic_model} (red dashed lines) versus trajectories obtained by using the parameter vector $\bm{k}$ sampled from the reconstructed distribution generated by the trained SNN (blue solid lines). The SNN has 3 hidden layers and 10 neurons in each layer. ResNet is used for forward propagation, and the nodes and weights are initialized by independently sampling from $\mathcal{N}(0, 0.03^2)$.
(e) The reconstructed joint distribution of any two kinetic parameters in Eq.~\eqref{kinetic_model}. In all subplots, the red dots are sampled from the ground truth joint distribution while the blue dots are sampled from the reconstructed distribution.}
\label{fig:ocular}
\end{figure}

The distribution of trajectories of the four quantities $v_{\text{vit}}(t), r_{\text{vit}}(t), c_{\text{vit}}(t)$ and $h_{\text{vit}}(t)$ obtained with the parameter vector $\bm{k}$ sampled from the ground truth distribution Eq.~\eqref{kinetic_model} can be matched well by the distribution of trajectories obtained by using the parameter vector $\bm{k}$ sampled from the reconstructed distribution generated by the trained SNN in Fig.~\ref{fig:ocular} (a)-(d). Additionally, we plot the empirical joint distribution of any two parameters in $\bm{k}$ versus the empirical joint reconstructed distribution of the corresponding two variables in Fig.~\ref{fig:ocular} (e), and most pairwise joint distributions \added{of any two components in $\bm{k}$} can be matched well by their reconstructed counterparts. However, the reconstruction of $k_{\text{on}}$ is not accurate. A possible reason could be that the magnitude of $k_{\text{on}}$ ($O(10^{-3})$) is much smaller than that of other parameters ($O(1)$), making it harder to reconstruct the distribution of $k_{\text{on}}$. Another possibility is a lack of practical
identifiability for these parameters. \added{Since only observed trajectories are available, 
the reconstruction of the distribution of model parameters using our method signifies a ``worst scenario" with no prior information on the magnitudes of model parameters.} 
\deleted{Thus, it would be worth further investigation to explore if inputting prior knowledge on the model parameters could help more accurately reconstruct the joint distribution of model parameters that span across different magnitudes.}

We also investigate how the number of neurons in each layer, the number of hidden layers in the SNN model (Fig.~\ref{fig:nn_model}), the initialization for the distribution of weights in the SNN model, as well as whether adopting the ResNet technique \cite{he2016deep} for forward propagation would affect the accuracy of the reconstructed distribution of the kinetic parameters. From Table.~\ref{tab:nn_structure}, SNNs with more than one hidden layers and 10 neurons in each layer can all reconstruct the distribution of $\bm{k}$ in Eq.~\eqref{kinetic_model} well. Thus, we do not need a wide or deep SNN model in Fig.~\ref{fig:nn_model} to reconstruct the distribution of $\bm{k}$.
Also, applying the ResNet technique can moderately increase the reconstruction accuracy when the number of hidden layers increases. Finally, initializing the weights and biases to small nonzero values is important, as the reconstruction error is huge if we set all weights and biases to zero. It is worth further investigation on how to optimally design the structure of the SNN model in Fig.~\ref{fig:nn_model}, \added{which is beyond the scope of this paper}. 

\begin{table}[h!]
\scriptsize
\centering
  \caption{Errors in the reconstructed distribution of kinetic parameters in the ocular pharmacokinetic model. Here, ReLU refers to using the ReLU activation function for forward propagation and ResNet refers to using the ResNet technique (described in Fig.~\ref{fig:nn_model}).}
\begin{tabular}{ccccc}
\toprule  width & \# of layers & forward propagation & initialization for weights \& biases & error   \\ 
\midrule 
  10 & 1 & ReLU & $\mathcal{N}(0, 0.03^2)$&  $8.73\times10^{-3}$
 \\ 
  10 & 2 & ReLU & $\mathcal{N}(0, 0.03^2)$& $1.46\times10^{-3}$
  \\ 
10 & 3 & ReLU & $\mathcal{N}(0, 0.03^2)$& $1.13\times10^{-3}$  \\ 
10 & 4 & ReLU & $\mathcal{N}(0, 0.03^2)$& $1.27\times10^{-3}$  \\ 
5 & 3 & ReLU & $\mathcal{N}(0, 0.03^2)$& $1.05\times10^{-3}$ \\ 
15 & 3 & ReLU & $\mathcal{N}(0, 0.03^2)$& $8.50\times10^{-4}$  \\ 
 20 & 3 & ReLU & $\mathcal{N}(0, 0.03^2)$ & $1.15\times10^{-3}$  \\
   10 & 2 & ResNet & $\mathcal{N}(0, 0.03^2)$& $7.68\times10^{-3}$  \\ 
10 & 3 & ResNet & $\mathcal{N}(0, 0.03^2)$& $9.23\times10^{-4}$  \\ 
10 & 4 & ResNet & $\mathcal{N}(0, 0.03^2)$& $9.49\times10^{-4}$ \\ 
 10 & 3 & ReLU & $\mathcal{N}(0, 0)$& 0.609
 \\ 
  10 & 3 & ReLU & $\mathcal{N}(0, 0.01^2)$& $7.89\times10^{-4}$
 \\ 
  10 & 3 & ReLU & $\mathcal{N}(0, 0.02^2)$& $7.54\times10^{-4}$
 \\ 
\bottomrule
\end{tabular}
\label{tab:nn_structure}
\end{table}

\deleted{ResNet is generally used in deep neural networks, and we are not using a deep NN.}
\end{example}

Finally, we consider reconstructing the distribution of parameters of a jump-diffusion process, in which intrinsic stochasticity from the Wiener process and Poisson process as well as uncertainty in the initial condition of the jump-diffusion process are coupled with uncertainty in the parameters governing the jump-diffusion processes.

\begin{example}
\label{example4}
\rm
 We reconstruct the distribution of uncertain parameters governing a jump-diffusion process that can be used to describe the
posited stock returns \cite{xia2024efficient, merton1976option}. Instead of considering a deterministic jump magnitude function as in \cite[Example 2]{xia2024efficient}, we consider the following jump-diffusion process:
%jump magnitude function that was governed by a random variable $\xi$:
\begin{equation}
\begin{aligned}
  &\d X_t = 0.05 \d t + s\sqrt{|X_t|} \d B_t
  + \int_{U} \xi X_t\d \tilde{N}(\gamma(\d\xi)\d{t}),\,\,\,  t\in[0, 2],\\
  &\quad\quad s\sim \sigma_0\mathcal{N}(1, 1),\,\, \xi\sim\mathcal{N}(\beta_0, \sigma_1^2),\,\, X_0\sim \mathcal{N}(2, \sigma_2^2).
  \end{aligned}
 \label{example4_model}
\end{equation}
\added{In Eq.~\eqref{example4_model}, the drift term represents the risk-free interest rate, the diffusion term stands for the fluctuation in the stock price, and the jump term represents events of paying dividends.}
In Eq.~\eqref{example4_model}, $\tilde{N}$ is a compensated Poisson process defined in Eq.~\eqref{compensated_poisson}. We aim at reconstructing the distributions of $s$ as well as $\xi$ using two separate SNNs using the distribution of $\hat{s}_0$ and $\hat{\xi}$ in the following approximate jump-diffusion process:
\begin{equation}
\begin{aligned}
  &\d \hat{X}_t = 0.05 \d t + \hat{s}\sqrt{|\hat{X}_t|} \d B_t
  + \int_{U} \hat{\xi} \hat{X}_t\d \hat{N}(\gamma(\d\xi)\d{t}),\,\, \hat{X}_0=X_0, \,\,  t\in[0, 2].
  \end{aligned}
 \label{example4_model_approximate}
\end{equation}
In Eq.~\eqref{example4_model_approximate}, $\hat{N}$ is another compensated Poisson process that is independent of $\tilde{N}$.
\added{It has been revealed in \cite{xia2024efficient} that when $\xi\equiv1$, larger values of $s$ and $\beta_0$ make it more difficult to reconstruct the jump-diffusion process \textit{i.e.}, relative errors in the learned diffusion and jump functions get larger} because the trajectories are more sparsely distributed.
Here, we carry out further experiments on the following cases:
\begin{enumerate}
    \item Change the values of $\beta_0$ as well as the value of $\sigma_1$ in Eq.~\eqref{example4_model} to explore how the mean and variance of the jump magnitude function affect the reconstruction of distributions of $|s|$ and $\xi$ (here we reconstruct the distribution of $|s|$ because $\hat{s}\sqrt{|\hat{X}_t|} \d B_t$ and $|\hat{s}|\sqrt{|\hat{X}_t|} \d B_t$ are identically distributed). Other parameters are $\sigma_0=0.3, \sigma_2=0.1$, and $\delta=0.1$ (the neighborhood size in the loss function Eq.~\eqref{time_coupling0}).
    \item Vary the value of $\sigma_0$ as well as the value of $\sigma_1$ to investigate how the uncertainty in the diffusion function and uncertainty in the jump magnitude affect the reconstruction of distributions of $|s|$ and $\xi$. Other parameters are $\beta_0=0.3, \sigma_2=0.1$, and $\delta=0.1$. 
    \item Vary the value of $\sigma_1$ and $\sigma_2$ to determine how uncertainty in the initial condition and jump magnitude affects the reconstruction of $|s|$ and $\xi$. Other parameters are $\sigma_0=0.3, \beta_0=0.3, \sigma_2=0.1$, and $\delta=0.1$.
\end{enumerate}

\begin{figure}[H]
\centering
\includegraphics[width=\textwidth]{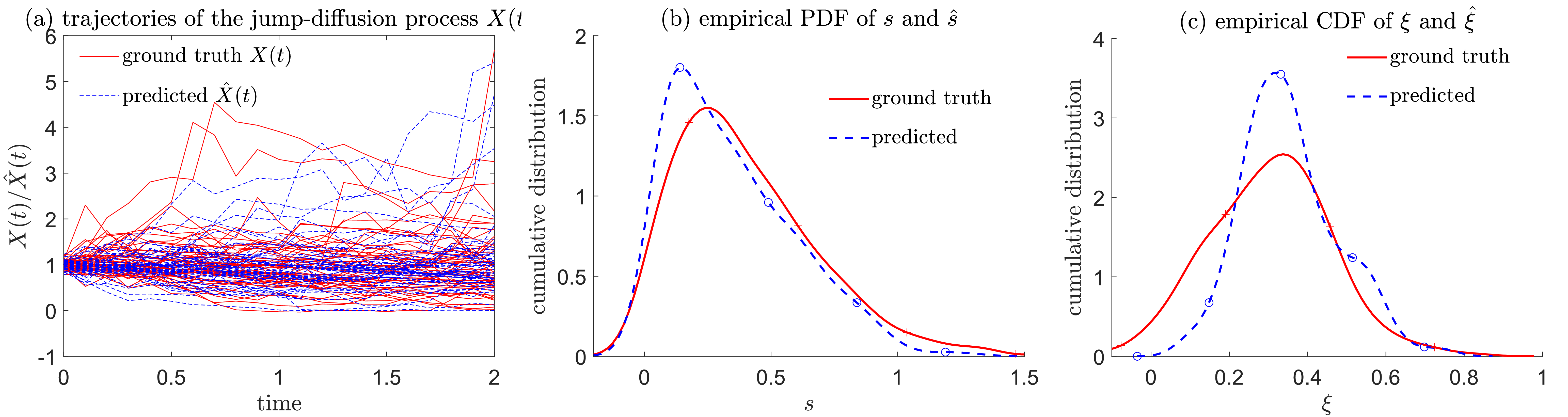}
\includegraphics[width=\textwidth]{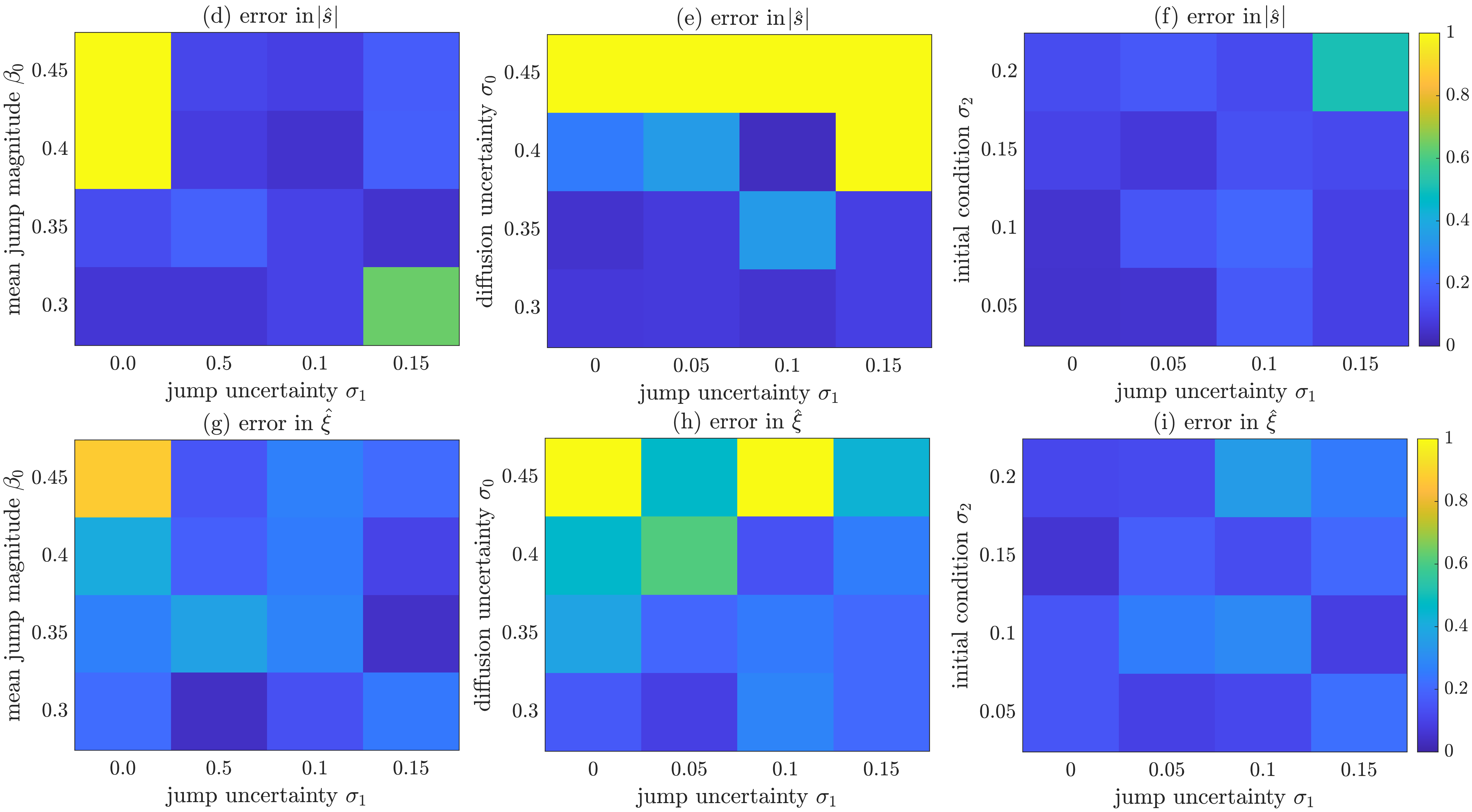}
\caption{\footnotesize(a) Ground truth trajectories generated from Eq.~\eqref{example4_model} versus the reconstructed trajectories generated from the approximate Eq.~\eqref{example4_model_approximate}. For clarity, we plot 50 ground truth trajectories and 50 reconstructed trajectories. (b) The empirical probability density function ground truth $|s|$ versus the empirical probability density function of the reconstructed $|\hat{s}|$ in Eqs.~\eqref{example4_model} and ~\eqref{example4_model_approximate}. (c) The empirical probability density function ground truth $\xi$ versus the empirical probability density function of the reconstructed $|\hat{\xi}|$ in Eqs.~\eqref{example4_model} and ~\eqref{example4_model_approximate}. In (a)-(c), $\sigma_0=0.3, \beta_0 = 0.35, \sigma_1=0.15, \sigma_2=0.1$, and $\delta=0.1$ in the loss function Eq.~\eqref{time_coupling0}. (d) and (g) The errors in the reconstructed distribution of $\hat{\sigma}$ and $\xi$ for case 1 on Page 30, respectively. 
(e) and (h) The errors in the reconstructed distribution of $\hat{\sigma}$ and $\xi$ for case 2 on Page 30, respectively. (f) and (i) The errors in the reconstructed distribution of $\hat{\sigma}$ and $\xi$ for case 3 on Page 30, respectively. }
\label{fig:example4}
\end{figure}

\added{From Fig.~\ref{fig:example4} (a), the distribution of reconstructed trajectories of the approximate jump-diffusion process Eq.~\eqref{example4_model_approximate} match well with the distribution of ground truth jump-diffusion trajectories at each time $t$. Also, from  Fig.~\ref{fig:example4} (b), (c), the probability density functions of $|s|$ and $\xi$ can be matched well by the probability density functions of $|\hat{s}|$ and $\hat{\xi}$ in Eq.~\eqref{example4_model_approximate}, respectively.} From Fig.~\ref{fig:example4} (d), (g), when the mean and variance of $\xi$ in the jump function of Eq.~\eqref{example4_model} become large, the reconstruction of the distributions of $s$ and $\xi$ is also less accurate. The increase in the mean $\beta_0$ of the jump magnitude impacts the reconstruction accuracy more than the increase in the variance $\sigma_1$ does. From Fig.~\ref{fig:example4} (e), (h), a larger $\sigma_0$ characterizing greater uncertainty in the diffusion function of the jump-diffusion process Eq.~\eqref{example4_model} also leads to less accurate reconstructions of both $s$ and $\xi$. Finally, from 
Fig.~\ref{fig:example4} (f), (i), choosing a neighborhood size $\delta=0.1$ works well for $\sigma_2\in[0.05, 0.2]$ characterizing different levels of uncertainty in the initial condition and errors in both $|\hat{s}|$ and $\hat{\xi}$ are well controlled. 
%We can check again that when the uncertainty in $\xi$ $\sigma_1$ increases, the reconstruction of the distribution of $\xi$ becomes more difficult.

%Note that similar to \cite{xia2024efficient}, 

\added{When the form of the diffusion function in the jump-diffusion process in Eq.~\eqref{example4_model} is unknown but the diffusion function itself is deterministic, we can use a deterministic parameterized neural network to reconstruct the diffusion function as was done in \cite{xia2024efficient} while simultaneously using an SNN to reconstruct the distribution of $\xi$ in the jump function of Eq.~\eqref{example4_model}. When the diffusion function is deterministic, the reconstruction accuracy of both the diffusion function and the distribution of $\xi$ in Eq.~\eqref{example4_model} is good. The results are shown in \ref{example4_appendix}.} % Therefore, inputting prior knowledge, such as the forms of the diffusion function and the jump magnitude function in Eq.~\eqref{example4_model}, is indeed necessary for the accurate reconstruction of the underlying jump-diffusion process or uncertain model parameters in the diffusion and jump functions. 

%no explicit form; direct approximation
\end{example}

 \section{Summary and conclusions}
 \label{summary}

In this paper, we proposed and analyzed a local time-decoupled squared $W_2$ distance method for reconstructing the distributions of parameters in specific dynamical systems from time-series data using an SNN model (Fig.~\ref{fig:nn_model}). We also analyzed the SNN model and proved that it could approximate any continuous random variable as long as moderate assumptions are satisfied, making it a suitable model for reconstructing the distribution of parameters in a dynamical system. 
Our method took advantage of a previous local squared $W_2$ method and a previous time-decoupled squared $W_2$ method so that both uncertainty in the initial state and intrinsic fluctuations such as the Wiener process in the dynamics could be considered.
We showcased the effectiveness of our approach for reconstructing the distribution of model parameters in several dynamical systems such as ODEs, PDEs, and SDEs. Our method outperformed several other benchmark statistical methods.

\added{One limitation of our proposed method is that when the model parameters span across different magnitudes, the reconstruction of the distribution of model parameters whose magnitudes are smaller is less accurate.
In practice, prior information on the distribution of model parameters might be available either from biophysical estimates or experimental assays.
Thus, how to incorporate prior information of the model parameter distribution, such as confining the range of model parameters,
is a potential future research direction. Specifically, it is illuminating to make comparisons with Bayesian approaches or Monte-Carlo simulation approaches, both of which would require a prior distribution of model parameters, in terms of both accuracy and computational complexity.} Also, it is helpful to consider applying our method to quantify uncertainties in model selection from time-series data or spatiotemporal data \cite{nardini2020learning}. Taking into account measurement errors might also be necessary when such errors are not negligible \cite{nardini2019influence}.
Additionally, it will be promising to apply our proposed local time-decoupled squared $W_2$ method for reconstructing the distribution of uncertain parameters in more complicated dynamical systems such as multidimensional SPDEs. It will be useful to analyze other stochastic neural networks' ability to approximate the distribution of uncertain model parameters, especially model parameters that take discrete values. \added{On the other hand, more theoretical analysis on the Wasserstein-distance-based loss function, such as proving that minimizing a Wasserstein-distance-type loss function is sufficient for reconstructing the distribution of parameters in a dynamical system, can also be interesting, which would require analysis on identifiability of model parameters, \textit{i.e.}, whether two different sets of model parameters would lead to different trajectories.} Finally, more theoretical and empirical studies on how to design the optimal architecture, \textit{e.g.} the depth and width, of the SNN model in Fig.~\ref{fig:nn_model} will be informative.

\section*{CRediT authorship contribution statement}
\textbf{Mingtao Xia}: Writing – review \& editing, Writing – original draft, Visualization, Validation, Supervision, Software, Project administration, Methodology, Investigation, Formal analysis, Conceptualization.
 \textbf{Qijing Shen}: Writing – review \& editing, Visualization, Software, Investigation, Methodology.
\textbf{Philip Maini}: Writing – review \& editing, Resources, Investigation.
\textbf{Eamonn Gaffney}: Writing – review \& editing, Resources, Investigation.
\textbf{Alex Mogilner}:  Writing – review \& editing, Resources, Investigation.

\section*{Declaration of competing interest}
The authors declare that they have no known competing financial interests or personal relationships that could influence the work in this paper.

\section*{Acknowledgement}
This work was supported in part through the NYU IT High Performance Computing resources, services, and staff expertise. The authors also acknowledge Dr. Jessica R. Crawshaw for helpful discussions.

\section*{Code and data availability}
All codes will be published on GitHub and the link to the repository will be provided upon acceptance of this manuscript. In this study, no data was created or used. 

\newpage 
 \appendix

 \section{The proof of Theorem~\ref{well_posedness}}

\label{appendixA}
Here, we prove Theorem~\ref{well_posedness}. Notice that 
    \begin{equation}
        W_{2, \delta}^{2, \text{e}}(\bm{X}(t_i), \hat{\bm{X}}(t_i))  = \frac{1}{N} \sum_{j=1}^N W_{2}^{2}(\nu_{\bm{X}_{0, j}, \delta}^{\text{e}}(t_i), \hat{\nu}_{\bm{X}_{0, j}, \delta}^{\text{e}}(t_i)), 
        \label{definition_j}
    \end{equation} 
    where $N$ is the number of observed trajectories and $\bm{X}_{0, j}$ denotes the initial condition of the $j^{\text{th}}$ trajectory. Suppose $0=t_0^1<t_1^1<...<t_{n_1}^1=T$;
$0=t_0^2<t_1^2<...<t_{n_2}^2=T$ are sets of grids on $[0,
  T]$. We define a third set of grids $0=t_0^3<...<t_{n_3}^3=T$ such
that $\{t_0^1,...,t_{n_1}^1\}\cup \{t_0^2,...,t_{n_2}^2\} =
\{t_0^3,...,t_{n_3}^3\}$. Let $\Delta
t\coloneqq\max\{\max_i(t_{i+1}^1-t_i^1), \max_j(t_{j+1}^2-t_j^2),
\max_k(t_{k+1}^3-t_k^3)\}$. We denote $\nu^{\text{e}}_{\bm{X}_{0, j}, \delta}(t)$ and $\hat{\nu}^{\text{e}}_{\bm{X}_{0, j}, \delta}(t)$ to be the empirical conditional probability distributions of $\bm{X}(t)$ and $\hat{\bm{X}}(t)$ at time $t$ conditioned on $|\bm{X}(0)-\bm{X}_{0, j}|\leq\delta$ and $|\hat{\bm{X}}(0)-\bm{X}_{0, j}|\leq\delta$, respectively. 
We shall show
\begin{equation}
\bigg|\sum_{i=0}^{n_1-1}W_{2, \delta}^{2, \text{e}}\big(\bm{X}(t_i^1), \hat{\bm{X}}(t_i^1)\big)(t_{i+1}^1-t_i^1) -
\sum_{i=0}^{n_3-1}W_{2, \delta}^{2, \text{e}}\big(\bm{X}(t_i^3), \hat{\bm{X}}(t_i^3)\big)(t_{i+1}^3-t_i^3)\bigg| \rightarrow 0,
\label{limit_exist}
\end{equation}
as $\Delta t\rightarrow 0$.

First, suppose in the interval $(t_i^1, t_{i+1}^1)$, we have
$t_i^1=t_{\ell}^3<t_{\ell+1}^3<...<t_{\ell+s}^3=t_{i+1}^1, s\geq 1$.
For $s>1$, since
$t_{i+1}^1-t_i^1=\sum_{k=\ell}^{\ell+s-1}(t_{k+1}^3-t_k^3)$, we have

\begin{equation}
  \begin{aligned}
    &\bigg|W_{2, \delta}^{2, \text{e}}\big(\bm{X}(t_i^1), \hat{\bm{X}}(t_i^1)\big) (t_{i+1}^1-t_i^1)  -
    \sum_{k=\ell}^{\ell+s-1} W_{2, \delta}^{2, \text{e}}\big(\bm{X}(t_k^3), \hat{\bm{X}}(t_k^3)\big)(t_{k+1}^3-t_k^3)\bigg| \\
    \: & \leq\frac{1}{N}\sum_{j=1}^N\sum_{k=\ell+1}^{\ell+s-1}\Big(W_{2}\big(\nu^{\text{e}}_{\bm{X}_{0, j}, \delta}(t_i^1),
    \hat{\nu}^{\text{e}}_{\bm{X}_{0, j}, \delta}(t_i^1)\big) + W_{2}\big(\nu^{\text{e}}_{\bm{X}_{0, j}, \delta}(t_i^3), \hat{\nu}^{\text{e}}_{\bm{X}_{0, j}, \delta}(t_k^3)\big|\Big)  \\
    \: & \qquad\quad \times \Big|W_{2}\big(\nu^{\text{e}}_{\bm{X}_{0, j}, \delta}(t_i^1), \hat{\nu}^{\text{e}}_{\bm{X}_{0, j}, \delta}(t_i^1)\big)
    - W_{2}\big(\nu^{\text{e}}_{\bm{X}_{0, j}, \delta}(t_k^3), \hat{\nu}^{\text{e}}_{\bm{X}_{0, j}, \delta}(t_k^3)\big)\Big| (t_{k+1}^3 - t_k^3).
    \end{aligned}
\label{triang}
\end{equation} 
Since $\|\bm{X}\|$ and $\|\hat{\bm{X}}\|$ are uniformly bounded, we have:
\begin{equation}
    W_{2}\big(\nu^{\text{e}}_{\bm{X}_{0, j}, \delta}(t_i^1), \hat{\nu}^{\text{e}}_{\bm{X}_{0, j}, \delta}(t_i^1)\big)\leq \sup_{\bm{X}_0} \E[\|\bm{X}(t)\|^2]^{\frac{1}{2}} + \E[\|\hat{\bm{X}}(t)\|^2]^{\frac{1}{2}} \leq X + \hat{X}
    \label{Mcondition}
\end{equation}
and 
\begin{equation}
     W_{2}\big(\nu^{\text{e}}_{\bm{X}_{0, j}, \delta}(t_k^3), \hat{\nu}^{\text{e}}_{\bm{X}_{0, j}, \delta}(t_k^3)\big)\leq X + \hat{X},
\end{equation}
where $X, \hat{X}$ are the upper bounds in Eq.~\eqref{thm2_1_upper}.
We can take a specific coupling measure $\pi^*_{\delta, \bm{X}_{0, j}}\big(\bm{X}(t_i^1), \bm{X}(t_k^3)\big)$ 
such that if we regard
\begin{equation}
    (\bm{X}(t_i^1), \bm{X}(t_k^3)):\mathbb{R}^{2d}\rightarrow\mathbb{R}^{2d}
\end{equation}
as a mapping from initial state space $(\bm{X}_0, \tilde{\bm{X}}_0)\in\mathbb{R}^{2d}$ to the solutions $(\bm{X}(t_i^1), \tilde{\bm{X}}(t_k^3))\in \mathbb{R}^{2d}$ at times $(t_i^1, t_k^3)$ satisfying $\bm{X}(t_i^1)=\bm{X}_0, \tilde{\bm{X}}(t_k^3)=\tilde{\bm{X}}_0$, $\pi^*_{\delta, \bm{X}_{0, j}}$ is the pushforward measure of $\nu^{\text{e}}_{\bm{X}_{0, j}, \delta}\cdot\delta_{\tilde{\bm{X}}_0 = \bm{X}_0}$, where $\nu^{\text{e}}_{\bm{X}_{0, j}, \delta}$ is the empirical conditional distribution of the initial condition $\bm{X}_0$ conditioned on $|\bm{X}_{0, j}-\bm{X}_0|\leq \delta$. Then, for any $\bm{X}_{0, j}$, we have
\begin{equation}
  \begin{aligned}
    W_{2}^{2}(\nu_{\bm{X}_{0, j}}^{\text{e}}(t_i^1), \nu_{\bm{X}_{0, j}}^{\text{e}}(t_k^3)) \leq &
    \sup_{\bm{X}_{0, j}}\E_{\big(\bm{X}(t_i^1), \bm{X}(t_k^3)\big)\sim\pi^*_{\delta, \bm{X}_{0, j}}}\big[\|\bm{X}(t_k^3) - \bm{X}(t_i^1)\|_2^2\big] \\
    \leq & \sup_{\bm{X}_{0, j}}\E\bigg[\medint\int_{t_i^1}^{t_{i+1}^1}\!
      \sum_{i=1}^d f_i^2(\bm{X}(t),t;\theta) \d t \bigg] (t_{i+1}^1-t_i^1).
\end{aligned}
\label{bound1}
\end{equation}

Similarly, we have
\begin{equation}
  \begin{aligned}
    W_{2}^{2}\big(\hat{\nu}_{\bm{X}_{0, j}}^{\text{e}}(t_i^1), \hat{\nu}_{\bm{X}_{0, j}}^{\text{e}}(t_k^3)\big) \leq &
    \,\sup_{\bm{X}_{0, j}}\E\bigg[\medint \int_{t_i}^{t_{i+1}} \sum_{\ell=1}^d
      f_{\ell}^2(\bm{X}(t),t; \hat{\theta}) \d t \bigg](t_{i+1}^1-t_i^1).
\end{aligned}
    \label{bound2}
  \end{equation}

Using the triangular inequality of the Wasserstein distance \cite{clement2008elementary}, we have
\begin{equation}
\begin{aligned}
 &   \Big|W_{2}^{}\big(\nu^{\text{e}}_{\bm{X}_{0, j}, \delta}(t_i^1), \hat{\nu}^{\text{e}}_{\bm{X}_{0, j}, \delta}(t_i^1)\big)
    - W_{2}\big(\nu^{\text{e}}_{\bm{X}_{0, j}, \delta}(t_k^3), \hat{\nu}^{\text{e}}_{\bm{X}_{0, j}, \delta}(t_k^3)\big)\Big| 
\\&\quad\quad\leq \Big|W_2\big(\nu^{\text{e}}_{\bm{X}_{0, j}, \delta}(t_i^1), \hat{\nu}^{\text{e}}_{\bm{X}_{0, j}, \delta}(t_i^1)\big)- W_2\big(\nu^{\text{e}}_{\bm{X}_{0, j}, \delta}(t_k^3),
\hat{\nu}^{\text{e}}_{\bm{X}_{0, j}, \delta}(t_k^1)\big)\Big| \\
&\quad\quad\quad\quad+ \Big|W_2\big(\nu^{\text{e}}_{\bm{X}_{0, j}, \delta}(t_i^3), \hat{\nu}^{\text{e}}_{\bm{X}_{0, j}, \delta}(t_i^1)\big)- W_2\big(\nu^{\text{e}}_{\bm{X}_{0, j}, \delta}(t_k^3),
\hat{\nu}^{\text{e}}_{\bm{X}_{0, j}, \delta}(t_k^3)\big)\Big|\\
&\quad\quad\leq W_2\big(\nu^{\text{e}}_{\bm{X}_{0, j}, \delta}(t_i^1), \nu^{\text{e}}_{\bm{X}_{0, j}, \delta}(t_k^3)\big) +
W_2\big(\hat{\nu}^{\text{e}}_{\bm{X}_{0, j}, \delta}(t_i^1), \hat{\nu}^{\text{e}}_{\bm{X}_{0, j}, \delta}(t_k^3)\big).
\end{aligned}
\label{traing_property}
\end{equation}
Substituting Eqs.~\eqref{Mcondition},~\eqref{bound1}, \eqref{bound2}, and
\eqref{traing_property} into Eq.~\eqref{triang}, we conclude that

\begin{equation}
  \begin{aligned}
    \bigg|W_2^2\big(\nu_{\bm{X}_{0, j}, \delta}^{\text{e}}(t_i^1), \hat{\nu}_{\bm{X}_{0, j}, \delta}^{\text{e}}(t_i^1)\big)& (t_{i+1}^1-t_i^1) 
    \\&\hspace{-2cm}- \sum_{k=\ell}^{\ell+s-1} W_2^2\big((\nu_{\bm{X}_{0, j}, \delta}^{\text{e}}(t_k^3),
    \hat{\nu}_{\bm{X}_{0, j}, \delta}^{\text{e}}(t_k^3)\big)(t_{k+1}^3-t_k^3)\bigg| \\
    \: & \quad \leq 2(X+\hat{X})(t_{i+1}^1-t_i^1)\big(\sqrt{F_i\Delta t}
    + \sqrt{\hat{F}_i \Delta t}\big),
    \end{aligned}
\label{intermediate}
\end{equation}
where  
\begin{equation}
    F_i\coloneqq \sup_{\bm{X}_{0, j}}\E\bigg[\medint \int_{t_i^1}^{t_{i+1}^1} \sum_{\ell=1}^d
      f_{\ell}^2(\bm{X}(t),t; \theta) \d t \bigg],\,\, \hat{F}_i\coloneqq \sup_{\bm{X}_{0, j}}\E\bigg[\medint \int_{t_i^1}^{t_{i+1}^1} \sum_{\ell=1}^d
      f_{\ell}^2(\bm{X}(t),t; \hat{\theta}) \d t \bigg].
\end{equation}
Summing over $i$ and $j$ for the inequality~\eqref{intermediate}, we have:
\begin{equation}
  \begin{aligned}
    &\bigg|\sum_{i=0}^{n_1-1}W_{2, \delta}^{2, \text{e}}\big(\bm{X}(t_i^1), \hat{\bm{X}}(t_i^1)\big) (t_{i+1}^1-t_i^1)  -
    \sum_{i=0}^{n_3-1} W_{2, \delta}^{2, \text{e}}\big(\bm{X}(t_k^3), \hat{\bm{X}}(t_k^3)\big)(t_{k+1}^3-t_k^3)\bigg|\\ 
    &\quad\quad\leq  2(X + \hat{X}) T\max_i\big(\sqrt{F_i\Delta t}
    + \sqrt{\hat{F}_i\Delta t}\big).
  \end{aligned}
  \end{equation}
Similarly, 

\begin{equation}
  \begin{aligned}
    &\bigg|\sum_{i=0}^{n_2-1}W_{2, \delta}^{2, \text{e}}\big(\bm{X}(t_i^2), \hat{\bm{X}}(t_i^2)\big) (t_{i+1}^2-t_i^2)  -
    \sum_{i=0}^{n_2-1} W_{2, \delta}^{2, \text{e}}\big(\bm{X}(t_k^3), \hat{\bm{X}}(t_k^3)\big)(t_{k+1}^3-t_k^3)\bigg| \\
    &\quad\quad\leq  2(X+ \hat{X}) T\max_i\Big(\sqrt{F'_i\Delta t}
    +\!\sqrt{\hat{F}'_i\Delta t}\,\Big),
  \end{aligned}
  \end{equation}
  where 
  \begin{equation}
    F_i'\coloneqq \sup_{\bm{X}_{0, j}}\E\bigg[\medint \int_{t_i^2}^{t_{i+1}^2} \sum_{\ell=1}^d
      f_{\ell}^2(\bm{X}(t),t; \hat{\theta}) \d t \bigg],\,\, \hat{F}_i'\coloneqq \sup_{\bm{X}_{0, j}}\E\bigg[\medint \int_{t_i^2}^{t_{i+1}^2} \sum_{\ell=1}^d
      f_{\ell}^2(\bm{X}(t),t; \hat{\theta}) \d t \bigg].
\end{equation}
Thus, as $\Delta t\rightarrow 0$,

\begin{equation}
  \bigg|\sum_{i=0}^{n_1-1}W_{2, \delta}^{2, \text{e}}\big(\nu(t_i^1), \hat{\nu}(t_i^1)\big)(t_{i+1}^1-t_i^1)
  - \sum_{i=0}^{n_2-1}W_{2, \delta}^{2, \text{e}}\big(\nu(t_i^2),
  \hat{\nu}(t_i^2)\big)(t_{i+1}^2-t_i^2)\bigg|\rightarrow 0,
\label{convergence}
\end{equation}
which implies the limit
\begin{equation}
  \lim\limits_{\max(t_{i+1}^1-t_{i}^1)\rightarrow 0}\sum_{i=0}^{N-1}W_{2, \delta}^{2, \text{e}}\big(\nu(t_i^1),
  \hat{\nu}(t_i^1)\big)(t_i^1-t_{i-1}^1)
  \label{limit_exists}
\end{equation}
exists. Therefore, the local time-decoupled squared $W_2$ distance in Eq.~\eqref{local_define}:
\begin{equation}
    \tilde{W}_{2, \delta}^{2, \text{e}}(\bm{X}, \hat{\bm{X}})\coloneqq \int_0^T {W}_{2, \delta}^{2, \text{e}}(\bm{X}(t), \hat{\bm{X}}(t))\d t
\end{equation}
is well-defined.

%Specifically, by letting
%$\max_{i=0}^{n_2-1}(t_{i+1}^3-t_i^3)\rightarrow0$ in
%Eq.~\eqref{convergence}, we have
%\begin{equation}
%  \begin{aligned}
%    \bigg|\sum_{i=0}^{N_1-1}W_2^2\big(\nu(t_i^1),
%    \hat{\nu}(t_i^1)\big)& (t_{i+1}^1-t_i^1)
%    - \tilde{W}_2^2(\nu,\hat{\nu})\bigg| \\[-3pt]
%   \leq & 2(X+\hat{X}) T\max_i\Big(\sqrt{F_i\Delta t}
%    + \sqrt{\hat{F}_i\Delta t}\,\Big).
%  \end{aligned}
%\end{equation}
 
\section{Proof of Corollary~\ref{jump_diffusion_thm}}
\label{appendixB}
The proof of Corollary~\ref{jump_diffusion_thm} is similar to the proof of Theorem~\ref{well_posedness} and the proof of Theorem 3.1 in \cite{xia2024efficient}. Suppose $0=t_0^1<t_1^1<...<t_{n_1}^1=T$;
$0=t_0^2<t_1^2<...<t_{n_2}^2=T$ are two sets of grids on $[0,
  T]$. We define a third set of grids $0=t_0^3<...<t_{n_3}^3=T$ such
that $\{t_0^1,...,t_{n_1}^1\}\cup \{t_0^2,...,t_{N_2}^2\} =
\{t_0^3,...,t_{n_3}^3\}$. Let $\Delta
t\coloneqq\max\{\max_i(t_{i+1}^1-t_i^1), \max_j(t_{j+1}^2-t_j^2),
\max_k(t_{k+1}^3-t_k^3)\}$. We denote $\nu^{\text{e}}_{\bm{X}_{0, j}, \delta}(t)$ and $\hat{\nu}^{\text{e}}_{\bm{X}_{0, j}, \delta}(t)$ to be the empirical conditional probability distributions of $\bm{X}(t)$ and $\hat{\bm{X}}(t)$ at time $t$ conditioned on $|\bm{X}(0)-\bm{X}_{0, j}|\leq\delta$ and $|\hat{\bm{X}}(0)-\bm{X}_{0, j}|\leq\delta$, respectively. We need to show that
    \begin{equation}
\bigg|\sum_{i=0}^{n_1-1}W_{2, \delta}^{2, \text{e}}\big(\bm{X}(t_i^1),
\hat{\bm{X}}(t_i^1)\big)(t_{i+1}^1-t_i^1) -
\sum_{i=0}^{n_3-1}W_{2, \delta}^{2, \text{e}}\big(\bm{X}(t_i^3),
\hat{\bm{X}}(t_i^3)\big)(t_{i+1}^3-t_i^3)\bigg| \rightarrow 0.
\end{equation}

For $j=1, 2$, we define:
  \begin{equation}
  \begin{aligned}
    F_i^j\coloneqq & \sup_{\bm{X}_0, \theta}\E\bigg[\medint\int_{t_i^j}^{t_{i+1}^j}
      \sum_{\ell=1}^d f_{\ell}^2(\bm{X}(t^-),t^-; \theta)\d t\bigg], \\&\,\,
    \hat{F}_i^j\coloneqq\sup_{\bm{X}_0, \hat{\theta}}\E\bigg[\medint\int_{t_i^j}^{t_{i+1}^j} \sum_{\ell=1}^d
f_{\ell}^2(\hat{\bm{X}}(t^-),t^-; \hat{\theta})\d t\bigg], \\
\Sigma_i^j\coloneqq & \sup_{\bm{X}_0, \theta}\E\bigg[\medint\int_{t_i^j}^{t_{i+1}^j} \sum_{\ell=1}^d
  \sum_{j=1}^m\sigma_{\ell, j}^2(\bm{X}(t^-),t^-)\d t\bigg], \\\,\,&
\hat{\Sigma}_i^j\coloneqq\sup_{\bm{X}_0, \hat{\theta}}\E\bigg[\medint\int_{t_i^j}^{t_{i+1}^j}
\sum_{\ell=1}^d\sum_{j=1}^m\hat{\sigma}_{\ell, j}^2(\hat{\bm{X}}(t^-),t^-; \theta)\d t\bigg],\\
B_i^j\coloneqq & \sup_{\bm{X}_0, \theta}\E\bigg[\medint\int_{t_i^j}^{t_{i+1}^j} \sum_{\ell=1}^d
\medint\int_U \beta_{\ell}^2(\bm{X}(t^-),\xi, t^-;\theta)\nu(\d\xi)\d t\bigg], \\&\,\, 
\hat{B}_i^j\coloneqq\sup_{\bm{X}_0, \hat{\theta}}\E\bigg[\medint\int_{t_i^j}^{t_{i+1}^j}
  \sum_{\ell=1}^d\medint\int_U \hat{\beta}_{\ell}^2(\hat{\bm{X}}(t^-),\xi, t^-;\hat{\theta})
  \nu(\d\xi)\d t\bigg].
    \end{aligned}
\end{equation}

Similar to the proof of Theorem~\ref{well_posedness}, we find that:
\begin{equation}
  \begin{aligned}
    \bigg|W_2^2\big(\nu_{\bm{X}_{0, j}, \delta}^{\text{e}}(t_i^1), \hat{\nu}_{\bm{X}_{0, j}, \delta}^{\text{e}}(t_i^1)\big)& (t_{i+1}^1-t_i^1) 
    - \sum_{k=\ell}^{\ell+s-1} W_2^2\big(\nu_{\bm{X}_{0, j}, \delta}^{\text{e}}(t_k^3),
    \hat{\nu}_{\bm{X}_{0, j}, \delta}^{\text{e}}(t_k^3)\big)(t_{k+1}^3-t_k^3)\bigg| \\
    \: &\hspace{-2cm} \hspace{-0.4in} \leq 2(X+\hat{X})(t_{i+1}^1-t_i^1)\big(\sqrt{F_i^1\Delta t + \Sigma_i^1 + B_i^1}
    + \sqrt{\hat{F}_i^1 \Delta t + \hat{\Sigma}_i^1 +\hat{B}_i^1}\big).
    \end{aligned}
\label{intermediate_1}
\end{equation}

Summing over $i=0,...,n_1-1$ and $j=1,...,N$, we can obtain
\begin{equation}
  \begin{aligned}
    \bigg|\sum_{i=0}^{n_1-1}W_{2, \delta}^{2, \text{e}}\big(\bm{X}(t_i^1), \hat{\bm{X}}(t_i^1)) & (t_{i+1}^1-t_i^1) -
    \sum_{k=0}^{n_3-1}W_{2, \delta}^{2, \text{e}}\big(\bm{X}(t_k^3), \hat{\bm{X}}(t_k^3))(t_{k+1}^3-t_k^3)\bigg| \\
    &\hspace{-2cm}\leq  2(X+\hat{X}) T\max_i\big(\sqrt{F_i^1\Delta t+ \Sigma_i^1 +B_i^1}
    + \sqrt{\hat{F}_i^1\Delta t+ \hat{\Sigma}_i^1 +\hat{B}_i^1}\big).
  \end{aligned}
  \end{equation}
Similarly, we have:
\begin{equation}
  \begin{aligned}
    \bigg|\sum_{i=0}^{n_2-1}W_{2, \delta}^{2, \text{e}}\big(\bm{X}(t_i^2), \hat{\bm{X}}(t_i^2)) & (t_{i+1}^2-t_i^2) -
    \sum_{k=0}^{n_3-1}W_{2, \delta}^{2, \text{e}}\big(\bm{X}(t_k^3), \hat{\bm{X}}(t_k^3))(t_{k+1}^3-t_k^3)\bigg| \\
    &\hspace{-2cm}\leq  2(X+\hat{X}) T\max_i\big(\sqrt{F_i^2\Delta t+ \Sigma^2_i +B_i^2}
    + \sqrt{\hat{F}^2_i\Delta t+ \hat{\Sigma}^2_i +\hat{B}^2_i}\big).
  \end{aligned}
  \label{intermediate_2}
  \end{equation}

  Thus, 
  \begin{equation}
  \begin{aligned}
    \bigg|\sum_{i=0}^{n_1-1}W_{2, \delta}^{2, \text{e}}\big(\bm{X}(t_i^1), \hat{\bm{X}}(t_i^1)) & (t_{i+1}^1-t_i^1) -
    \sum_{i=0}^{n_2-1}W_{2, \delta}^{2, \text{e}}\big(\bm{X}(t_i^3), \hat{\bm{X}}(t_i^2))(t_{i+1}^3-t_i^3)\bigg| \\
    &\hspace{-2cm}\leq  2(X+\hat{X}) T\max_i\big(\sqrt{F_i^1\Delta t+ \Sigma_i^1 +B_i^1}
    + \sqrt{\hat{F}_i^1\Delta t+ \hat{\Sigma}_i^1 +\hat{B}_i^1}\big)\\
    &\hspace{-2cm}+2(X+\hat{X}) T\max_i\big(\sqrt{F_i^2\Delta t+ \Sigma^2_i +B_i^2}
    + \sqrt{\hat{F}^2_i\Delta t+ \hat{\Sigma}^2_i +\hat{B}^2_i}\big).
  \end{aligned}
  \end{equation}
  Since $\bm{f}, \bm{\sigma}, \bm{\beta}$ are uniformly bounded, $F_i^j, \Sigma_i^j, B_i^j, \hat{F}_i^j, \hat{\Sigma}_i^j, \hat{B}_i^j\rightarrow 0$ as $\Delta t\rightarrow 0$ uniformly for $j=1, 2$. Thus, the limit 
  \begin{equation}
      \tilde{W}_{2, \delta}^{2, \text{e}}(\bm{X}, \hat{\bm{X}})\coloneqq \int_0^T W_{2, \delta}^{2, \text{e}}(\bm{X}(t), \hat{\bm{X}}(t))\d t
  \end{equation}
  exists. 
  %Furthermore, as $\max_i (t_{i+1}^3-t_i^3)\rightarrow 0$, we have:
  %\begin{equation}
  %   \begin{aligned}
  %       & \bigg|\sum_{i=0}^{N_1-1}W_{2, \delta}^{2, \text{e}}\big(\bm{X}(t_i^1), \hat{\bm{X}}(t_i^1))  (t_{i+1}^1-t_i^1) -
  %  \tilde{W}_{2, \delta}^{2, \text{e}}(\bm{X}, \hat{\bm{X}})\bigg|\\
  %  &\hspace{3cm}\leq 2(X+\hat{X}) T\max_i\big(\sqrt{F_i\Delta t+ \Sigma_i +B_i}
  %  + \sqrt{\hat{F}_i\Delta t+ \hat{\Sigma}_i +\hat{B}_i}\big),
  %   \end{aligned}
  %\end{equation}
  %and the convergence ground 

  \section{The proof of Theorem~\ref{theorem1}}
  \label{AppendixC}
  In this section, we prove Theorem~\ref{theorem1}.
  First, consider $\tilde{\bm{X}}$ that solves the following jump-diffusion process:
\begin{equation}
\begin{aligned}
            &\d \tilde{\bm{X}}(t) = \bm{f}(\tilde{\bm{X}}(t), t; \hat{\theta})\d t + \bm{\sigma}(\tilde{\bm{X}}(t),
    t; \hat{\theta})\d \bm{B}_t + \int_U\bm{\beta}(\tilde{\bm{X}}(t), \xi, t; \hat{\theta})\tilde{N}( \d
    t, \nu(\d\xi)), \\
    &\hspace{1cm}\tilde{\bm{X}}(0) = \bm{X}(0).
\end{aligned}
    \label{tilde_X}
\end{equation}
Using Theorem 2.1 in \cite{xia2024efficient}, we have:
\begin{equation}
            \E\Big[\big\|\bm{X}(t) -
              \tilde{\bm{X}}(t)\big\|^{2}\Big]\leq
            \E\big[H(t)|\bm{X}(0)\big]
            \exp\Big(\big(2C+1+(2C +1)m
            +(2C+1)\gamma(U)\big)td\Big),
            \label{pleq2}
\end{equation}
where $\bm{X}(0)$ is the initial condition and $H(t)$ is defined as
\begin{equation}
\begin{aligned}
  H(t) \coloneqq & \E\bigg[\sum_{i=1}^d\int_0^t\big(f_{i}(\bm{X}(s^-), s^-; \theta)
    - f_{i}(\bm{X}(s^-), s^-;\hat{\theta})\big)^2\d s\bigg]\\
\: & \quad + \E\bigg[\sum_{i=1}^d\int_0^t\sum_{j=1}^m
  \big(\sigma_{i, j}(\bm{X}(s^-), s^-;\theta)
  - \sigma_{i, j}(\bm{X}(s^-), s^-;\hat{\theta})\big)^2\d s\bigg]\\
\: & \qquad + \E\bigg[\sum_{i=1}^d\int_0^t\int_U\big(\beta_{i}(\bm{X}(s^-), \xi, s^-;\theta)
  - \beta_{i}(\bm{X}(s^-), \xi, s^-;\hat{\theta})\big)^2\gamma(\d\xi)\d s\bigg]\\
  &\leq Ct(1 + m + \gamma(U)) \|\theta - \hat{\theta}\|^2.
\end{aligned}
\label{h_define}
\end{equation}
Since the probability distribution of $\hat{\bm{X}}(t)\in\mathbb{R}^d$ is the same as the probability distribution of $\tilde{\bm{X}}(t)$ for any $t\in[0, T]$, we conclude that
\begin{equation}
    W_2^2(\nu_{\bm{X}_0}(t), \hat{\nu}_{\bm{X}_0}(t)) = W_2^2(\nu_{\bm{X}_0}(t), \tilde{\nu}_{\bm{X}_0}(t)),
\end{equation}    
where $\nu_{\bm{X}_0}(t), \hat{\nu}_{\bm{X}_0}(t)$, and $\tilde{\nu}_{\bm{X}_0}(t)$ are the probability distributions of $\bm{X}(t), \hat{\bm{X}}(t),$ and $\tilde{\bm{X}}(t)$ given the same initial condition $\bm{X}_0$, respectively. Given any coupled distribution of $\theta, \hat{\theta}$ denoted by $\pi(\mu, \hat{\mu})$ such that its marginal distributions coincide with the probability distributions of $\theta$ and $\hat{\theta}$, we 
denote $\pi^*(\bm{X}(t), \bm{\tilde{X}}(t))$ to be its pushforward probability measure for $(\bm{X}(t), \tilde{\bm{X}}(t))\in\mathbb{R}^{2d}$. We can easily check that the marginal distributions of $\pi^*$ coincide with $\nu_{\bm{X}_0}(t)$ and $\tilde{\nu}_{\bm{X}_0}(t)$, respectively.
From Eqs.~\eqref{pleq2} and \eqref{h_define}, we have
\begin{equation}
\begin{aligned}
        \E_{(\bm{X}(t), \tilde{\bm{X}}(t))\sim\pi^*}\Big[\big\|\bm{X}(t) -
              \tilde{\bm{X}}(t)\big\|^{2}\Big]&\leq Ct(1+m + \gamma(U))\E_{(\theta, \hat{\theta})\sim \pi(\mu, \hat{\mu})}\big[\|\theta - \hat{\theta}\|^2\big]\\
              &\hspace{-2cm}\times\exp\Big(\big(2C+1+(2C +1)m
            +(2C+1)\gamma(U)\big)td\Big).
\end{aligned}
\end{equation}
Taking the infimum of $\pi$ over all coupled distributions of $(\theta, \hat{\theta})$ whose marginal distributions are $\mu$ and $\hat{\mu}$, we conclude that
\begin{equation}
    W_2^2(\nu_{\bm{X}_0}(t), \hat{\nu}_{\bm{X}_0}(t))\
    \leq C_1t\exp(C_0t)W_2^2(\mu, \hat{\mu}),
    \label{upper_bound}
\end{equation}
where 
\begin{equation}
    C_0\coloneqq \Big(2C+1+(2C +1)m
            +(2C+1)\gamma(U)\Big)d,\,\, C_1 \coloneqq C(1 + m + \gamma(U))
\end{equation}
are two constants. 
            
Using the triangular inequality of the $W_2$ distance \cite{clement2008elementary}, we have
            \begin{equation}
            \begin{aligned}
                W_2(\nu^{\text{e}}_{\bm{X}_0, \delta}(t), \hat{\nu}^{\text{e}}_{\bm{X}_0, \delta}(t))
                &\leq W_2(\nu_{\bm{X}_0}^{\text{e}}(t), \hat{\nu}^{\text{e}}_{\bm{X}_0}(t)) + W_2(\nu_{\bm{X}_0}^{\text{e}}(t), \nu_{\bm{X}_0, \delta}^{\text{e}}(t)) \\
                &\quad\quad+ W_2( \hat{\nu}^{\text{e}}_{\bm{X}_0}(t), \hat{\nu}^{\text{e}}_{\bm{X}_0, \delta}(t))\\
                &\hspace{-2cm}\leq W_2(\nu_{\bm{X}_0}(t), \hat{\nu}_{\bm{X}_0}(t)) +W_2(\hat{\nu}_{\bm{X}_0}(t), \nu^{\text{e}}_{\bm{X}_0}(t)) + W_2(\nu_{\bm{X}_0}(t), \hat{\nu}^{\text{e}}_{\bm{X}_0}(t))\\
                &\hspace{-1cm} + W_2(\nu_{\bm{X}_0}^{\text{e}}(t), \nu_{\bm{X}_0, \delta}^{\text{e}}(t)) + W_2( \hat{\nu}^{\text{e}}_{\bm{X}_0}(t), \hat{\nu}^{\text{e}}_{\bm{X}_0, \delta}(t)),
            \end{aligned}
            \label{triang1}
            \end{equation}
where $\nu^{\text{e}}_{\bm{X}_{0}, \delta}(t)$ and $\hat{\nu}^{\text{e}}_{\bm{X}_{0}, \delta}(t)$ are the empirical conditional probability distributions of $\bm{X}(t)$ and $\hat{\bm{X}}(t)$ at time $t$ conditioned on $|\bm{X}(0)-\bm{X}_{0}|\leq\delta$ and $|\hat{\bm{X}}(0)-\bm{X}_{0}|\leq\delta$, respectively.
For any $\theta\in\mathbb{R}^{\ell}$, consider 
\begin{equation}
\begin{aligned}
            &\d \bm{X}(t; \bm{X}_0) = \bm{f}({\bm{X}}(t; \bm{X}_0), t; \theta)\d t + \bm{\sigma}(\bm{X}(t; \bm{X}_0),
    t; \hat{\theta})\d \bm{B}_t \\
    &\hspace{2cm}+ \int_U\bm{\beta}(\bm{X}(t; \bm{X}_0), \xi, t; \theta)\tilde{N}( \d
    t, \gamma(\d\xi)),\\
    &\d \bm{X}(t; \bm{X}_0') = \bm{f}({\bm{X}}(t; \bm{X}_0'), t; \theta)\d t + \bm{\sigma}(\bm{X}(t; \bm{X}_0'),
    t; \hat{\theta})\d \bm{B}_t \\
    &\hspace{2cm}+ \int_U\bm{\beta}(\bm{X}(t; \bm{X}_0'), \xi, t; \theta)\tilde{N}( \d
    t, \gamma(\d\xi)),\\
    &\quad\quad\bm{X}(0; \bm{X}_0) = \bm{X}_0,\,\,\bm{X}(0; \bm{X}_0') = \bm{X}_0', \,\, \|\bm{X}_0-\bm{X}_0'\|\leq\delta.
\end{aligned}
    \label{couple_ic}
\end{equation}
Using the stochastic Gronwall lemma \cite[Theorem 2.2]{mehri2019stochastic} and \cite[Theorem 2.1]{xia2024efficient}, we conclude that
\begin{equation}
    \E\Big[\big\|\bm{X}(t; \bm{X}_0) -
              \bm{X}(t; \bm{X}_0')\big\|_2^{2}\Big]\leq\exp\Big(\big(2dC + dCm + dC\gamma(U)+1)t\Big)\E[\|\bm{X}_0-\bm{X}_0'\|^2].
\end{equation}
Therefore, we have:
\begin{equation}
    W_2(\nu_{\bm{X}_0}^{\text{e}}(t), \nu_{\bm{X}_0, \delta}^{\text{e}}(t))\leq \delta \exp(\tfrac{C_0t}{2}).
    \label{w21}
\end{equation}
Similarly, we conclude that:
\begin{equation}
    W_2(\hat{\nu}_{\bm{X}_0}^{\text{e}}(t), \hat{\nu}_{\bm{X}_0, \delta}^{\text{e}}(t))\leq \delta \exp(\tfrac{C_0t}{2}).
    \label{w22}
\end{equation}

Eq.~\eqref{upper_bound} also holds if we replace 
$\hat{\nu}_{\bm{X}_0}(t)$ on the LHS of Eq.~\eqref{upper_bound} with the empirical distribution $\nu^{\text{e}}_{\bm{X}_0}(t)$ and then replace $\hat{\mu}$ with $\mu_{\bm{X}_0}^{\text{e}}$
on the RHS, \textit{i.e.},
\begin{equation}
    W_2(\nu_{\bm{X}_0}(t), \nu_{\bm{X}_0}^{\text{e}}(t))\leq \sqrt{C_1t}\exp(\tfrac{C_0t}{2})W_2(\mu, \mu_{\bm{X}_0}^{\text{e}}).
    \label{empirical1}
\end{equation}
Similarly,
\begin{equation}
    W_2^2(\hat{\nu}_{\bm{X}_0}(t), \hat{\nu}_{\bm{X}_0}^{\text{e}}(t))\leq \sqrt{C_1t}\exp(\tfrac{C_0t}{2})W_2(\hat{\mu}, \hat{\mu}_{\bm{X}_0}^{\text{e}}).
    \label{empirical2}
\end{equation}
In Eqs.~\eqref{empirical1} and ~\eqref{empirical2}, $\mu_{\bm{X}_0}^{\text{e}}$ and $\hat{\mu}_{\bm{X}_0}^{\text{e}}$ denote the empirical distributions of $\theta$ and $\hat{\theta}$.
%when the initial condition $\|\tilde{\bm{X}}-\bm{X}\|\leq\delta$.
Finally, by plugging Eqs.~\eqref{w21},~\eqref{w22},~\eqref{empirical1}, ~\eqref{empirical2}, ~\eqref{upper_bound} into Eq.~\eqref{triang1}, we conclude:
\begin{equation}
\begin{aligned}
     &W_2(\nu_{\bm{X}_0, \delta}^{\text{e}}(t), \hat{\nu}^{\text{e}}_{\bm{X}_0, \delta}(t))\leq 2\delta \exp(\tfrac{C_0t}{2}) \\
     &\quad\quad+ \sqrt{C_1t\exp(C_0t)}\big(W_2(\mu, \mu_{\bm{X}_0}^{\text{e}}) 
     + W_2(\hat{\mu}, \hat{\mu}_{\bm{X}_0}^{\text{e}})+W_2(\mu, \hat{\mu})\big).
\end{aligned}
\label{w2_bound1}
\end{equation}

Squaring both sides of Eq.~\eqref{w2_bound1}, integrating over time, and taking the expectation w.r.t. the empirical probability measure of the initial condition $\bm{X}_0$, we conclude that
\begin{equation}
\begin{aligned}
        &\E[\tilde{W}_{2, \delta}^{2, \text{e}}(\bm{X}, \hat{\bm{X}})]\leq 8T\delta^2 \exp(C_0T)\\
        &\quad\quad+\sum_{j=1}^N\frac{1}{N} \frac{6C_1}{C_0}T\exp(C_0T)\big(W_2^2(\mu, \hat{\mu}) +  \E\big[W_2^2(\mu_{\bm{X}_{0, j}}^{\text{e}}, \mu)\big]+  \E\big[W_2^2(\hat{\mu}_{\bm{X}_{0, j}}^{\text{e}}, \hat{\mu})\big] \big).
        \end{aligned}
        \label{thm2_result0}
        \end{equation}
        Specifically, $C_0$ grows linearly with the dimensionality $d$.
        Finally, taking the expectation of Eq.~\eqref{thm2_result0} and applying the empirical error bound of the squared $W_2$ distance in \cite{fournier2015rate}, we conclude:
        \begin{equation}
            \begin{aligned}
            &\sum_{j=1}^N\frac{1}{N}\E\big[W_2^2(\mu_{\bm{X}_{0, j}}^{\text{e}}, \mu)\big]\leq C_2\E[h(N^{\#}(\bm{X}_0;\delta), \ell)]\Theta_6^{\frac{1}{3}},\\
    &\hspace{1cm}\,\,\sum_{j=1}^N\frac{1}{N}\E\big[W_2^2(\hat{\mu}_{\bm{X}_{0, j}}^{\text{e}}, \hat{\mu})\big]\leq C_2\E[h(N^{\#}(\bm{X}_0;\delta), \ell)]\hat{\Theta}_6^{\frac{1}{3}},
\end{aligned}
    \label{theorem2_result}
\end{equation}
where $C_2$ is a constant used in \cite[Theorem 1]{fournier2015rate} and $h$ is defined in Eq.~\eqref{t_def}. This
 completes the proof of Theorem~\ref{theorem1}.

\section{Reconstructing the distribution of parameters in an SPDE}
\label{AppendixD}
We consider the following parabolic SPDE with a Dirichlet boundary condition studied in \cite{grecksch1996time}:
\begin{equation}
\begin{aligned}
    \d U(\bm{x}, t; \theta) &= A(\theta) U(\bm{x}, t; \theta) + f(U(\bm{x}, t; \theta); \theta)\d t + g(U(\bm{x}, t; \theta); \theta)\d B_t,\\
    &\quad\,\,\bm{x}\in D, t\in[0, T],\\
    &\hspace{-1cm}U(\bm{x}, 0; \theta) = U(\bm{x}, 0) \in H^{1,2}_0(\Omega), U(\bm{x}, 0)\sim \nu_0, \,\,U(\bm{x}, t;\theta) = 0, x\in\partial D.
\end{aligned}
    \label{spde}
\end{equation}
In Eq.~\eqref{spde}, $B_t$ is a standard scalar Wiener process and $D$ is a bounded domain in $\mathbb{R}^d$ with sufficiently smooth boundary $\partial D$. $H_0^{1,2}(\Omega)$ is the space of functions $U: \Omega \to \mathbb{R}$ that vanish on $\partial \Omega$ such that $U$ and its first-order generalized derivatives belong 
to $L^2(\Omega)$ equipped with the norm $\|U\|_{L^2}\coloneqq \int_D U^2(\bm{x}, t)\d\bm{x}$. $\nu_0$ is a probability measure defined on the Sobolev space $\mathcal{B}(H^{1,2}_0(\Omega))$. $A$ is a linear operator densely defined in 
$L^2(\Omega)$ such that for $U \in H^{1,2}(\Omega)$,  $AU\in L^2(\Omega)\}$. $A$ is strongly monotone, \textit{i.e.}, 
there is a constant $\alpha>0$ such that
\begin{equation}
    (-AU, U) \geq \alpha \|U\|^2_{H^{1, 2}}, \quad\forall U \in H_0^{1,2}(\Omega),
\label{spde_coer}
\end{equation}
where $(\cdot, \cdot)$ is the inner product. In addition, $f$ and $g$, which map either $L^2(\Omega)$ or $H_0^{1, 2}(\Omega)$ into itself, are formed from real-valued functions of a real variable with uniformly bounded derivatives of appropriate order.%, into which the numerical values $U = U(t,x,\omega)$ are inserted (thus derivatives of $U$ are not present here).

    We reconstruct the distribution of the model parameter $\theta$ in Eq.~\eqref{spde} using an approximate model
    \begin{equation}
    \begin{aligned}
                &\d \hat{U}(\bm{x}, t; \hat{\theta}) = A(\hat{\theta}) \hat{U}(\bm{x}, t; \hat{\theta}) + f(\hat{U}(\bm{x}, t; \hat{\theta}); \hat{\theta})\d t + g(\hat{U}(\bm{x}, t; \hat{\theta}); \hat{\theta})\d \hat{B}_t,\\&\quad\quad\quad \bm{x}\in D, t\in[0, T],\\
                &\quad\quad \hat{U}(\bm{x}, 0) = U(\bm{x}, 0;\theta)=U(\bm{x}, 0), \,\,U(\bm{x}, t;\hat{\theta}) = 0, x\in\partial D.
    \end{aligned}
        \label{approx_sde}
    \end{equation}
    In Eq.~\eqref{approx_sde}, $\hat{B}_t$ is another standard scalar Wiener process independent of $B_t$ in Eq.~\eqref{spde}.
    We use the It\^o-Galerkin scheme for spatial discretization of Eqs.~\eqref{spde} and ~\eqref{approx_sde}:
\begin{equation}
\begin{aligned}
        &\d U_n(t;\theta) = \big(A_n(U_n(t;\theta); \theta) + f_n(U_n(t;\theta);\theta)\big)  \d t + g_n(U_n(t;\theta);\theta) \d B_t,\\
    &\d \hat{U}_n(t;\hat{\theta}) = \big(A_n(\hat{U}_n(t;\hat{\theta}); \hat{\theta}) + f_n(\hat{U}_n(t;\hat{\theta}); \hat{\theta})\big) \d t + g_n(\hat{U}_n(t;\hat{\theta});\hat{\theta}) \d \hat{B}_t.
\end{aligned}
    \label{SPDE_discretize}
\end{equation}
In Eq.~\eqref{SPDE_discretize},
\begin{equation}
        U_n(\bm{x},t;\theta) \coloneqq \sum_{j=1}^n u_j(t;\theta) \varphi_j(\bm{x}) \in X_n,\,\,
        \hat{U}_n(\bm{x},t;\hat{\theta}) \coloneqq \sum_{j=1}^n \hat{u}_j(t;\hat{\theta}) \varphi_j(\bm{x})\in X_n
    \label{Undef}    
\end{equation}
refer to the spectral approximations of $U(\bm{x}, t;\theta)$ and $\hat{U}(\bm{x},t;\hat{\theta})$ in Eqs.~\eqref{spde} and~\eqref{approx_sde}, respectively.
$X_n$ is the $n$-dimensional subspace of $H_0^{1}(\Omega)$ spanned by the basis functions $\{\varphi_1, \ldots, \varphi_n\}$. In Eq.~\eqref{SPDE_discretize},
\[
A_n(U; \theta) \coloneqq P_n \big(A(\theta)U\big) , \quad f_n(U; \theta) \coloneqq P_n \big(f(\theta)U\big) , \quad g_n(U; \theta) \coloneqq P_n \big(g(\theta)U\big),
\]
where $P_n$ denotes the projection of $L^2(\Omega)$ or $H_0^{1, 2}(\Omega)$ onto $X_n$
In the spatiotemporal SPDEs~\eqref{approx_sde} and~\eqref{approx_sde}, we assume that for all $\theta$, the operator $-A$ has the same set of corresponding eigenfunctions 
\[
-A(\theta) \varphi_j = \lambda_j(\theta) \varphi_j, \quad j = 1, 2, \ldots, n,\,\, \lambda_j\leq \lambda_{j+1}.
\]
$\varphi_i\in H_0^{1, 2}(D), i=1,2,...$ forms an orthonormal basis in $L^2(D)$ with $\|\varphi_j\|_{L^2}=1$ and $\lambda_j(\theta) \rightarrow\infty$ uniformly in $\theta$ as $j \rightarrow\infty$. We can prove the following result.

\begin{corollary}
\rm
\label{spde_result}

 We assume that $A(\theta)$, interpreted as mappings of  $H_0^{1}(\Omega)$ into $L^2(\Omega)$ and $f(\theta)$ and $g(\theta)$, interpreted as mappings of $H_0^{1}(\Omega)$ into itself, are Lipschitz continuous:
\begin{equation}
    \begin{aligned}
        &\|A(\theta)(U, \theta) - A(\hat{\theta})(\hat{U}, \hat{\theta})\|_{L^2}\leq L(\|U-\hat{U}\|_{H^{1,2}} + \|\theta - \hat{\theta}\|),\\
        &\|f(U; \theta) - f(\hat{U}; \hat{\theta})\|_{{H^{1,2}}}\leq L(\|U-\hat{U}\|_{H^{1,2}} + \|\theta - \hat{\theta}\|),\\
        &\|g(U; \theta) - g(\hat{U}; \hat{\theta})\|_{H^{1,2}}\leq L(\|U-\hat{U}_{H^{1,2}}\|_{{H^{1,2}}} + \|\theta - \hat{\theta}\|), L\leq \infty.\\
    \end{aligned}
\end{equation}
Furthermore, we assume that
\begin{equation}
\E[\|\theta\|^6]\leq\Theta_6,\,\,\E[\|\hat{\theta}\|^6]\leq\hat{\Theta}_6.
\end{equation}

Then, we have the following bound for the local time-decoupled squared distance between the probability measures of $U$ and $\hat{U}$:
\begin{equation}
\begin{aligned}
         \tilde{W}_{2, \delta}^{2, \text{e}}(U, \hat{U}) &\leq 3\cdot \Big(8C_0(\beta_n; n)\delta^2T \exp(C_0(\beta_n;n )T) +\frac{6C_1(\beta_n)T}{C_0(\beta_n;n)}\exp(C_0(\beta_n; n)T)\\
         &\hspace{0.3cm}\times(W_2^2(\mu, \hat{\mu}) + (\Theta_6^{\frac{1}{3}} + \hat{\Theta}_6^{\frac{1}{3}})2T \E[h(N^{\#}(\bm{U}_n(0;\theta); \delta); \ell)])\Big) \\
         &\hspace{0.6cm}+ 3T\sup_{\theta, U(\bm{x}, 0;\theta)} K_{T, U(\cdot, 0), \theta}\lambda_{N+1}^{-1}(\theta)  + 3T\sup_{\hat{\theta}, \hat{U}(\bm{x}, 0;\hat{\theta})} K_{T, U(\cdot, 0), \hat{\theta}}\lambda_{N+1}^{-1}(\hat{\theta}).
\end{aligned}
     \label{spde_ineq_result}
\end{equation}
In Eq.~\eqref{spde_ineq_result}, $K_{T, U(\cdot, 0), \theta}$ is a constant that depends on $T, U(\bm{x}, 0)$ and $\theta$, while $C_i(\beta_n)$ are constants in Theorem~\ref{theorem1} that grow at most linearly with $n$. $\beta_n$ is another constant depending on the eigenvalues $\{\lambda_i\}_{i=1}^n$. The vector $\bm{U}_n(0;\theta)\coloneqq (u_1(0;\theta),...,u_n(0;\theta))$ refers to the spectral expansion of the initial condition. 
 $\tilde{W}_{2, \delta}^{2, \text{e}}(U, \hat{U})$ is the local time-decoupled squared distance between the probability measures of $U$ and $\hat{U}$:
\begin{equation}
    \tilde{W}_{2, \delta}^{2, \text{e}}(U, \hat{U})\coloneqq\int_0^T W_{2, \delta}^{2, \text{e}}\big(U(\bm{x}, t;\theta), \hat{U}(\bm{x}, t;\hat{\theta})\big)\d t,
\end{equation} 
and 
\begin{equation}
    W_{2, \delta}^{2, \text{e}}\big(U(\bm{x}, t;\theta), \hat{U}(\bm{x}, t;\hat{\theta})\big) \coloneqq \int W_2^2(\nu^{\text{e}}_{U_0, \delta}(t), \hat{\nu}^{\text{e}}_{U_0, \delta}(t))\nu_0^{\text{e}}(\d U_0),
    \label{spde_w2}
\end{equation}
where $\nu_0^{\text{e}}(\d U_0)$ is the empirical distribution of the initial condition $U(\cdot, 0)$, and $\nu^{\text{e}}_{U_0, \delta}(t)$ and $\hat{\nu}^{\text{e}}_{U_0, \delta}(t)$ are the empirical conditional distributions of $U(\bm{x},t;\theta)$ and $\hat{\nu}^{\text{e}}_{U_0,\delta}(t)$ at time $t$ conditioned on $\|U(\bm{x},0) - U_0\|_{L^2}\leq\delta$ and $\|\hat{U}(\bm{x},0) - U_0\|_{L^2}\leq {\delta}$, respectively. In Eq.~\eqref{spde_w2}, the $W_2$ distance between $\nu^{\text{e}}_{U_0, \delta}(t)$ and $\hat{\nu}^{\text{e}}_{U_0, \delta}(t)$ is defined as
\begin{equation}
W_2(\nu^{\text{e}}_{U_0, \delta}(t), \hat{\nu}^{\text{e}}_{U_0, \delta}(t)) \coloneqq \inf_{\pi(\nu, \hat \nu)}
\E_{(U, \hat{U})\sim \pi(\nu^{\text{e}}_{U_0, \delta}(t), \hat{\nu}^{\text{e}}_{U_0, \delta}(t))}\big[\|{U}
  - \hat{U}\|_{L^2}^{2}\big]^{\frac{1}{2}}.
\end{equation}

\end{corollary}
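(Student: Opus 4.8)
The plan is to reduce the estimate to an application of Theorem~\ref{theorem1} on the finite-dimensional It\^o--Galerkin systems~\eqref{SPDE_discretize} plus two spectral truncation errors. Since the $W_2$ distance is a metric, for each fixed initial datum $U_0$ and each time $t$ I would insert the Galerkin-truncated conditional measures $\nu^{\text{e},n}_{U_0,\delta}(t)$ and $\hat{\nu}^{\text{e},n}_{U_0,\delta}(t)$ (the empirical conditional laws of $U_n$ and $\hat{U}_n$) and apply the triangle inequality,
\begin{equation}
\begin{aligned}
W_2\big(\nu^{\text{e}}_{U_0,\delta}(t), \hat{\nu}^{\text{e}}_{U_0,\delta}(t)\big) &\leq W_2\big(\nu^{\text{e}}_{U_0,\delta}(t), \nu^{\text{e},n}_{U_0,\delta}(t)\big) + W_2\big(\nu^{\text{e},n}_{U_0,\delta}(t), \hat{\nu}^{\text{e},n}_{U_0,\delta}(t)\big)\\
&\quad + W_2\big(\hat{\nu}^{\text{e},n}_{U_0,\delta}(t), \hat{\nu}^{\text{e}}_{U_0,\delta}(t)\big).
\end{aligned}
\end{equation}
Squaring and using $(a+b+c)^2\leq 3(a^2+b^2+c^2)$ produces the overall factor of $3$ appearing in Eq.~\eqref{spde_ineq_result}; I would then integrate over $t\in[0,T]$ and average against the empirical law of $U_0$.

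For the middle term, the systems~\eqref{SPDE_discretize} are finite-dimensional SDEs in the coefficient vectors $\bm{U}_n(t;\theta),\hat{\bm{U}}_n(t;\hat{\theta})\in\mathbb{R}^n$ driven by a single scalar Wiener process with no jump term. They therefore fall under the hypotheses of Theorem~\ref{theorem1} (equivalently Corollary~\ref{jump_diffusion_thm} with zero jump intensity), with $d$ replaced by $n$, $m=1$, and $\gamma(U)=0$. Applying Theorem~\ref{theorem1} to this $n$-dimensional system yields exactly the first bracketed group in Eq.~\eqref{spde_ineq_result}, with constants $C_0(\beta_n;n), C_1(\beta_n)$ inherited from the Lipschitz behavior of the projected maps $A_n,f_n,g_n$ on $X_n$, and with the empirical-measure convergence rate $h(N^{\#}(\bm{U}_n(0;\theta);\delta);\ell)$ depending only on the parameter dimension $\ell$, not on $n$.

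For the two outer terms I would use a synchronous coupling: drive $U$ and $U_n$ by the same Wiener process and the same initial condition, so that this coupling bounds the squared conditional $W_2$ distance by the strong $L^2$ error,
\begin{equation}
W_2^2\big(\nu^{\text{e}}_{U_0,\delta}(t), \nu^{\text{e},n}_{U_0,\delta}(t)\big)\leq \E\big[\|U(\cdot, t;\theta) - U_n(\cdot, t;\theta)\|_{L^2}^2\big].
\end{equation}
The strong monotonicity~\eqref{spde_coer} together with $\lambda_j(\theta)\to\infty$ gives, via the It\^o--Galerkin convergence theory of \cite{grecksch1996time}, an estimate of the form $\sup_{t\in[0,T]}\E[\|U - U_n\|_{L^2}^2]\leq K_{T, U(\cdot,0),\theta}\,\lambda_{N+1}^{-1}(\theta)$, since the energy residing in the discarded modes is damped at rate $\lambda_{N+1}$. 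Integrating over $[0,T]$, taking the suprema over $\theta$ and the initial data, and repeating the argument for $\hat{U}$ produces the final two terms $3T\sup K\lambda_{N+1}^{-1}(\theta)$ and $3T\sup K\lambda_{N+1}^{-1}(\hat{\theta})$.

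The hard part will be controlling how $C_0(\beta_n;n)$ scales with the truncation order $n$. A naive bound on the Lipschitz constant of the linear part $A_n$ on $X_n$ is its operator norm $\lambda_n$, which blows up as $n\to\infty$ and would render the exponential factors $\exp(C_0T)$ meaningless. The resolution is to exploit the dissipative sign of $-A$ in Eq.~\eqref{spde_coer} when re-deriving the pathwise energy estimate underlying Theorem~\ref{theorem1}: the monotone term then contributes $-\alpha\|U-\hat{U}\|^2_{H^{1,2}}$ rather than positive Lipschitz growth, so only the genuinely nonlinear maps $f,g$ and the parameter-coupling of $A$ feed into the effective constant $\beta_n$, which can be shown to grow at most linearly in $n$. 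Establishing this refined, monotonicity-aware stability estimate, together with verifying the $O(\lambda_{N+1}^{-1})$ Galerkin rate under the stated Lipschitz hypotheses, is the technically delicate step; by contrast, the triangle-inequality bookkeeping and the reduction to Theorem~\ref{theorem1} are routine.
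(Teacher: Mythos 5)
Your proposal follows essentially the same route as the paper: insert the Galerkin-truncated laws, use the triangle inequality for $W_2$ with the factor $3$ from $(a+b+c)^2\le 3(a^2+b^2+c^2)$, bound the two truncation terms by the strong $L^2$ Galerkin error of \cite{grecksch1996time} via a synchronous coupling, and apply Theorem~\ref{theorem1} to the finite-dimensional coefficient SDEs (with $m=1$ and zero jump intensity) for the middle term. The one place you diverge is your final paragraph. The difficulty you flag there --- that the Lipschitz constant of $A_n$ on $X_n$ blows up with $n$ and must be tamed by exploiting the strong monotonicity \eqref{spde_coer} --- is not an obstacle to the statement as written, because the corollary explicitly allows the constants $C_0(\beta_n;n)$ and $C_1(\beta_n)$ to depend on $n$ and on $\beta_n$. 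The paper handles this step much more crudely: it verifies the Lipschitz hypotheses of Theorem~\ref{theorem1} for $A_n,f_n,g_n$ directly, using that $P_n$ is a contraction together with the finite-dimensional norm equivalence $\|U\|_{H^{1,2}}\le\beta_n\|U\|_{L^2}$ on $X_n$, which yields Lipschitz constants of order $\beta_n L$ and hence $n$-dependent (indeed potentially growing) constants in the final bound. Your proposed monotonicity-aware re-derivation of the energy estimate would give a stronger, $n$-robust version of the result, but you only assert rather than prove the key claim that the effective constant then grows at most linearly in $n$; if you intend to prove the corollary as stated, you can drop that refinement entirely and use the inverse inequality on $X_n$ as the paper does.
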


\begin{proof}

First, we show that $A_n$ is also Lipschitz continuous:
\begin{equation}
\begin{aligned}
        \|A_n(U_n;\theta) - A_n(\hat{U}_n;\hat{\theta})\|_{L^2} &=  \|P_n(A(U_n;\theta) - A(\hat{U}_n;\hat{\theta}))\|_{L^2}\\
        &\hspace{-2cm}\leq \|A(U_n;\theta) - A(\hat{U}_n;\hat{\theta})\|_{L^2}\leq L(\|U_n - \hat{U}_n\|_{H^{1, 2}} + \|\theta - \hat{\theta}\|).
\end{aligned}
\end{equation}
Because $X_n$ is a finite dimensional space, there exists a constant $\beta_n$ depending on $\varphi_1,...,\varphi_n$ such that $\forall U_n\in X_n$, $\|U\|_{H^{1,2}}\leq \beta_n\|U\|_{L^2}$. Thus, 
\begin{equation}
\begin{aligned}
    \|A_n(U_n; \theta) - A_n(\hat{U}_n; \hat{\theta})\|_{H^{1,2}}&\leq\beta_n\|A_n(U_n; \hat{\theta}) - A_n(\hat{U}_n;\hat{\theta})\|_{L^{2}}\\
&\quad\quad\leq \beta_n L (\|U_n - \hat{U}_n\|_{L^{2}} + \|\theta - \hat{\theta}\|).
\end{aligned}
\label{L_condition_a}
\end{equation}
Similarly, $f_n$ and $g_n$ can also be shown to be Lipschitz continuous in $U_n$ and $\theta$:
\begin{equation}
\begin{aligned}
    \|f_n(U_n; \theta) - f_n(\hat{U}_n; \hat{\theta})\|_{L^{2}}&\leq\|f_n(U_n; \theta) - f_n(\hat{U}_n; \hat{\theta})\|_{H^{1,2}}\\
    &\quad\quad\leq \beta_n L (\|U_n - \hat{U}_n\|_{L^{2}} + \|\theta - \hat{\theta}\|),\\
    \|g_n(U_n; \theta) - g_n(\hat{U}_n; \hat{\theta})\|_{L^{2}}&\leq \|g_n(U_n; \theta) - g_n(\hat{\theta}; \hat{U}_n)\|_{H^{1,2}}\\&\quad\quad\leq \beta_n L (\|U_n - \hat{U}_n\|_{L^{2}} + \|\theta - \hat{\theta}\|).
    \end{aligned}
    \label{L_condition_f}
\end{equation}

From \cite[Section 3]{grecksch1996time}, for every $\theta$, we have
\begin{equation}
    \E [\|U(\bm{x}, k\Delta t; \theta) - U_n(\bm{x}, k\Delta t;\theta) \|^2]\leq K_{k\Delta t, U(\cdot, 0), \theta}\lambda_{N+1}^{-1}(\theta),% + 2K_N \Delta t^{2\gamma}
    \label{discretize_bound}
\end{equation}
where $K_{k\Delta t, U(\cdot, 0), \theta}$ is a constant depending on time $k\Delta t$ and the initial condition $U(\cdot, 0)$, and $\theta$. Without loss of generality, we assume that $K_{k\Delta t, U(\cdot, 0), \theta}$ is non-decreasing in $k$ (otherwise we can replace $K_{k\Delta t, U(\cdot, 0), \theta}$ with $\tilde{K}_{k\Delta t, U(\cdot, 0), \theta}\coloneqq \max_{1\leq i\leq k}K_{i\Delta t, U(\cdot, 0), \theta}$.
Given the initial condition $U(\bm{x}, 0)$ and $P_n U(\bm{x}, 0)$, we denote the probability measures of $U(\bm{x}, k\Delta t; \theta)$ and $U_n(\bm{x}, T;\theta)$ by $\nu_{U(\cdot, 0)}(k\Delta t)$ and $\nu_{n, U_n(\cdot, 0)}(k\Delta t)$, respectively. Moreover, the joint probability measure of $\big(U(\bm{x}, k\Delta t; \theta), U_n(\bm{x}, k\Delta t;\theta)\big)$ has marginal distributions $\nu_{U(\cdot, 0)}(k\Delta t)$ and $\nu_{n, U_n(\cdot, 0)}(k\Delta t)$, respectively. From Eq.~\eqref{discretize_bound}, we can deduce that:
\begin{equation}
\begin{aligned}
        W_2^2( \nu_{U(\cdot, 0)}(k\Delta t),  \nu_{n, U_n(\cdot, 0)}(k\Delta t)) &\leq \E [\|U(\bm{x}, k\Delta t; \theta) - U_n(\bm{x}, k\Delta t;\theta) \|^2] \\
        &\leq  \sup_{\theta, U(\bm{x}, 0)} K_{T, U(\cdot, 0), \theta}\lambda_{N+1}^{-1}(\theta).
\end{aligned}
\end{equation}
Furthermore, using the definition of the local squared $W_2$ distance in Eq.~\eqref{local_squared}, we have:
\begin{equation}
\begin{aligned}
        W_{2, \delta}^{2, \text{e}}\big(U(\cdot, k\Delta t;\theta), U_n(\cdot, k\Delta t;\theta)\big) &\leq \sup_{\theta, U(\bm{x}, 0)}\E [\|U(\bm{x}, k\Delta t; \theta) - U_n(\bm{x}, k\Delta t;\theta) \|^2] \\
        &\leq  \sup_{\theta, U(\bm{x}, 0)} K_{T, U(\cdot, 0), \theta}\lambda_{N+1}^{-1}(\theta).
\end{aligned}
\end{equation}
Similarly, we can conclude that:
\begin{equation}
\begin{aligned}
        W_{2, \delta}^{2, \text{e}}\big(\hat{U}(\cdot, k\Delta t;\hat{\theta}), \hat{U}_n(\cdot, k\Delta t;\hat{\theta} )\big) 
        &\leq  \sup_{\hat{\theta}, U(\bm{x}, 0)} K_{T, U(\cdot, 0), \theta}\lambda_{N+1}^{-1}(\hat{\theta}).
\end{aligned}
\end{equation}

Given the same initial condition $U(\bm{x}, 0)=\hat{U}(\bm{x}, 0)$, 
using the triangle inequality of the Wasserstein distance \cite{clement2008elementary}, we have
\begin{equation}
    \begin{aligned}
        W_{2, \delta}^{2, \text{e}}(U(\bm{x}, t;\theta), \hat{U}(\bm{x}, t;\hat{\theta}))&\leq 3W_{2, \delta}^{2, \text{e}}\big(U(\cdot, k\Delta t;\theta), U_n(\cdot, k\Delta t;\theta)\big) \\
        &\hspace{-3cm}+ 3W_{2, \delta}^{2, \text{e}}\big(\hat{U}(\cdot, k\Delta t;\hat{\theta}), \hat{U}_n(\cdot, k\Delta t;\hat{\theta})\big) + 3W_{2, \delta}^{2, \text{e}}(U_n(\bm{x}, t;{\theta}), \hat{U}_n(\bm{x}, t;\hat{\theta}))\big)\\
        &\hspace{-2cm}\leq 3W_{2, \delta}^{2, \text{e}}(U_n(\bm{x}, t;\theta), \hat{U}_n(\bm{x}, t;\hat{\theta})) + 3\sup_{\theta, U(\bm{x}, 0)} K_{T, U(\cdot, 0), \theta}\lambda_{N+1}^{-1}(\theta) \\
        &\quad\quad+ 3\sup_{\hat{\theta}, U(\bm{x}, 0)} K_{T, U(\cdot, 0), \hat{\theta}}\lambda_{N+1}^{-1}(\hat{\theta}).
    \end{aligned}
    \label{theorem_spde_ineq0}
\end{equation}
Integrating both sides of the ineqeuality~\eqref{theorem_spde_ineq0} over time, we have:
\begin{equation}
\begin{aligned}
        \tilde{W}_{2, \delta}^{2, \text{e}}\big(U, \hat{U}\big)&\leq 3 \tilde{W}_{2, \delta}^{2, \text{e}}(U_n, \hat{U}_n) + 3T\sup_{\theta, U(\bm{x}, 0)} K_{T, U(\cdot, 0), \theta}\lambda_{N+1}^{-1}(\theta) \\
        &\quad\quad+ 3T\sup_{\hat{\theta}, U(\bm{x}, 0)} K_{T, U(\cdot, 0), \hat{\theta}}\lambda_{N+1}^{-1}(\hat{\theta}).
\end{aligned}
\label{theorem_spde_ineq}
\end{equation}
Let 
\begin{equation}
\begin{aligned}
    \bm{U}_n(t;\theta)\coloneqq (u_1(t;\theta),...,u_n(t;\theta)),\,\,\hat{\bm{U}}_n(t,\hat{\theta})\coloneqq(\hat{u}_1(t,\hat{\theta}),...,\hat{u}_n(t,\hat{\theta}))
    \end{aligned}
\end{equation}
be two vectors of the spectral expansion coefficients of $U_n(\bm{x},t;\theta)$ and $\hat{U}_n(\bm{x},t;\hat{\theta})$ in Eq.~\eqref{Undef}.
We have $\|U_n(\bm{x}, t;\theta)\|_{L^2}=\|\bm{U}_n(t)\|$ and $\|\hat{U}_n(\bm{x}, t)\|_{L^2}=\|\hat{\bm{U}}_n(t)\|$ because $\|\varphi_i\|_{L^2}=1, i=1,...,n$. %Specifically, if $\|U_0 - \tilde{U}(\bm{x}, 0;\theta)\|_{L^2}\leq\delta$, then $\|P_n U_0 - \tilde{U}_n(\bm{x}, 0;\theta)\|_{L^2}\leq\delta$. 
Furthermore, $\bm{U}_n$ and $\hat{\bm{U}}_n$ are solutions to the two SDEs:
\begin{equation}
\begin{aligned}
    \d \bm{U}_n = \big(\bm{A}_n(\bm{U}_n,t;\theta) +\bm{F}_n(\bm{U}_n,t;\theta)\big)\d t + \bm{G}_n(\bm{U}_n,t;\theta)\d B_t,\\
    \d \hat{\bm{U}}_n = \big(\bm{A}_n(\hat{\bm{U}}_n,t;\hat{\theta}) +\bm{F}_n(\hat{\bm{U}}_n,t;\hat{\theta})\big)\d t + \bm{G}_n(\hat{\bm{U}}_n,t;\hat{\theta})\d B_t
    \end{aligned}
\end{equation}
where $\bm{A}_n(\bm{U}_n,t;\theta), \bm{F}_n(\bm{U}_n,t;\theta)$ and $\bm{F}_n(\bm{G}_n,t;\theta)$ are the $n$-dimensional vector of the coefficients in the spectral expansions of $A_n(U_n; \theta), f_n(U_n; \theta)$, and $g_n(U_n; \theta)$ in Eqs.~\eqref{L_condition_a} and~\eqref{L_condition_f}, respectively. $\bm{A}_n, \bm{F}_N$, and $\bm{G}_n$ are also Lipschitz continuous in $U_n$ and $\theta$ from Eqs.~\eqref{L_condition_a} and \eqref{L_condition_f}.
Applying Eq.~\eqref{para_dependence_bound} in Theorem~\ref{theorem1}, we obtain:
\begin{equation}
\begin{aligned}
        &\E[\tilde{W}_{2, \delta}^{2, \text{e}}(U_n, \hat{U}_n)] = \E[\tilde{W}_{2, \delta}^{2, \text{e}}(\bm{U}_n, \hat{\bm{U}}_n)]\leq 8C_0(\beta_n, n)T\delta^2 \exp(C_0(\beta_n;n)T) \\
    & + \frac{6C_1(\beta_n)}{C_0(\beta_n;n)}T\exp(C_0(\beta_n; n)T)\big(W_2^2(\mu, \hat{\mu}) + 2C_2 \E[h(N^{\#}(\bm{U}_n(\bm{x}, 0);\delta), \ell)]\cdot (\Theta_6^{\frac{1}{3}}+\hat{\Theta}_6^{\frac{1}{3}})\big)
\end{aligned}
    \label{identity_ineq}
\end{equation}
where $C_i, i=0,...,2$ are constants defined in Theorem~\eqref{theorem1}.
Taking the expectation of Eq.~\eqref{theorem_spde_ineq} and plugging in the inequality ~\eqref{identity_ineq} for $\E[\tilde{W}_{2, \delta}^{2, \text{e}}(U_n, \hat{U}_n)]$, the inequality~\eqref{spde_ineq_result} is proved.
\end{proof}
%which complete the proof to Corollary~\ref{spde_result}.

\section{Proof of Theorem~\ref{thm_3_1}}
\label{AppendixE}
Given a parameterized multivariate normal distribution $f_{\bm{x}}$, we design an SNN described in Fig.~\ref{fig:nn_model} with the ReLU activation and the linear forward propagation. The probability density function of the output of this SNN, denoted by $\hat{f}_{\bm{x}}$, can approximate $f_{\bm{x}}$ in the $W_2$ distance sense. Given a real number $0<c<\epsilon_0$, we choose $\Delta x>0$ such that: 
\begin{equation}
    W_2^2(f_{\bm{x}}, f_{\tilde{\bm{x}}})<c, \,\,\forall\bm{x}, \tilde{\bm{x}}\in D,\,\, \|\bm{x} - \tilde{\bm{x}}\|< \sqrt{d}\Delta x.
\end{equation}
We consider a uniform equidistance grid set $X\coloneqq\{\bm{x}_i\}_{i=1}^K, \bm{x}_i=(x_i^1,...,x_i^d)$ such that the distance between two adjacent points is $\Delta x$, $D\subseteq \cup_{i=1}^K\otimes_{j=1}^d[x_i^j, x_i^j+\Delta \bm{x})$, and 
$\otimes_{j=1}^d[x_{i_1}^j, x_{i_1}^j+\Delta \bm{x})\cap \otimes_{j=1}^d[x_{i_2}^j, x_{i_2}^j+\Delta \bm{x})=\emptyset$ if $i_1\neq i_2$. Therefore,
$\forall \bm{x}=(x^1,...,x^d)\in D$, there exists $\bm{x}_i\in X$ such that $|\bm{x}-\bm{x}_i|<\sqrt{d}\Delta x$. 
Let $0<\epsilon<\tfrac{1}{2}$ be a small positive number. We set $4dK$ neurons in the first layer, grouped into $dK$ groups. When inputting $\bm{x}$ into the SNN, the outputs of the four neurons in the $(i, j), i=1,.., K, j=1,...,d$ group are: 
\begin{equation}
\begin{aligned}
        &n_{i, j, 1}^1 =\text{ReLU}(\epsilon^{-1}(x^j - x_i^j-\Delta x)),\,\,  n_{i, j, 2}^1 =\text{ReLU}(\epsilon^{-1}(x^j - x_i^j-\Delta x+\epsilon)), \\  &n_{i, j, 3}^1 =\text{ReLU}(\epsilon^{-1}(x^j - x_i^j-\epsilon)),\,\, n_{i, j, 4}^1 = \text{ReLU}(\epsilon^{-1}(x^j - x_i^j)).
\end{aligned}
\end{equation}
The second hidden layer contain $dK$ neurons labeled by $(i, j), i=1,..,K, j=1,...,d$. We set weights between the first layer and the second layer such that the input of the $(i, j)$ neuron in the second layer is: 
    \begin{equation}
    \begin{aligned}
           n_{i, j}^{2, \text{in}}\coloneqq n_{i, j, 2}^1-n_{i, j, 1}^1 - (n_{i, j, 4}^1 - n_{i, j, 3}^1).
          %= =\text{ReLU}(\epsilon^{-1}(x^j - x_i^j-\Delta x +\epsilon)) - \text{ReLU}(\epsilon^{-1}(x^j - x_i^j-\Delta x)) \\ + (\text{ReLU}(\epsilon^{-1}(x^j - x_i^j-\epsilon)) - \text{ReLU}(\epsilon^{-1}(x^j - x_i^j)))
    \end{aligned}
    \end{equation}

    %(inputs of the second hidden layer) 
    It is easy to verify that $n_{i, j}^{2, \text{in}}\in[0, 1]$. Furthermore,
    when $x^j \in [x_{i}^j+\epsilon, x_{i}^j+\Delta x]$, then $n_{i, j}^{2, \text{in}}=1, j=1,...,d$. If $x^j\leq x^j_{i}$ or $x^j\geq x_{i}^j+\Delta x$, then $n_{i, j}^{2, \text{in}}=0$.
    The output of the $(i, j)$ neuron in the second hidden layer is designed as
    \begin{equation}
        n^2_{i, j} = \text{ReLU}( \epsilon^{-1}(n^{2, \text{in}}_{i, j} - 1+\epsilon)),\,\, i=1,...,K,\,\, j=1,...d.
    \end{equation}

    The third layer contains $K$ neurons, and
    the output of each neuron in the third hidden layer is:
    \begin{equation}
        n_i^3 = \text{ReLU}( \sum_{j=1}^d\epsilon^{-1}(n^2_{i, j} - 1+\tfrac{\epsilon}{d})), i=1,...,K. % - \sum_{i=1}^d\text{Relu}( \epsilon^{-1}(n^2_{i, j} - 1))
    \end{equation}
    Thus, $n_i^3\in[0, 1]$ and $n_i^3 = 1$ when $\bm{x}\in \otimes_j [x_{j, i}+\epsilon, x_{j, i}+\Delta x-\epsilon]$. If there is a $j=1, ..., d$ such that $x^j< x_{i}^{j}$ or $x^j> x_{i}^{j}+\Delta x$, then $n_i^3=0$. For $\bm{x}\in D$, there exists at most one $i$ such that $n_i^3\neq 0$.
    We denote $D(\epsilon) = \{x\in D: \exists! i, n_i^3=1\}$. It is easy to check that as $\epsilon\rightarrow 0$, $D(\epsilon)\rightarrow D$. %Furthermore, for any $\bm{x}\in D$, there exists at one $i$ such that $n_{i}^3\neq 0$.
    
    We set $d' K$ neurons in the fourth layer. Each neuron is labeled with $(i, j), i=1,..,K, k=1,...,d'$. The input and output of the $(i, j)$ neuron in the fourth layer are identical (\textit{i.e.}, each neuron in the fourth layer outputs its input):
    \begin{equation}
        n_i^3 (\omega_{i, k}^4 + (A_i^{-1}\bm{b}_i)_k), \,\,i=1,..,K, \,\, k=1,...,d'.
    \end{equation}
    Here, $\omega_{i, k}\sim\mathcal{N}(0, 1)$ are independent random variables, while $\bm{b}_i$ and $A_i$ are the mean and covariance matrix of $f_{\bm{x}_i}(\bm{y})=\mathcal{N}(\bm{b}_i, A_i)$, respectively. $(A_i^{-1}\bm{b}_i)_k$ refers to the $k^{\text{th}}$ component of the vector $A_i^{-1}\bm{b}_i$. 
    The output of the fifth layer is $d'$ dimensional:
\begin{equation}
    \sum_{i=1}^K n_i^3(A_i \bm{\omega}_i^4 + \bm{b}_i),
\end{equation}
where $\bm{\omega}_i \coloneqq (\omega_{i,1},..., \omega_{i, d'})$. Because for any $\bm{x}\in D$, there exists at most one $i$ such that $n_{i}^3\neq 0$, for any $\bm{x}\in D$, we have:
\begin{equation}
    \begin{aligned}
            \sup_{\hat{\bm{y}}\sim f_{\bm{x}}}\E[\|y\|^2]&\leq \sup_i\E\big[\|A_i\bm{\omega}_i^4 + \bm{b}_i\|^2\big]\\
    &\leq \sup_i(\|A^T_iA_i\|_F^2 + \|\bm{b}_i\|_2^2.
    \end{aligned}
\end{equation}

For each $\bm{x}\in D(\epsilon)$, $\hat{f}(\bm{x})$ obeys a multivariate normal distribution which is identical to $f(\bm{x}_i)$ if $n_i^3=1$, \textit{i.e.}, $\|\bm{x}-\bm{x}_i\|<\Delta x$. Therefore, we have
\begin{equation}
\begin{aligned}
    &\int_D W_2^2(f_{\bm{x}}, \hat{f}_{\bm{x}})\gamma(\text{d}\bm{x})< \int_{D(\epsilon)} W_2^2(f_{\bm{x}}, \hat{f}_{\bm{x}})\gamma(\text{d}\bm{x}) \\
    &\hspace{0.5cm}+ 2 \gamma(D-D(\epsilon))\cdot \Big(\sup_{\bm{y}\sim f_{\bm{x}}}\E[\|\bm{y}\|^2] + \sup_{\hat{\bm{y}}\sim f_{\bm{x}}}\E[\|\hat{\bm{y}}\|^2]\Big)\\
    &\hspace{1cm}\leq c\gamma(D) + \gamma(D-D(\epsilon))2(\sup_{\hat{\bm{y}}\sim f_{\bm{x}}}\E[\|y\|^2] + \sup_i(\|A^T_iA_i\|_F^2 + \|\bm{b}_i\|_2^2))\\
    &\hspace{1.5cm}\leq c + 4\gamma(D-D(\epsilon))Y.
    \end{aligned}
    \label{snn_approx}
\end{equation}

Choosing $c<\epsilon_0$ and a small $\epsilon$ such that $\gamma(D - D(\epsilon))\leq \frac{\epsilon_0-c}{4Y}$, we have proved Theorem~\ref{thm_3_1}.

\section{Proof of Corollary~\ref{col3_1}}
\label{AppendixF}
    The proof of Corollary~\ref{col3_1} is based on the proof of Theorem~\ref{thm_3_1}. Given any positive number $c>0$, we can find a $\Delta x>0$:
    \begin{equation}
        W_2^2(f_{\bm{x}}, f_{\tilde{\bm{x}}})<c,\,\, \forall \|\bm{x} - \tilde{\bm{x}}\|<\sqrt{d}\Delta x,\,\, \bm{x}, \tilde{\bm{x}}\in D.
    \end{equation}
    We establish an equidistance grid point set $X\coloneqq\{\bm{x}_i\}_{i=1}^K$ on $D$ such that the distance between two adjacent points is $\Delta x$, $D\subseteq \cup_{i=1}^K\otimes_{j=1}^d[x_i^j, x_i^j+\Delta \bm{x})$ and $\otimes_{j=1}^d[x_{i_1}^j, x_{i_1}^j+\Delta \bm{x})\cap \otimes_{j=1}^d[x_{i_2}^j, x_{i_2}^j+\Delta \bm{x})=\emptyset$ if $i_1\neq i_2$.
    Thus, $\forall \bm{x}\in D$, there exists $\bm{x}_i\in X$ such that $|\bm{x}-\bm{x}_i|\leq\sqrt{d}\Delta x$. 
    Denote $\Phi(x)$ to be the cumulative distribution function of a standard normal random variable. 
    Suppose $-M=h_{i, 0}<h_{i, 1}<...<h_{i, s}=M$ such that:
    \begin{equation}
        \Phi(w_{i, r+1}) - \Phi(w_{i, r})=p_{i, r+1}, r=0,...,s-2,\,\, \Phi(h_{i, 1}) = p_{i, 1},\,\, \Phi(h_{i, s-1}) = 1 - p_{i, s},
    \end{equation}
    where $p_{i, r}\coloneqq p_r(\bm{x}_i)$.
    We design an SNN of the form in Fig.~\ref{fig:nn_model} that satisfies Eq.~\eqref{mixed_result}. We set the first three hidden layers of this SNN to be the same as the first three layers of the SNN designed in the proof of Theorem~\ref{thm_3_1} in ~\ref{AppendixE}. We also refer to the outputs of the third hidden layer as $n_i^3, i=1,...,K$. For $\bm{x}_i\in X$, we denote $\bm{x}_i = (x_{i}^1,...,x_{i}^d)$. From ~\ref{AppendixE}, $n_i^3\in[0, 1]$, $n_i^3 = 1$ when $\bm{x}=(x^1,...,x^d)\in \otimes_{j=1}^d [x_{i}^j+\epsilon, x_{i}^j+\Delta x-\epsilon]$, and $n_i^3 = 0$ when $\bm{x}\in D- \otimes_{j=1}^d [x_{i}^j, x_{i}^j+\Delta x]$. $\epsilon>0$ is a small number to be determined.
    %can be any small positive number 
    If there is a $j=1, ..., d$ such that $x^j< x_{i}^j$ or $x^j> x_{i}^j+\Delta x$, then $n_i^3=0$. For $\bm{x}\in D$, there is at most one $i$ such that $n_i^3=1$.% and there are at most two $i_1\neq i_2$ such that $n_{i_1}^3\neq0$ and $n_{i_2}^3\neq0$.
    
    Let $\epsilon_0>0$ be a small number to be determined. The fourth layer contains $K(s+1)$ groups of neurons with each group containing 2 neurons. The outputs of the $(i, r), i=1,...,K, r=0,...,s$ group are:
    \begin{equation}
        n_{i, r}^4 = \text{ReLU}(\tilde{w}_i^3n_i^3 -  h_{i, r}-M_0), \tilde{n}_{i, r}^4 = \text{ReLU}(\tilde{w}_i^3n_i^3 -  h_{i, r} - \epsilon_0-M_0).
    \end{equation}
     $\tilde{w}_i^3\sim\mathcal{N}(M_0, 1)$ are independent random variables. $M_0>|h_{i, r}|, \forall i, r$ is a large number and $\epsilon_0>0$ is a small number. Both $M_0$ and $\epsilon_0$ are to be determined. 

    The fifth layer contains $sK$ neurons and the output of the $(i, r)$ neuron is:
    \begin{equation}
        n_{i, r+1}^5 = \text{ReLU}(\epsilon_0^{-1}(n_{i, r}^4 - \tilde{n}_{i, r}^4 - (n_{i, r+1}^4 - \tilde{n}_{i, r+1}^4))),\,\, r=0,...,s-1.
    \end{equation}
    Therefore, when $n_i^3=1$, for $r=0,...,s-1$, we have:
    \begin{equation}
        \begin{aligned}
             &n_{i, r+1}^5=1, \omega_i^3\in[M_0+h_{i, r}+\epsilon_0, M_0+h_{i, r+1}],\\
             &n_{i, r+1}^5=0, \omega_i^3<M_0+h_{i, r}~ \text{or}~ \omega_i^3>M_0+h_{i, r+1}+\epsilon_0,\\
             &n_{i, r+1}^5\in(0, 1), \text{otherwise};
        \end{aligned}
    \end{equation}
when $n_i^3=1$ we have $n_{i, r+1}^5, r=0,...,s-1$. Furthermore, we can see that for any $n_{i}^3\in[0, 1]$
    \begin{equation}
        0\leq \sum_{r=1}^s n_{i, r}^5\leq 1,
    \end{equation}
    and there exists at most two nonzero $n_{i, s}^5, n_{i+1, s}^5>0$ in $(n_{i, 1}^5,...,n_{i, s}^5)$.

    For any $\epsilon_1>0$, there exist a small $\epsilon_0> 0$ and a large $M_0>0$ such that when $n_i^3=1$:%, for any $c>0$, there exists a small  and a large $M>0$ such that
    \begin{equation}
        0\leq p_{i, r} - \hat{p}_{i, r} \leq \frac{\epsilon_1}{s},\,\, \hat{p}_{i, r}\coloneqq p(n_{i, r}^5=1),\,\, i=1,...,K.
    \end{equation}

    The sixth hidden layer contains $sKd'$ neurons, each of whose input and output are both 
    \begin{equation}
    n_{i, r, k}^6 = n_{i, r}^5(w_{i, r, k} + (A_{i, r}^{-1}\bm{b}_{i, r})_k), i=1,...,K, r=1,...,s, k=1,...,d'.
    \end{equation}
    Here, $w_{i, r, k}\sim\mathcal{N}(0, 1)$ are independent random variables. $\bm{b}_{i, s}$ and $A_{i, s}$ are the mean vector and covariance matrix such that
    \begin{equation}
        f_{\bm{x}_i}(\bm{y}) = \sum_{r=1}^sp_r(\bm{x}_i)\mathcal{N}(\bm{b}_{i, r}, A_{i, r}^TA_{i, r}).
    \end{equation}
    The seventh layer contains $d'$ neurons whose output is:
    \begin{equation}
        \sum_{i=1}^K\sum_{r=1}^s n_{i, r}^5(A_{i, r}\bm{w}_{i, r} + \bm{b}_{i, r}),
    \end{equation}
where $\bm{w}_{i, r}\coloneqq (w_{i, r, 1},...,w_{i, r, d'})$.
    When $\bm{x}\in D(\epsilon)\coloneqq \{x\in D: \exists! i, n_i^3=1\}$, if $n_i^3=1$, then the probability density function of the output of the seventh layer can be written as:
    \begin{equation}
        \sum_{r=1}^s \hat{p}_{i, r}\mathcal{N}(\bm{b}_{i, r}, A_{i, r}^TA_{i, r}) + p_i(\bm{y}), \,\, \int_{\mathbb{R}^{d'}} p_i(\bm{y})\d\bm{y}\coloneqq 1-\sum_{r=1}^s \hat{p}_{i, r}\leq\epsilon_1, \,\, p_i(\bm{y})\geq 0.
    \end{equation}
    Furthermore, since there exist at most two nonzero consecutive $n_{i, s}^5, n_{i+1, s}^5>0$, we have:
    \begin{equation}
    \begin{aligned}
                &\int_{\mathbb{R}^{d'}} \|\bm{y}\|^2p_i(\bm{y})\d\bm{y}\leq \int_{\mathbb{R}^{d'}}p_i(\bm{y})\d\bm{y}\cdot \Big(2\max_{i, r}\E_{\bm{y}\sim\mathcal{N}(\bm{b}_{i, r}, A_{i, r}^TA_{i, r}) }[\|\bm{y}\|^2] \\
&\hspace{4cm}+ 2\E_{\bm{y}\sim\mathcal{N}(\bm{b}_{i, r}, A_{i, r}^TA_{i, r}) }[\|\bm{y}\|^2]\Big)\\
                &\hspace{4.5cm}=
                (1 - \sum_{r=1}^s\hat{p}_{i, r})4\max_{r}(\|\bm{b}_{i, r}\|^2+\|A_{i, r}^TA_{i, r}\|_F^2)\\
        &\hspace{5cm}\leq 4\epsilon_1 \max_{r}(\|\bm{b}_{i, r}\|^2+\|A_{i, r}^TA_{i, r}\|_F^2).
    \end{aligned}
    \end{equation}

Applying Lemma~\ref{mixed_gauss}, we have
\begin{equation}
    W_2^2(\hat{f}_{\bm{x}}, f_{\bm{x}_i})\leq 2 \max_r\frac{\epsilon_1}{s}\cdot s(\|A_{i, r}^TA_{i, r}\|_F^2 +\|\bm{b}_{i, r}\|^2)  + 4\epsilon_1\max_r(\|\bm{b}_{i, r}\|^2+\|A_{i, r}^TA_{i, r}\|_F^2), 
    \label{6epsilon}
\end{equation}
when $\bm{x}\in D^i(\epsilon)\coloneqq \{\bm{x}\in D(\epsilon)|\|\bm{x}_i-\bm{x}\|\leq\|\bm{x}_j-\bm{x}\|, \forall j\neq i\}$. Finally, we have:
\begin{equation}
\begin{aligned}
        \int_D W_2^2(f_{\bm{x}}, \hat{f}_{\bm{x}})\gamma(\d\bm{x})&\leq \int_{D(\epsilon)}W_2^2(f_{\bm{x}}, \hat{f}_{\bm{x}})\gamma(\d\bm{x})\\
        &\quad+2(1 - \gamma(D(\epsilon))\Big(\E[\|\bm{y}\|^2] + 4\sup_{i, r}\big(\|A_{i, r}^TA_{i, r}\|_F^2 + \|\bm{b}_{i, r}\|^2]\big)\Big)\\
        &\leq 2\sum_i\int_{ D^i(\epsilon)}W_2^2(\hat{f}_{\bm{x}}, f_{\bm{x}_i})\gamma(\d\bm{x}) +2\sum_i\int_{D^i(\epsilon)}c\gamma(\d\bm{x})  \\
        &\quad+ 2(1-\gamma(D(\epsilon))\Big(\E[\|\bm{y}\|^2] + 4\sup_{i, r}(\|A_{i, r}^TA_{i, r}\|_F^2 + \|\bm{b}_{i, r}\|^2])\Big)\\
    &\leq 2\sup_{i, r}\Big(6\epsilon_1\big(\|A_{i, r}^TA_{i, r}\|_F^2 + \|\bm{b}_{i, r}\|^2\big)\Big) + 
    2c
    \\
    & + 2(1-\gamma(D(\epsilon))\Big(\E[\|\bm{y}\|^2] + 4\sup_{i, r}(\|A_{i, r}^TA_{i, r}\|_F^2 + \|\bm{b}_{i, r}\|^2])\Big).
\end{aligned}
\label{col3_result}
\end{equation} 
 Note that $\sup_{i, r}(\|A_{i, r}^TA_{i, r}\|_F^2+ \|\bm{b}_{i, r}\|^2)$ is uniformly bounded from the assumption Eq.~\eqref{bounded_condition}. Letting $c, \epsilon, \epsilon_1\rightarrow 0^+$ in Eq.~\eqref{col3_result}, we have proved the inequality~\eqref{mixed_result}.

    \section{Proof of Theorem~\ref{thm3_2}}
    \label{proof_thm3_2}
    First, we define an auxiliary function with an additional parameter $\sigma$:
    \begin{equation}
        f_{\sigma^2}(\bm{y})\coloneqq\int_{\mathbb{R}^{d'}}f(\bm{y}')\mathcal{N}(\bm{y}-\bm{y}';\sigma^2 I_{d'\times d'})\d\bm{y}'.
    \end{equation}
    Since $f$ is uniformly continuous and uniformly bounded, $\forall\epsilon_0>0$, there exists a small $\delta>0, \sigma_0>0$ such that for any $\sigma<\sigma_0$, \added{we have i) $|f(\bm{x})-f(\tilde{\bm{x}})|<\epsilon_0, \forall |\tilde{\bm{x}}-\bm{x}|<\delta$ and ii)}
    \begin{equation}
        \int_{B(0, \delta)} \mathcal{N}(\bm{y}; \sigma^2I_{d'\times d'})\d\bm{y}>1-\epsilon_0.
    \end{equation}
    Therefore, we conclude that
        $\lim_{\sigma\rightarrow0}f_{\sigma^2}(\bm{x})=f(\bm{x})$ uniformly on $\mathbb{R}^{d'}$. 

Letting $\{\bm{y}_j\}_{j=1}^{(n_0+1)^{d'}}$ and $\{w_j\}_{j=1}^{(n_0+1)^{d'}}$ be the multidimensional Hermite collocation points and weights on $\mathbb{R}^{d'}$ described in \cite{shen2011spectral}, we have
    \begin{equation}
    \int \mathcal{I}_{n_0}f(\bm{y}')\cdot \mathcal{I}_{n_0}\mathcal{N}(\bm{y}-\bm{y}';\sigma^2I_{d'\times d'})\d\bm{y}' = \sum_{j=1}^{(n_0+1)^{d'}}f(\bm{y}_i)\mathcal{N}(\bm{y}-\bm{y}_i;\sigma^2I_{d'\times d'})w_j.
    \label{inter_polate}
\end{equation}
Here, $\mathcal{I}_{n_0}$ is the interpolation operator such that
\begin{equation}
    f(\bm{y}_j) = \mathcal{I}_{n_0}f(\bm{y_j})\in P_{n_0},\,\, j=1,...,(n_0+1)^{d'},% \,\, \mathcal{I}_{n_0}(\bm{y})\in P_{n_0},
\end{equation}
where $P_{n_0}$ is the space spanned by the generalized Hermite functions $\hat{\mathcal{H}}_{\bm{n}}(\bm{y})$ such that ${|\bm{n}|_{\infty}\leq n_0}$ and 
\begin{equation}
    \hat{\mathcal{H}}_{\bm{n}}(\bm{y})\coloneqq \prod_{i=1}^{d'}\hat{\mathcal{H}}_{n_i}(y_i),\,\, \bm{n}=(n_1,...,n_{d'}),\,\, \bm{y}=(y_1,...,y_{d'})
\end{equation}
is the multidimensional generalized Hermite function defined in \cite{shen2011spectral} ($\hat{\mathcal{H}}_{n_i}$ is the 1D generalized Hermite function of order $n_i$).
We denote
\begin{equation}
\begin{aligned}
    f_{\sigma^2, n_0}(\bm{y})&\coloneqq \sum_{j=1}^{(n_0+1)^{d'}}f(\bm{y}_i)\cdot\mathcal{N}(\bm{y}-\bm{y}_i;\sigma^2I_{d'\times d'})w_j\\
    &\quad=\int  \mathcal{I}_{n_0}f(\bm{y}')\cdot \mathcal{I}_{n_0}\mathcal{N}(\bm{y}-\bm{y}';\sigma^2I_{d'\times d'})\text{d}\bm{y}',
    \end{aligned}
\end{equation}
where $f_{\sigma^2, n_0}$ is nonnegative because the collocation weights $w_j>0$.
Furthermore,
\begin{equation}
    \begin{aligned}
            &\Big|\int f(\bm{y}')\mathcal{N}(\bm{y}-\bm{y}';\sigma^2I_{d'\times d'}) - \mathcal{I}_{n_0}f(\bm{y}')\mathcal{I}_{n_0}\mathcal{N}(\bm{y}-\bm{y}';\sigma^2I_{d'\times d'})\text{d}\bm{y}'\Big|\\
    &\quad\leq \|f-\mathcal{I}_{n_0}f\|_{L^2}\cdot\|\mathcal{N}(\bm{y}-\bm{y}';\sigma^2I_{d'\times d'})\|_{L^2} \\
    &\hspace{1cm}+ \|\mathcal{I}_{n_0}f\|_{L^2}\cdot \|\mathcal{N}(\bm{y}-\bm{y}';\sigma^2I_{d'\times d'})-\mathcal{I}_{n_0}\mathcal{N}(\bm{y}-\bm{y}';\sigma^2I_{d'\times d'})\|_{L^2}. 
    \end{aligned}
    \label{error_inter}
\end{equation}
Using \cite[Theorem 7.18, Theorem 8.6]{shen2011spectral}, we have
\begin{equation}
\begin{aligned}
        \|f - \mathcal{I}_{n_0}f\|&\leq \| f - \mathcal{I}_{n_0}^1f \|_{L^2} + \|\mathcal{I}_{n_0}^2\circ ...\circ \mathcal{I}_{n_0}^{d'}f - f\|_{L^2} \\
        &\quad\quad+ \|(\mathcal{I}_{n_0}^1-\mathbb{I}) \circ( \mathcal{I}_{n_0}^2\circ ...\circ \mathcal{I}_{n_0}^{d'}f - f)\|_{L^2},\\
        &\leq Cn_0^{-\frac{1}{3}}\|\partial_{y_1}f\|_{L^2} + \|\mathcal{I}_{n_0}^2\circ ...\circ \mathcal{I}_{n_0}^{d'}f - f\|_{L^2} \\
        &\quad\quad+ \|  \mathcal{I}_{n_0}^2\circ ...\circ (\mathcal{I}_{n_0}^1-\mathbb{I})(\mathcal{I}_{n_0}^{d'}f - f)\|_{L^2},\\
    & \leq Cn_0^{-\frac{1}{3}}\|\partial_{y_1}f\|_{L^2} + \|\mathcal{I}_{n_0}^2\circ ...\circ \mathcal{I}_{n_0}^{d'}f - f\|_{L^2} \\
    &\quad\quad+ Cn_0^{-\frac{1}{3}}\| (\mathcal{I}_{n_0}^2\circ ...\circ \mathcal{I}_{n_0}^{d'}\partial_{y_1}{f} -\partial_{y_1}{f}\|_{L^2}\\
    &\leq ...\\
    &\leq C{n_0}^{-\frac{1}{3}}| {f}|_{mix}           
\end{aligned}
\label{mixf}
\end{equation}
Here, $C$ is a constant and $n_0$ is taken to be large enough such that $C{n_0}^{-\frac{1}{3}}<1$. $\mathbb{I}$ is the identity operator and $\mathcal{I}_{n_0}^i, i=1,..,d'$ is the projection operator in the $i^{\text{th}}$ direction, \textit{i.e.}, if we denote $X_{n_0}\coloneqq \{y_i\}_{i=0}^{n_0}$ to be the 1D Hermite collocation points, then
\begin{equation}
   \mathcal{I}_{n_0}^if(\bm{y}) = f(\bm{y}), \,\,\forall \bm{y}=(y_1,...,y_{d'})~\text{if}~ y_i \in X_{n_0}.
\end{equation}
Similarly, for any fixed $\bm{y}\in\mathbb{R}^{d'}$:
\begin{equation}
    \|\mathcal{N}(\bm{y}-\bm{y}';\sigma^2I_{d'\times d' }) - \mathcal{I}_{n_0}\mathcal{N}(\bm{y}-\bm{y}';\sigma^2I_{d'\times d' })\|_{L^2}                                    \leq \sum_{|\bm{n}|_0\leq n_0}Cn_0^{-\frac{1}{3}}\|\partial_{\bm{n}} \mathcal{N}\|_{L^2}. %\leq C_0(\sigma)N^{-\frac{1}{3}}\prod_{j=1}^{d'}(|x_j|+1).
    \label{mixn}
\end{equation}
Combining Eqs.~\eqref{inter_polate},~\eqref{error_inter},~\eqref{mixf}, and~\eqref{mixn}, for every $\bm{y}$ and every $\sigma>0$, as $n_0\rightarrow \infty$,
\begin{equation}
\begin{aligned}
        &\Big|\sum_{j} f(\bm{y}_j)\mathcal{N}(\bm{y}-\bm{y}_j;\sigma^2I_{d'\times d'})- \int_{\mathbb{R}^{d'}} f(\bm{y}')\mathcal{N}(\bm{y}-\bm{y}';\sigma^2I_{d'\times d'})\d\bm{y}'\Big|\\
    &\hspace{0.2cm}\leq Cn_0^{-\frac{1}{3}}\Big(|f|_{\text{mix}}\|\mathcal{N}(\bm{y}, \sigma^2I_{d'\times d'})\|_{L^2} + \big(\|f\|_{L^2} + Cn_0^{-\frac{1}{3}}|f|_{\text{mix}})|\big)\cdot |\mathcal{N}(\bm{y}, \sigma^2I_{d' \times d'})|_{\text{mix}}\Big),
\end{aligned}
\end{equation}
which implies that for any fixed $\sigma>0$, 
\begin{equation}
    f_{\sigma^2, n_0}(\bm{y}) = \sum_{j} f(\bm{y}_j)\mathcal{N}(\bm{y}-\bm{y}_j;\sigma^2I_{d'\times d'})\rightarrow \int_{\mathbb{R}^{d'}} f(\bm{y}')\mathcal{N}(\bm{y}-\bm{y}';\sigma^2I_{d'\times d'})\d\bm{y}' 
    \label{convergence_n}
\end{equation}
uniformly in $\bm{y}\in\mathbb{R}^{d'}$ as $n_0\rightarrow\infty$.

Additionally, we have:
\begin{equation}
    \begin{aligned}
        &\int_{\mathbb{R}^{d'}}\|\bm{y}\|^2 \sum_{j=1}^{(n_0+1)^{d'}}f(\bm{y}_j)w_j\mathcal{N}(\bm{y}-\bm{y}_j; \sigma^2I_{d'\times d'})\d\bm{y} = \sum_{j=1}^{(n_0+1)^{d'}} f(\bm{y}_j)(|\bm{y}_j|^2+d'\sigma^2)w_j\\ 
    &= \sum_j f(\bm{y}_j)(|\bm{y}_j|^2+d'\sigma^2)^2\frac{1}{|\bm{y}_j|^2+d'\sigma^2}w_j\\
    &=\int_{\mathbb{R}^{d'}} \mathcal{I}_{n_0}\big(f(\bm{y})(\|\bm{y}\|^2+d'\sigma^2)^2)\cdot \mathcal{I}_{n_0}\big(\frac{1}{(\|\bm{y}\|^2+d'\sigma^2)}\big)\d\bm{y}\\
    &\leq  \int_{\mathbb{R}^{d'}} f(\bm{y})(\|\bm{y}\|^2+d'\sigma^2)^2\cdot \frac{1}{(\|\bm{y}\|^2+d'\sigma^2)}\d\bm{y} \\
    &\quad\quad+ \big\|(\mathcal{I}_{n_0}-\mathbb{I})\big(f(\bm{y})(\|\bm{y}\|^2+d'\sigma^2)^2\big)\big\|_{L^2}\cdot\big\|\frac{1}{\|\bm{y}\|^2+d'\sigma^2}\big\|_{L^2} \\
    &\hspace{1cm}+ \|\mathcal{I}_{n_0}\big(f(\bm{y})(\|\bm{y}\|^2+d'\sigma^2)^2\big)\|_{L^2}\cdot \|(\mathbb{I}-\mathcal{I}_{n_0})\big(\frac{1}{\|\bm{y}\|^2+d'\sigma^2}\big)\|_{L^2}\\
    &\leq (\sum_{i, j=1}^{d'} 2 C n_0^{-\frac{1}{3}}|f(\bm{y})y_i^2y_j^2|_{\text{mix}}+2C n_0^{-\frac{1}{3}}d'\sigma^2\sum_{i=1}^{d'}|f(\bm{y})y_i^2|_{\text{mix}} \\
    &\hspace{1.5cm}+ C n_0^{-\frac{1}{3}}(d')^2\sigma^4 |f|_{\text{mix}})\cdot\sigma^{-1}C_1(d') \\
    &\hspace{1cm}+ \|\mathcal{I}_{n_0}\big(f(\bm{y})(\bm{y}+d'\sigma^2)^2\big)\|_{L^2} \cdot Cn_0^{-\frac{1}{3}}\big|\frac{1}{\|\bm{y}\|^2+d'\sigma^2}\big|_{\text{mix}} + \E[\|y\|^2] + d'\sigma^2,
    \end{aligned}
    \label{expectation_bound}
\end{equation}
where $C_1(d')$ is another constant depending on the dimensionality $d'$.
Since
\begin{equation}
\begin{aligned}
    &\|\mathcal{I}_{n_0}\big(f(\bm{y})(\|\bm{y}\|^2+d'\sigma^2)^2\big) - f(\bm{y})(\|\bm{y}\|^2+d'\sigma^2)^2\|_{L^2} \\
    &\hspace{1cm}\leq \sum_{i, j=1}^{d'} 2 C n_0^{-\frac{1}{3}}|f(\bm{y})y_i^2y_j^2|_{\text{mix}}+2C n_0^{-\frac{1}{3}}d'\sigma^2\sum_{i=1}^{d'}|f(\bm{y})y_i^2|_{\text{mix}} \\
    &\hspace{2cm}+ C n_0^{-\frac{1}{3}}(d')^2\sigma^4 |f|_{\text{mix}}\leq\infty,
    \end{aligned}
    \label{bounded}
\end{equation}
we conclude that
\begin{equation}
    \int_{\mathbb{R}^{d'}}\|\bm{y}\|^2 \sum_{j=1}^{(n_0+1)^{d'}}f(\bm{y}_j)w_j\mathcal{N}(\bm{y}-\bm{y}_j; \sigma^2)\d\bm{y}<\infty.
\end{equation}
Furthermore, from~\eqref{bounded}, $\|\mathcal{I}_{n_0}\big(f(\bm{y})(\|\bm{y}\|^2+d'\sigma^2)^2\big)\|_{L^2}\rightarrow\|f(\bm{y})(\|\bm{y}\|^2+d'\sigma^2)^2\|_{L^2}$ as $n_0\rightarrow\infty$.
We denote
\begin{equation}
    \tilde{f}_{\sigma^2, n_0}(\bm{y}) \coloneqq \frac{1}{\sum_{j=1}^{(n_0+1)^{d'}}f(\bm{y}_j)w_j}f_{\sigma^2, n_0}(\bm{y}).
    \label{f_tilde_def}
\end{equation}
Therefore, $\int_{\mathbb{R}^{d'}}\tilde{f}_{\sigma^2, n_0}(\bm{y})\d\bm{y} =1$. Notice that:
\begin{equation}
\begin{aligned}
        \big|\int_{\mathbb{R}^{d'}}f(\bm{y})\d\bm{y} - \sum_{j=1}^{n_0^{d'}}f(\bm{y}_j)w_j\big| &= \int f - \mathcal{I}_{n_0}\sqrt{f}\cdot\mathcal{I}_{n_0}\sqrt{f}\d\bm{y}\\
        &\hspace{-1cm}\leq \|\sqrt{f}\|_{L^2}Cn_0^{-\frac{1}{3}}|\sqrt{f}|_{mix} + \|\mathcal{I}_{n_0}\sqrt{f}\|_{L^2}Cn_0^{-\frac{1}{3}}|\sqrt{f}|_{mix},\\
        &\leq Cn_0^{-\frac{1}{3}}|\sqrt{f}|_{mix}(2\|\sqrt{f}\|_{L^2} + Cn_0^{-\frac{1}{3}}|\sqrt{f}|_{\text{mix}}).
\end{aligned}
\end{equation}
Therefore, we can write 
\begin{equation}
    \sum_{j=1}^{n_0^{d'}}f(\bm{y}_j)w_j \coloneqq 1 + n_0^{-\frac{1}{3}}c(\sqrt{f})\leq\infty,
\end{equation}
where $c(\sqrt{f})$ is a constant depending on $|\sqrt{f}|_{\text{mix}}$ ($\|\sqrt{f}\|_{L^2}=1$). We assume $n_0$ is large enough such that $n_0^{-\frac{1}{3}}c(\sqrt{f})<\frac{1}{2}$. Thus, $\E_{\bm{y}\sim f_{\sigma^2, n_0}}[\|{\bm{y}}\|^2]$ is uniformly bounded as $n_0\rightarrow\infty$. From the inequality~\eqref{expectation_bound} and the definition of $\tilde{f}_{\sigma^2, n_0}$ in Eq.~\eqref{f_tilde_def}, we can bound $\E_{\bm{y}\sim \tilde{f}_{\sigma^2, n_0}}[\|\bm{y}\|^2]$ by
\begin{equation}
    \begin{aligned}
        \E_{\bm{y}\sim \tilde{f}_{\sigma^2, n_0}}[\|\bm{y}\|^2]&\leq \frac{1}{1 - |c(f)|n_0^{-\frac{1}{3}}}\Big[ \E_{\bm{y}\sim f}[\|\bm{y}\|^2]+ d'\sigma^2 \\
        &\hspace{1cm}+ n_0^{-\frac{1}{3}}(C_2(\sigma; f)+\E_{\bm{y}\sim f}[\|\bm{y}\|^2]+1) \Big]\\
        &\hspace{-2cm}\leq \Big[ \E_{\bm{y}\sim f}[\|\bm{y}\|^2] + n_0^{-\frac{1}{3}}(C_2(\sigma; f)+\E_{\bm{y}\sim f}[\|y\|^2]+1) + d'\sigma^2\Big] \\
    &\hspace{-1.5cm}+ 2|c(\sqrt{f})|n_0^{-\frac{1}{3}}\Big[ \E_{\bm{y}\sim f}[\|\bm{y}\|^2] + n_0^{-\frac{1}{3}}(C_2(\sigma; f)+\E_{\bm{y}\sim f}[\|\bm{y}\|^2]+1) + d'\sigma^2\Big]\\
    &\hspace{-1cm}= \E[\|\bm{y}\|^2] +d'\sigma^2+n_0^{-\frac{1}{3}}C_3(\sigma; f),% + \E_{\bm{y}\sim f}[\|\bm{y}\|^2]+1 + |c(\sqrt{f})|) 
    \end{aligned}
    \label{n_0_bound}
\end{equation}
where $C_2(\sigma; f)$ is a constant that depends on $f$ and $\sigma$, and $C_3(\sigma; f)$ is another constant depending on $f, \sqrt{f}$ and $\sigma$.

Consider the special coupling measure
\begin{equation}
\begin{aligned}
        &\pi(f, \tilde{f}_{\sigma^2, n_0})(\bm{y}, \hat{\bm{y}}) \coloneqq \min\big(f(\bm{y}), \tilde{f}_{\sigma^2, n_0}(\bm{y})\big) \mathbb{\delta}({\bm{y} - \hat{\bm{y}}}) \\
        &\hspace{2cm}+ \frac{1}{A}\big(f(\bm{y}) - \min (f, \tilde{f}_{\sigma^2, n_0})(\bm{y})\big)\cdot \big(\tilde{f}_{\sigma^2, n_0}(\hat{\bm{y}}) - \min (f, \tilde{f}_{\sigma^2, n_0})(\hat{\bm{y}})\big),\\
        &\hspace{3cm}\,\,\text{if} \int_{\mathbb{R}^{d'}} \min (f(\bm{y}), \tilde{f}_{\sigma^2, n_0}(\bm{y}))\d\bm{y}<1,\\
    &\pi(f, \tilde{f}_{\sigma^2, n_0})(\bm{y}, \hat{\bm{y}}) \coloneqq f(\bm{y})\delta({\bm{y} - \hat{\bm{y}}}), \,\, \text{if} \int_{\mathbb{R}^{d'}} \min (f(\bm{y}), \tilde{f}_{\sigma^2, n_0}(\bm{y}))\d\bm{y}=1,
\end{aligned}
\end{equation}
where $A \coloneqq \int_{\mathbb{R}^{d'}} \min\big(f(\bm{y}), \tilde{f}_{\sigma^2, n_0}(\bm{y})\big)\d\bm{y}$ and $\delta$ is the Dirac delta measure. The marginal probability densities of $\pi(f, \tilde{f}_{\sigma^2, n_0})$ are $f(\bm{y})$ and $\tilde{f}_{\sigma^2, n_0}(\bm{y})$, respectively. Furthermore, we have:
\begin{equation}
\begin{aligned}
        \E_{(\bm{y}, \hat{\bm{y}})\sim \pi(f, \tilde{f}_{\sigma^2, n_0})}\big[\|\bm{y} - \hat{\bm{y}}\|^2\big]&\leq 2 \int_{\mathbb{R}^{d'}} \|\bm{y}\|^2 (f(\bm{y}) - \min(f(\bm{y}), \tilde{f}_{\sigma^2, n_0}(\bm{y})))\d\bm{y} \\
        &\quad+ 2 \int_{\mathbb{R}^{d'}} \|\hat{\bm{y}}\|^2 (\tilde{f}_{\sigma^2, n_0}(\bm{y}) - \min(f(\bm{y}), \tilde{f}_{\sigma^2, n_0}(\bm{y})))\d\bm{y}\\
        &\leq 4\int_{\mathbb{R}^{d'}}\|\bm{y}\|^2|f(\bm{y}) - \tilde{f}_{\sigma^2, n_0}(\bm{y})|\d\bm{y}.
\end{aligned}
\end{equation}

%$|f|_{\text{mix}} = \sum_{\bm{i}}\|\partial^if\|_{L^2}$

Fixing $\sigma>0$, from Eq.~\eqref{convergence_n}, $\sigma$, $f_{\sigma^2, n_0}\rightarrow f_{\sigma^2}$ uniformly as $n_0\rightarrow \infty$; furthermore, from the definition of $\tilde{f}_{\sigma^2, n_0}$ in Eq.~\eqref{f_tilde_def}, $\tilde{f}_{\sigma^2, n_0}\rightarrow f_{\sigma^2, n_0}$ uniformly as $n_0\rightarrow \infty$.
Finally, since $\lim_{\sigma\rightarrow0}f_{\sigma^2}(\bm{y})=f(\bm{y})$ uniformly as $\sigma\rightarrow 0$ for any $\bm{y}\in\mathbb{R}^{d'}$,
%we conclude that $\tilde{f}_{n_0, \sigma^2}\rightarrow f_{\sigma^2}$ by the definition of $\tilde{f}_{n_0, \sigma^2}$ in Eq.~\eqref{f_tilde_def}. Since $f_{\sigma^2}\rightarrow f$ 
%uniformly as $\sigma\rightarrow0$ in $\mathbb{R}^{d'}$, 
we conclude that
\begin{equation}
    \tilde{f}_{\sigma^2, n_0(\sigma)}\rightarrow f
\end{equation}
uniformly as $\sigma\rightarrow 0$ and $n_0(\sigma)\rightarrow\infty$ in $\mathbb{R}^{d'}$.

Since $\E_{\bm{y}\sim f}\|\bm{y}\|^2]<\infty$ and $\E_{\bm{y}\sim \tilde{f}_{\sigma^2, n_0(\sigma)}}[\|\bm{y}\|^2]\rightarrow \E_{\bm{y}\sim f}[\|\bm{y}\|^2] + d'\sigma^2$ as $n_0(\sigma)\rightarrow\infty$ from Eq.~\eqref{n_0_bound}, for any $\epsilon>0$, there exists a measurable set $A\subseteq \mathbb{R}^{d'}$ such that: 
\begin{enumerate}
    \item $|\int_{A}\|\bm{y}\|^2f(\bm{y})\d\bm{y} - \E_{\bm{y}\sim f}[\|\bm{y}\|^2]|<\epsilon$
    \item we can find a sufficiently small $\sigma$ and a sufficiently large $n_0(\sigma)$ such that 
    \begin{equation}
        \int_A \|\bm{y}\|^2\cdot|f(\bm{y}) - \tilde{f}_{\sigma^2, n_0(\sigma)}(\bm{y})|\d\bm{y}\leq \epsilon.
    \end{equation}
%    where $\tilde{\nu}(A)\coloneqq \int_A 1\d\bm{y}$.
\end{enumerate}

%sufficient small $\sigma>0$ and a sufficient large $n_0(\sigma)$ such that $d'\sigma^2\leq\epsilon$
%\begin{equation}
%    |\tilde{f}_{\sigma^2, n_0(\sigma)} - f|\leq\epsilon
%\end{equation}

%a bounded set $A\in\mathbb{R}^{d'}$ such that $|\E[\|\bm{y}\|^2] - \int_A\|\bm{y}\|^2f(\bm{y})\text{d}\bm{y}|\leq \frac{\epsilon}{\nu(A)}$ ($\nu(A)$ is the measure of $A$) and a sufficient small $\sigma>0$ and a sufficient large $n_0$ such that $|f-\tilde{f}_{\sigma, n_0}|\leq\frac{\epsilon}{\nu(A)}$, $d'\sigma^2\leq\epsilon$,
%and 
%\begin{equation}
%    n_0^{-\frac{1}{3}}(|c(f)|+C_3(\sigma)+\E_{\bm{y}\sim f}[\|y\|^2]+1)\leq\epsilon.
%\end{equation}

Using the inequality~\eqref{n_0_bound}, we have  
\begin{equation}
\begin{aligned}
        W_2^2(f, \tilde{f}_{\sigma^2, n_0(\sigma)})&\leq\E_{(\bm{y}, \hat{\bm{y}})\sim \pi(f, \tilde{f}_{\sigma^2, n_0(\sigma)})}[\|\bm{y} - \hat{\bm{y}}\|^2]
    \leq4\int_A \|\bm{y}\|^2|f(\bm{y}) - \tilde{f}_{\sigma^2, n_0(\sigma)}(\bm{y})|\d\bm{y} \\
    &\hspace{-2.25cm}+ 4\int_{\mathbb{R}^{d'}-A}\|\bm{y}\|^2f(\bm{y})\d\bm{y} + 4\int_{\mathbb{R}^{d'}-A}\|\bm{y}\|^2\tilde{f}_{\sigma^2, n_0(\sigma)}(\bm{y})\d\bm{y}\\
    &\hspace{-1.75cm}\leq 4\epsilon + 4\epsilon + 4 \Big(\E_{\bm{y}\sim f}[\|\bm{y}\|^2] + d'\sigma^2 + n_0(\sigma)^{-\frac{1}{3}}C_3(\sigma; f) -(\int_A\|\bm{y}\|^2f(\bm{y})\d\bm{y}-\epsilon)\Big)\\
    &\hspace{-1.25cm}\leq 16\epsilon + 4d'\sigma^2 + 4n_0(\sigma)^{-\frac{1}{3}}C_3(\sigma; f)\leq24\epsilon,
\end{aligned}
\end{equation}
if we take an $n_0(\sigma)$ large enough such that $
    n_0(\sigma)^{-\frac{1}{3}}C_3(\sigma; f)\leq\epsilon$. From Eq.~\eqref{f_tilde_def}, we have:
    \begin{equation}
        \tilde{f}_{\sigma^2, n_0(\sigma)}(\bm{y}) = \sum_{j=1}^{(n_0(\sigma)+1)^{d'}}\frac{f(\bm{y}_i)w_j}{\sum_{j=1}^{(n_0(\sigma)+1)^{d'}}f(\bm{y}_j)w_j}\cdot\mathcal{N}(\bm{y}-\bm{y}_i;\sigma^2I_{d'\times d'}),
    \end{equation}
    which is indeed the probability density function of a Gaussian mixture model, and this completes the proof of Theorem~\ref{thm3_2}.

\section{The approximation ability of the SNN model in Fig.~\ref{fig:nn_model}}
\label{appendix_universal}
In this subsection, we analyze the capability of the SNN model to approximate a family of probability density functions for a random variable $\bm{y}_{\bm{x}}\sim f_{\bm{x}}, \bm{y}\in\mathbb{R}^{d'}$ characterized by $\bm{x}\in D\subseteq \mathbb{R}^d, \bm{x}\sim\gamma(\cdot)$.
We assume the following conditions hold:
\begin{enumerate}
    \item $f_{\bm{x}}$ is uniformly continuous such that for any $\epsilon$, there exists $\Delta x$ satisfying $
    W_2^2(f_{\bm{x}}, f_{\tilde{\bm{x}}})\leq\epsilon, \,\,\forall \bm{x}, \tilde{\bm{x}}\in D, \|\bm{x}-\tilde{\bm{x}}\|\leq\Delta x$.
\item For any $\bm{x}$, $f_{\bm{x}}$ satisfies the conditions in Theorem~\ref{thm3_2}.
\end{enumerate}
Let $\epsilon>0$ be a small positive number. We can find a $\Delta x>0$ such that  $W_2^2(f_{\bm{x}}, f_{\tilde{\bm{x}}})<\frac{\epsilon}{4}$ if $\|\bm{x}-\tilde{\bm{x}}\|\leq \sqrt{d}\Delta x, \forall \bm{x}\, \tilde{\bm{x}}\in D$.
We can find an equidistance grid point set $X\coloneqq\{\bm{x}_i\}_{i=1}^K$ such that the distance between two adjacent points is $\Delta \bm{x}$, $D\subseteq \cup_{i=1}^K\otimes_{j=1}^d[x_i^j, x_i^j+\Delta \bm{x})$, and $\otimes_{j=1}^d[x_{i_1}^j, x_{i_1}^j+\Delta \bm{x})\cap \otimes_{j=1}^d[x_{i_2}^j, x_{i_2}^j+\Delta \bm{x})=\emptyset$ if $i_1\neq i_2$. 
Thus,
for any $\bm{x}\in D$, there exists $\bm{x}_i\in X$ satisfying $W_2^2(f_{\bm{x}}, f_{\bm{x}_i})<\frac{\epsilon}{4}$. Furthermore, for any $\bm{x}_i=(x_i^1,...,x_i^d)\in X$, from Theorem~\ref{thm3_2}, there exists a probability density function of a Gaussian mixture model:
\begin{equation}
    \tilde{f}_{n_{0, i}, \sigma_i^2}(\bm{y}_{\bm{x}_i}) = \sum_{r=1}^{n_{0, i}}p_{i, r}\mathcal{N}(\bm{y}-\bm{b}_{i, r}, A_{i, r}^TA_{i, r}), \,\,\sum_{r=1}^{n_{0, i}}p_{i, r}=1,
\end{equation} 
such that $W_2^2(f_{\bm{x}_i}, \tilde{f}_{n_{0, i}, \sigma_i^2})< \frac{\epsilon}{4}, i=1,...,K$. We denote $n_0\coloneqq \max_{1\leq i\leq K}n_{0, i}$ and 
\begin{equation}
\begin{aligned}
        &f_{n_{0}, \sigma_i^2}(\bm{y}_{\bm{x}_i}) = \sum_{r=1}^{n_{0, i}-1}p_{i, r}\mathcal{N}(\bm{y} - \bm{b}_{i, r}, A_{i, r}^TA_{i, r}) \\
        &\hspace{2cm}+ \sum_{r=n_{0, i}}^{n_0}\tfrac{p_{i, n_{0, i}}}{n_0-n_{0, i}+1}\mathcal{N}(\bm{y} - \bm{b}_{i, n_{0, i}}, A_{i, n_{0, i}}^TA_{i, n_{0, i}}).
\end{aligned}
\label{f_def1}
\end{equation}
$W_2^2(f_{\bm{x}_i}, f_{n_{0, i}, \sigma_i^2})< \frac{\epsilon}{4}, i=1,...,K$ for $f_{n_{0, i}, \sigma_i^2}$ defined in Eq.~\eqref{f_def1} because $W_2^2(f_{\bm{x}_i}, f_{n_{0, i}, \sigma_i^2}) = W_2^2(\tilde{f}_{\bm{x}_i}, f_{n_{0, i}, \sigma_i^2})$.

We define a new continuous random variable $\tilde{\bm{y}}_{\bm{x}}$ with a probability distribution $\tilde{f}_{\bm{x}}, \bm{x}\in D $  such that:
\begin{equation}
    \tilde{f}_{\bm{x}} = f_{n_{0}, \sigma^2_i}, \,\, \text{if}\,\,\bm{x}\in D\cap \otimes_{j=1}^d[x_i^j, x_i^j+\Delta \bm{x}).
\end{equation}

Therefore, we have
\begin{equation}
    \int_D W_2^2(\tilde{f}_{\bm{x}}, f_{\bm{x}})\gamma(\d\bm{x})< \sum_{i=1}^K\int_{D\cap \otimes[x_i^j, x_i^j+\Delta x]} 2 \big(W_2^2(\tilde{f}_{\bm{x}}, f_{\bm{x}_i}) +  W_2^2(f_{\bm{x}}, f_{\bm{x}_i})\big)\gamma(\d\bm{x})=\epsilon.
\end{equation}
We denote 
\begin{equation}
    Y(\epsilon, X)\coloneqq \sup_{i=1,...,K , s=1,...,n_{0}} (\|\bm{b}_{i, s}\|^2 + \|A^T_{i, s}A_{i, s}\|_F^2),
    \label{Y_epsilon}
\end{equation}
where $\bm{b}_{i, s}$ and $A^T_{i, s}A_{i, s}$ are the mean vectors and covariance matrices in Eq.~\eqref{f_def1}.
Similar to the proof of Corollary~\ref{col3_1}  in~\ref{AppendixF}, for any $\epsilon_1>0$, there exists 
\begin{equation}
    D(\epsilon_1) \coloneqq D\cap \big(\cup_{i=1}^K\otimes_{j=1}^d [x_{i}^{j}, x_{i}^{j}+\Delta x-\epsilon_1]\big)
\end{equation}
such that $\gamma(D - D(\epsilon_1))\coloneqq \int_{D - D(\epsilon_1)}1\gamma(\d\bm{x}) \leq\epsilon$. 
Additionally, similar to the estimates Eqs.~\eqref{6epsilon} and~\eqref{col3_result}, we can find an SNN whose output obeys a distribution $\hat{f}_{\bm{x}}$ satisfying:
\begin{equation}
\begin{aligned}
    &W_2^2(\hat{f}_{\bm{x}}, \tilde{f}_{\bm{x}_i})\leq 6\epsilon Y,\,\, \bm{x}\in D^i(\epsilon_1)\coloneqq \{\bm{x}\in D(\epsilon_1)|\|\bm{x}_i-\bm{x}\|\leq \|\bm{x}_j-\bm{x}\|\}\\
    &W_2^2(\hat{f}_{\bm{x}}, f_{\bm{x}_i}) \leq 2\max_i\E_{\bm{y}_{\bm{x}}\sim f_{n_{0}, \sigma_i^2}}[\|\bm{y}_{\bm{x}}\|^2] + 2 \E_{\bm{y}_{\bm{x}}\sim \hat{f}_{\bm{x}}}[\|\bm{y}_{\bm{x}}\|^2]\\
    &\hspace{2.2cm}\leq 10\max_i\E_{\bm{y}_{\bm{x}}\sim f_{n_{0}, \sigma_i^2}}[\|\bm{y}_{\bm{x}}\|^2]=10Y(\epsilon, X),\,\, x\notin D(\epsilon_1).
    \end{aligned}
\end{equation}
%for any $\bm{x}\in D^i(\epsilon_1)\coloneqq \{\bm{x}\in D(\epsilon_1)|\|\bm{x}_i-\bm{x}\|\leq \|\bm{x}_j-\bm{x}\|\}, i=1,...,K$ and
%\begin{equation}
    
%\end{equation}

Therefore, we have
\begin{equation}
\begin{aligned}
    \int_D W_2^2(\hat{f}_{\bm{x}}, \tilde{f}_{\bm{x}})\gamma(\d\bm{x})&\leq \int_{D(\epsilon_1)}W_2^2(\hat{f}_{\bm{x}}, \tilde{f}_{\bm{x}})\gamma(\d\bm{x}) + \int_{D-D(\epsilon_1)}W_2^2(\hat{f}_{\bm{x}}, \tilde{f}_{\bm{x}}) \gamma(\d\bm{x})\\
    &=6\epsilon Y(\epsilon, X) + 10\epsilon Y(\epsilon, X)=16\epsilon Y(\epsilon, X),
    \end{aligned}
\end{equation}
and 
\begin{equation}
    \int_D W_2^2(\hat{f}_{\bm{x}}, f_{\bm{x}})\gamma(\d\bm{x})\leq 2\int_D \big(W_2^2(\hat{f}_{\bm{x}}, \tilde{f}_{\bm{x}})+ W_2^2(\tilde{f}_{\bm{x}}, f_{\bm{x}})\big)\gamma(\d\bm{x})
    \leq 32Y(\epsilon, X)\epsilon +2\epsilon.
    \label{Y_epsilon1}
\end{equation}
In Eq.~\eqref{Y_epsilon1}, $Y(\epsilon, X)$ also depends on $\epsilon$ and the choice of the grid point set $X=\{\bm{x}_i\}_{i=1}^K$. 
Specifically, as $\epsilon\rightarrow0$, if we can design a grid point set $\{\bm{x}\}_{i=1}^{K(\epsilon)}$ such that the quantity
$\epsilon \cdot Y(\epsilon, X)\rightarrow 0$ \added{(\textit{e.g.}, if $Y(\epsilon, X)$ is uniformly bounded and its upper bound is independent of the choice of the grid point set)}, then there exists an SNN model in Fig.~\ref{fig:nn_model} such that the distribution of its output can approximate $f_{\bm{x}}$, the probability distribution of $\bm{y}$ given the input $\bm{x}$, in the squared $W_2$ sense.

\section{Default training settings}
\label{appendix_training}
We list the hyperparameters and settings for training the SNN model in Fig.~\ref{fig:nn_model} of each example in Table~\ref{tab:setting} below.
\begin{table}[h!]
\scriptsize
\centering
\caption{Training settings for each example.} 
\begin{tabular}{lllll}
\toprule {Loss} & Example \ref{example1} & Example \ref{example2} &
Example \ref{example3} &
Example \ref{example4} \\
\midrule
Gradient descent method & AdamW & AdamW & AdamW & AdamW \\
Learning rate & 0.001 & 0.0005 & 0.0005 & 0.001\\
Weight decay & 0.01 & 0 & 0.02 & 0.02\\
No. of epochs & 500 & 400 & 2000 & 400\\
No. of training trajectories & 200 & 400 & 200 & 300 \\
Hidden layers & 1 & 3 & 3 & 2\\
Activation function & ReLU & ReLU & ReLU & ReLU\\
Neurons in each layer  & 40 & 50 &
10 & 50 \\
time step $\Delta t$ & 0.1 & 0.1 & 0.05 & 0.1 \\
Number of timesteps $N_T$ &81 & 21 & 41 & 21 \\
Initialization for biases &  2 &  0.01
&  $\mathcal{N}(0, 0.03^2)$ &  0.01 \\ 
Initialization for weights & $\mathcal{N}(0, 0.01^2)$ & $\mathcal{N}(0, 0.01^2)$ & $\mathcal{N}(0, 0.03^2)$ &  $\mathcal{N}(0, 0.01^2)$
  \\
%
%repeat times & 10 & 5 & 5\\
%
\bottomrule
\end{tabular}
\label{tab:setting}
\end{table}

\section{Definitions of different loss metrics}
\label{appendix_loss}
Below, we provide descriptions and definitions for different loss functions used for comparison in this study. In the following, $N$ denotes the number of samples, and $t_i=i\frac{T}{n_T}, i=0,...,n_T$ denotes a uniform mesh in $[0, T]$.
% enumerate with tight spacing
\begin{compactenum}
\item A scaled local time-decoupled squared $W_2$ distance:
\begin{equation}
         \frac{1}{n_T}\sum_{i=1}^{n_T} W_{2, \delta}^{2, \text{e}}(\bm{X}(t_i), \hat{\bm{X}}(t_i))=\frac{1}{n_TN}\sum_{i=1}^{n_T}\sum_{j=1}^NW_2^2\big(\nu_{\bm{X}_{0, j}, \delta}^{\text{e}}(t_i),\hat{\nu}_{\bm{X}_{0, j}, \delta}^{\text{e}}(t_i)\big),
    \label{loss_define_empirical}
\end{equation}
where $W_{2, \delta}^{2, \text{e}}(\bm{X}(t_i), \hat{\bm{X}}(t_i))$ is the local squared $W_2$ distance defined in Eq.~\eqref{local_squared}. $\nu^{\text{e}}_{\bm{X}_{0, j}, \delta}(t)$ and $\hat{\nu}^{\text{e}}_{\bm{X}_{0, j}, \delta}(t)$ are the empirical conditional probability distributions of $\bm{X}$ and $\hat{\bm{X}}$ at time $t$ conditioned on $|\bm{X}(0)-\bm{X}_j(0)|\leq\delta$ and $|\hat{\bm{X}}(0)-\bm{X}_j(0)|\leq\delta$, respectively ($\bm{X}_j(0)$ denotes the initial state of the $j^{\text{th}}$ trajectory in the training set). $W_2^2(\nu_{\bm{X}_{0, j}, \delta}^{\text{e}},\hat{\nu}_{\bm{X}_{0, j}, \delta}^{\text{e}})$ is estimated by 
\begin{equation}
W_2^2(\nu_{\bm{X}_{0, j}, \delta}^{\text{e}},\hat{\nu}_{\bm{X}_{0, j}, \delta}^{\text{e}})\approx\texttt{ot.emd2}\Big(\frac{1}{N_j}\bm{I}_{N_j},
\frac{1}{N_j}\bm{I}_{N_j}, \bm{C}_j(t_i)\Big),
\label{time_coupling0}
\end{equation}
where $\texttt{ot.emd2}$ is the function for solving the earth movers
distance problem in the $\texttt{PoT}$ package of Python in \cite{flamary2021pot}. $N_j$ is the
number of elements in the set $X_j\coloneqq \Big\{\{\bm{X}_{i}\}_{t=0}^{T}\big|\|\bm{X}_i(0) - \bm{X}_j(0)\|\leq\delta, i=1,...,N\Big\}$, $\bm{I}_{N_j}$ is
an $N_j$-dimensional vector whose elements are all 1, and
$\bm{C}_j(t_i)\in\mathbb{R}^{N_j\times N_j}$ is a matrix with entries
$(\bm{C}_j(t_i))_{sr} = \|\bm{X}_s(t_i)-\hat{\bm{X}}_r(t_i)\|^2$. $\bm{X}_s(t_i)$ and $\hat{\bm{X}}_r(t_i)$ are the states of the $s^{\text{th}}$ ground truth trajectory at time $t_i$ in the set $X_j$ and the states of the $r^{\text{th}}$ predicted trajectory at time $t_i$ in the set $\hat{X}_j\coloneqq \Big\{\{\hat{\bm{X}}_{i}\}_{t=0}^{T}\big|\|\hat{\bm{X}}_i(0) - \bm{X}_j(0)\|\leq\delta, i=1,...,N\Big\}$, respectively.
\item A scaled time-decoupled squared $W_2$ distance
$$\tilde{W}_2^2(\bm{X}, \hat{\bm{X}})\approx \frac{1}{n_T}\sum_{i=1}^{n_T} W_2^2(\nu^{\text{e}}(t_i), \hat{\nu}^{\text{e}}(t_i)),$$ where
$\nu^{\text{e}}(t_i)$ and $\hat{\nu}^{\text{e}}(t_i)$ are the empirical
distributions of $\bm{X}(t_i)$ and
$\hat{\bm{X}}(t_i)$, respectively. It is
estimated by
\begin{equation}
W_2^2(\nu_N^{\text{e}}(t_i),
\hat{\nu}_N^{\text{e}}(t_i))\approx\texttt{ot.emd2}\Big(\frac{1}{N}\bm{I}_{N},
\frac{1}{N}\bm{I}_{N}, \bm{C}(t_i)\Big),
\label{time_coupling}
\end{equation}
where $\texttt{ot.emd2}$ is the function for solving the earth movers
distance problem in the $\texttt{ot}$ package of Python in \cite{flamary2021pot}. $N$ is the
number of ground truth and predicted samples, $\bm{I}_{N}$ is
an $N$-dimensional vector whose elements are all 1, and
$\bm{C}(t_i)\in\mathbb{R}^{N\times N}$ is a matrix with entries
$(\bm{C}(t_i))_{sj} = \|\bm{X}_s(t_i)-\hat{\bm{X}}_j(t_i)\|^2$. $\bm{X}_s(t_i)$ and $\hat{\bm{X}}_j(t_i)$ are the states of the $s^{\text{th}}$ ground truth trajectory at time $t_i$ and the states of the $j^{\text{th}}$ predicted trajectory at time $t_i$, respectively.
\item MMD (maximum mean discrepancy) 
  \cite{li2015generative}:
\begin{equation}
\begin{aligned}
&    \text{MMD}(\{\bm{X}\}, \{\hat{\bm{X}}\}) = \frac{1}{n_T}\sum_{i=1}^{n_T}\E[K(\{\bm{X}(t_i)\}, \{\bm{X}(t_i)\})]\\
&\quad\quad- 2\E[K(\{\bm{X}(t_i)\}, \{\hat{\bm{X}}(t_i)\})] + \E[K(\{\hat{\bm{X}}(t_i)\}, \{\hat{\bm{X}}(t_i)\})],
\end{aligned}
\label{MMD_def}
\end{equation}
where $K$ is the standard radial basis function (or Gaussian kernel)
with the multiplier equal to $2$ and the number of kernels equal to $5$. $\{\bm{X}(t_i)\}$ and $\{\hat{\bm{X}}(t_i)\}$
denote the sets of ground truth observations and predicted trajectories at time $t_i$, respectively.
\item Mean squared error (MSE):
\begin{equation}
    \operatorname{MSE}(\bm{X}, \hat{\bm{X}}) = \frac{1}{n_TN}\sum_{i=1}^{n_T}\sum_{s=1}^N
\|\bm{X}_s(t_i)-\hat{\bm{X}}_s(t_i)\|^2.
\end{equation}
$\bm{X}_s(t_i)$ and $\hat{\bm{X}}_s(t_i)$ are the states of the $s^{\text{th}}$ ground truth trajectory at time $t_i$ and the states of the $s^{\text{th}}$ predicted trajectory at time $t_i$, respectively.
\item Mean$^2$+Var loss function:
\begin{equation}
\begin{aligned}
    (\operatorname{Mean}^2+\operatorname{Var})(\bm{X}, \hat { \bm{X}}) =&
   \frac{1}{n_T}\sum_{i=1}^{n_T} \big(\frac{1}{N}\sum_{s=1}^N
\|\bm{X}_s(t_i)-\hat{\bm{X}}_s(t_i)\|^2 \\&\quad+ |\text{Var}(\bm{X}(t_i)) - \text{Var}(\hat{\bm{X}}(t_i))|\big)
\end{aligned}
    \end{equation}
where
\begin{equation}
    \text{Var}(\bm{X}(t_i))  = \sum_{s=1}^N \big\|\bm{X}_s(t_i) - \sum_{i=1}^N \frac{1}{N}\bm{X}_i(t_i)\big\|^2.
\end{equation}
$\bm{X}_s(t_i)$ and $\hat{\bm{X}}_s(t_i)$ are the states of the $s^{\text{th}}$ ground truth trajectory at time $t_i$ and the states of the $s^{\text{th}}$ predicted trajectory at time $t_i$, respectively.
\end{compactenum}

%\section{The effect of neural network structure and initialization}
%\label{example3_appendix}
%Here, we shall investigate how the depth (the number of hidden layers) and width (the number of neurons in each layer) of the neural network with weight uncertainty model affect its ability to approximate the distribution of uncertain model parameters. Furthermore, we shall consider whether the Resnet technique \cite{he2016deep} for forward propagation could improve the neural network's ability to reconstruct uncertain model parameters, compared to the normal feedforward structure. We shall reconstruct the nine parameters in the 8D ocular pharmacokinetics model in Example~\ref{example3}.

%number of neurons/layers/initialization
\section{Using a neural network to approximate the diffusion function in Example~\ref{example4}}
\label{example4_appendix}
Here, we apply a feedforward neural network with deterministic weights and biases to approximate the diffusion function in a jump-diffusion process when the form of the diffusion function is unknown. We consider the following jump-diffusion process:
\begin{equation}
\begin{aligned}
  &\d X_t = 0.05 \d t + \sigma_0\sqrt{|X_t|} \d B_t
  + \int_{U} \xi X_t\d \tilde{N}(\nu(\d\xi)\d{t}),\,\,\,  t\in[0, 2],\\
  &\quad\quad \xi\sim\mathcal{N}(\beta_0, \sigma_1^2),\,\, X_0\sim \mathcal{N}(2, \sigma_2^2).
  \end{aligned}
 \label{example4_model_appendix}
\end{equation}
In Eq.~\eqref{example4_model_appendix}, $\sigma_0$ is a positive constant, and $B_t$ and $\tilde{N}_t$ are an independent Wiener process and a compensated Poisson process, respectively. We use the following approximate jump-diffusion process to approximate Eq.~\eqref{example4_model_appendix}:
\begin{equation}
\begin{aligned}
  \d \hat{X}_t = 0.05 \d t + \hat{\sigma}(\hat{X}_t) \d \hat{B}_t
  + \int_{U} \hat{\xi} \hat{X}_t\d \hat{N}(\nu(\d\xi)\d{t}),\,\,\,  t\in[0, 2], \,\, \hat{X}_0=X_0.%\sim \mathcal{N}(2, \sigma_2^2).
  \end{aligned}
 \label{example4_model_appendix_approx}
\end{equation}
In Eq.~\eqref{example4_model_appendix_approx}, $\hat{\sigma}(\hat{X}_t)$ is a deterministic feedforward parameterized neural network that takes the state $\hat{X}_t$ as the input, and we aim to approximate the ground truth diffusion function $|\sigma(\bm{X}_t)|\coloneqq|\sigma_0|\sqrt{|X_t|}$ in Eq.~\eqref{example4_model_appendix} using the approximate $|\hat{\sigma}(X_t)|$ in Eq.~\eqref{example4_model_appendix_approx}. $\hat{\xi}$ is the output of an SNN when the input is 1, which aims at approximating the distribution of $\xi$, and $\hat{B}_t$ is another Wiener process independent of the Wiener process $B_t$ in Eq.~\eqref{example4_model_appendix} while $\hat{N}$ is another compensated Poisson process independent of $B_t, \hat{B}_t$, and $\tilde{N}_t$. We train both the deterministic neural network $\hat{\sigma}(\hat{X}_t)$ and the SNN that approximates the distribution of $\xi$ simultaneously by minimizing our local time-decoupled squared $W_2$ loss function Eq.~\eqref{local_define}. The parameterized neural network $\hat{\sigma}(\hat{X}_t)$ consists of two hidden layers with fifty neurons in each layer. All other hyperparameters are the same as those used in Example~\ref{example4}, described in Table~\ref{tab:setting}. We vary the variance of $\xi$ as well as the value of $\sigma_0$ when reconstructing the diffusion function and the distribution of $\xi$ in the jump function. 

%a deterministic feedforward parameterized neural network, denoted as $\hat{\sigma}(X_t, t)$, to approximate
%we shall keep the diffusion density $\sigma_0$ the same for all trajectories so that there is no uncertainty in the diffusion function. Then, we
%a deterministic neural network $\hat{\sigma}(X_t, t)$ that takes the state variable $X_t$ as the input is used to approximate the diffusion function $ \sigma(X_t)$ and the SNN model in Fig.~\ref{fig:nn_model} to approximate the distribution of the uncertain jump magnitude function $\xi+\beta_0$. 

\begin{figure}[h]
\centering
\includegraphics[width=\textwidth]{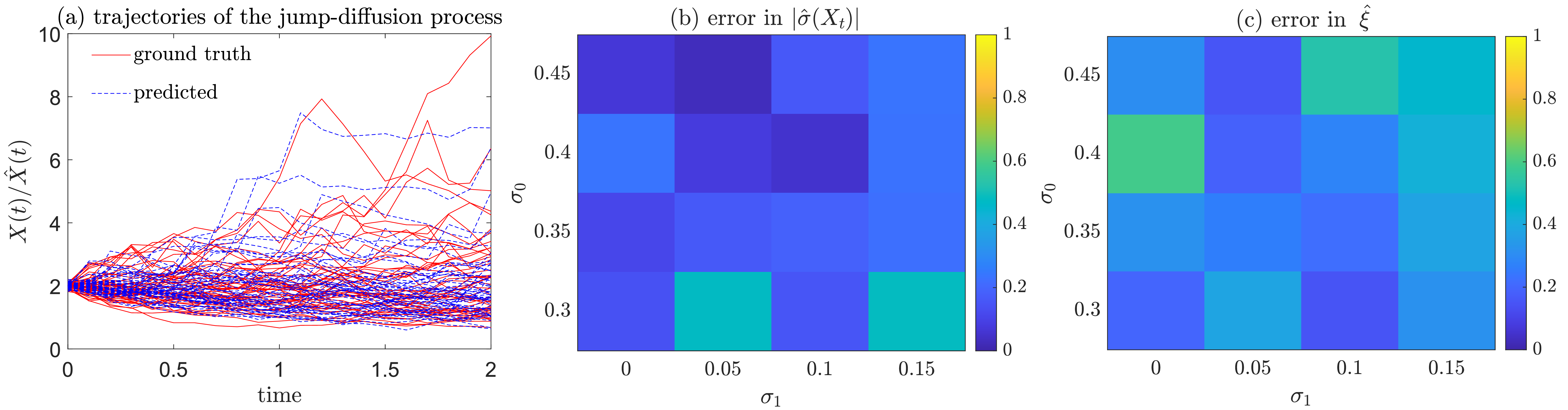}
\caption{\footnotesize(a) Ground truth trajectories versus the reconstructed trajectories when $\sigma_0=0.3,\beta_0=0.3, \sigma_1=0.15, \sigma_2=0.1$, and $\delta=0.1$ in the loss function Eq.~\eqref{local_define}. For clarity, 50 ground truth trajectories and 50 reconstructed trajectories are plotted. (b) The error in the reconstructed diffusion function $\hat{\sigma}$. Here, the error denotes the relative squared $L^2$ error: $\tfrac{\int_0^T (|\hat{\sigma}(X_t)|-\sigma_0\sqrt{|X_t|})^2\d t}{\int_0^T(\sigma_0 \sqrt{|X_t|})^2\d t}$. (c) The error in the reconstructed distribution of $\hat{\xi}$ (Eq.~\eqref{relative_error}). In (b) and (c), the values of other parameters are: $\beta_0=0.3, \sigma_2=0.1$, and the size of neighborhood $\delta=0.1$ in Eq.~\eqref{loss_define_empirical}. Errors are the average error over three repeated experiments.}
\label{fig:example4_appendix}
\end{figure}

\added{The trajectories generated by the reconstructed jump-diffusion process align well with ground truth trajectories (shown in Fig.~\ref{fig:example4_appendix} (a)). Furthermore, the error in the reconstructed diffusion function, as well as the error in the reconstructed distribution of $\hat{\xi}$, is small. Thus, when the form of the diffusion function is unknown while the diffusion function itself is deterministic, we could consider using a deterministic feedforward neural network to reconstruct it while using an SNN (Fig.~\ref{fig:nn_model}) to simultaneously reconstruct the distribution of $\xi$ in the jump magnitude function in Eq.~\eqref{example4_model_appendix} by minimizing the loss function Eq.~\eqref{local_define}.}

%compared to reconstructing the unknown diffusion function using a parameterized neural network, inputting partial information such as the form of the jump magnitude function might indeed help in a more accurate reconstruction of the unknown parameters.  

\bibliographystyle{plain}
\bibliography{bibliography}
\end{document}